\newcommand*{\ARXIV}{}%
\newtheorem{theorem}{Theorem}
\numberwithin{theorem}{section}
\newtheorem{lemma}[theorem]{Lemma}
\newtheorem{corollary}[theorem]{Corollary}
\let\save@mathaccent\mathaccent
\newcommand*\if@single[3]{%
  \setbox0\hbox{${\mathaccent"0362{#1}}^H$}%
  \setbox2\hbox{${\mathaccent"0362{\kern0pt#1}}^H$}%
  \ifdim\ht0=\ht2 #3\else #2\fi
  }
\newcommand*\rel@kern[1]{\kern#1\dimexpr\macc@kerna}
\newcommand*\widebar[1]{\@ifnextchar^{{\wide@bar{#1}{0}}}{\wide@bar{#1}{1}}}
\newcommand*\wide@bar[2]{\if@single{#1}{\wide@bar@{#1}{#2}{1}}{\wide@bar@{#1}{#2}{2}}}
\newcommand*\wide@bar@[3]{%
  \begingroup
  \def\mathaccent##1##2{%
    \let\mathaccent\save@mathaccent
    \if#32 \let\macc@nucleus\first@char \fi
    \setbox\z@\hbox{$\macc@style{\macc@nucleus}_{}$}%
    \setbox\tw@\hbox{$\macc@style{\macc@nucleus}{}_{}$}%
    \dimen@\wd\tw@
    \advance\dimen@-\wd\z@
    \divide\dimen@ 3
    \@tempdima\wd\tw@
    \advance\@tempdima-\scriptspace
    \divide\@tempdima 10
    \advance\dimen@-\@tempdima
    \ifdim\dimen@>\z@ \dimen@0pt\fi
    \rel@kern{0.6}\kern-\dimen@
    \if#31
      \overline{\rel@kern{-0.6}\kern\dimen@\macc@nucleus\rel@kern{0.4}\kern\dimen@}%
      \advance\dimen@0.4\dimexpr\macc@kerna
      \let\final@kern#2%
      \ifdim\dimen@<\z@ \let\final@kern1\fi
      \if\final@kern1 \kern-\dimen@\fi
    \else
      \overline{\rel@kern{-0.6}\kern\dimen@#1}%
    \fi
  }%
  \macc@depth\@ne
  \let\math@bgroup\@empty \let\math@egroup\macc@set@skewchar
  \mathsurround\z@ \frozen@everymath{\mathgroup\macc@group\relax}%
  \macc@set@skewchar\relax
  \let\mathaccentV\macc@nested@a
  \if#31
    \macc@nested@a\relax111{#1}%
  \else
    \def\gobble@till@marker##1\endmarker{}%
    \futurelet\first@char\gobble@till@marker#1\endmarker
    \ifcat\noexpand\first@char A\else
      \def\first@char{}%
    \fi
    \macc@nested@a\relax111{\first@char}%
  \fi
  \endgroup
}
\DeclareMathAlphabet{\mathpzc}{OT1}{pzc}{m}{it}
\newcommand{\NN}{\mathbb{N}}
\newcommand{\RR}{\mathbb{R}}
\renewcommand{\S}{\mathcal{S}}
\renewcommand{\P}{\mathcal{P}}
\newcommand{\A}{\mathcal{A}}
\newcommand{\Ind}[1]{\mathds{1}_{\left[ #1 \right]}}
\newcommand{\Exp}[1]{\mathbb E \left[ #1 \right]} 
\renewcommand{\Pr}{\mathbb{P}}
\newcommand{\pfail}{\delta}
\newcommand{\relevant}{\textsc{Relevant}}
\newcommand{\Qhat}[2]{\ensuremath{\overline{\mathbf{Q}}_{#1}^{#2}}}
\newcommand{\Vhat}[2]{\overline{\mathbf{V}}_{#1}^{#2}}
\newcommand{\Vtilde}[2]{\widetilde{\mathbf{V}}_{#1}^{#2}}
\newcommand{\X}{\mathcal{X}}
\newcommand{\rbar}[2]{\overline{\mathbf{r}}_{#1}^{#2}}
\newcommand{\rhat}[2]{\hat{\mathbf{r}}_{#1}^{#2}}
\newcommand{\That}[2]{\hat{\mathbf{T}}_{#1}^{#2}}
\newcommand{\Tbar}[2]{\overline{\mathbf{T}}_{#1}^{#2}}
\newcommand{\rbonus}[2]{\textsc{Rucb}_{#1}^{#2}}
\newcommand{\tbonus}[2]{\textsc{Tucb}_{#1}^{#2}}
\mathchardef\mhyphen="2D 
\DeclareMathOperator*{\argmin}{\arg\!\min}
\DeclareMathOperator*{\argmax}{\arg\!\max}
\DeclarePairedDelimiter{\norm}{\lVert}{\rVert}
\let\originalleft\left
\let\originalright\right
\renewcommand{\left}{\mathopen{}\mathclose\bgroup\originalleft}
\renewcommand{\right}{\aftergroup\egroup\originalright}
\newtheorem{assumption}{Assumption}
\newcommand{ \B }{ \mathcal{B} } 
\newcommand{ \D }{ \mathcal{D} } 
\newcommand{ \F }{ \mathcal{F} } 
\newcommand{ \E }{ \mathbb{E} } 
\newcommand{\lev}[1]{\ensuremath{\ell(#1)}}
\newcommand{\Pkh}[1][k]{\ensuremath{\mathcal{P}^{#1}_h}}
\newcommand{\nplus}[1]{\ensuremath{n_{+}(#1)}}
\newcommand{\Expk}[2]{\mathbb{E}^{#2}\left[ #1 \right]} 
\newcommand{\frall}{\ensuremath{\,\forall\,}}
\newcommand{\AdaMB}{\textproc{AdaMB}\xspace}
\newcommand{\dyad}[1]{\ensuremath{\Box_{#1}}}
\newcommand{\gam}{\ensuremath{\gamma}}
\begin{document}
	\title{Adaptive Discretization for Model-Based Reinforcement Learning}
	\author{
	    Sean R. Sinclair \\
	    Cornell University \\
	    \texttt{srs429@cornell.edu}
	    \and
	    Tianyu Wang \\
	    Duke University \\
	    \texttt{tianyu@cs.duke.edu}
	    \and
        Gauri Jain \\
        Cornell University \\
        \texttt{gauri.g.jain@gmail.com}
	    \and
        Siddhartha Banerjee \\
        Cornell University \\
        \texttt{sbanerjee@cornell.edu}
	    \and
	    Christina Lee Yu \\
	    Cornell University \\
	    \texttt{cleeyu@cornell.edu}}
	\date{}
	\maketitle

	\begin{abstract}
	We introduce the technique of adaptive discretization to design an efficient model-based episodic reinforcement learning algorithm in large (potentially continuous) state-action spaces. 
	Our algorithm is based on optimistic one-step value iteration extended to maintain an adaptive discretization of the space. 
    From a theoretical perspective we provide worst-case regret bounds for our algorithm which are competitive compared to the state-of-the-art model-based algorithms. Moreover, our bounds are obtained via a modular proof technique which can potentially extend to incorporate additional structure on the problem. 
    
    From an implementation standpoint, our algorithm has much lower storage and computational requirements due to maintaining a more efficient partition of the state and action spaces.
    We illustrate this via experiments on several canonical control problems, which shows that our algorithm empirically performs significantly better than fixed discretization in terms of both faster convergence and lower memory usage. 
    Interestingly, we observe empirically that while fixed-discretization model-based algorithms vastly outperform their model-free counterparts, the two achieve comparable performance with adaptive discretization.~\footnote{The code for the experiments are available at \url{https://github.com/seanrsinclair/AdaptiveQLearning}.}
	\end{abstract}
\newpage
	\setcounter{tocdepth}{2}
	\tableofcontents
	\newpage
	
    \allowdisplaybreaks
    \section{Introduction}
\label{sec:introduction}

Reinforcement learning (RL) is a paradigm modeling an agent's interactions with an unknown environment with the goal of maximizing their cumulative reward throughout the trajectory \cite{sutton2018reinforcement}.  In online settings the dynamics of the system are unknown and the agent must learn the optimal policy only through interacting with the environment.  This requires the agent to navigate the \textit{exploration exploitation trade-off}, between exploring unseen parts of the system and exploiting historical high-reward decisions.  Most algorithms for learning the optimal policy in these online settings can be classified as either \textit{model-free} or \textit{model-based}.  Model-free algorithms construct estimates for the $Q$-function of the optimal policy, the expected sum of rewards obtained from playing a specific action and following the optimal policy thereafter, and create upper-confidence bounds on this quantity \cite{Sinclair_2019, jin_2018}.  In contrast, model-based algorithms instead estimate unknown system parameters, namely the average reward function and the dynamics of the system, and use this to learn the optimal policy based on full or one-step planning~\cite{azar2017minimax,efroni2019tight}.

\srsedit{RL has received a lot of interest in the design of algorithms for large-scale systems using parametric models and function approximation.  For example, the AlphaGo Zero algorithm that mastered Chess and Go from scratch trained their algorithm over 72 hours using 4 TPUs and 64 GPUs~\cite{silver2017mastering}.  These results show the intrinsic power of RL in learning complex control policies, but are computationally infeasible for applying algorithms to RL tasks in computing systems or operations research. The limiting factor is implementing regression oracles or gradient steps on computing hardware.  For example, RL approaches have received much interest in designing controllers for memory systems~\cite{alizadeh2010dctcp} or resource allocation in cloud-based computing~\cite{10.1145/1394608.1382172}.  Common to these examples are computation and storage limitations on the devices used for the controller, requiring algorithms to compete on three major facets: efficient learning, low computation, and low storage requirements.}

Motivated by these requirements we consider discretization techniques which map the continuous problem to a discrete one as these algorithms are based on simple primitives easy to implement in hardware (and has been tested heuristically in practice~\cite{pyeatt2001decision,7929968}).  A challenge is picking a discretization to manage the trade-off between the discretization error and the errors accumulated from solving the discrete problem. As a fixed discretization wastes computation and memory by forcing the algorithm to explore unnecessary parts of the space, we develop an adaptive discretization of the space, where the discretization is only refined on an \textit{as-needed} basis.  This approach reduces unnecessary exploration, computation, and memory by only keeping a fine-discretization across important parts of the space~\cite{Sinclair_2019}.

Adaptive discretization techniques have been successfully applied to multi-armed bandits~\cite{slivkins_2014} and model-free RL~\cite{Sinclair_2019}.  The key idea is to maintain a non-uniform partition of the space which is refined based on the density of samples.  These techniques do not, however, directly extend to model-based RL, where the main additional ingredient lies in maintaining transition probability estimates and incorporating these in decision-making. Doing so is easy in tabular RL and $\epsilon$-net based policies, as simple transition counts concentrate well enough to get good regret. This is much less straightforward when the underlying discretization changes in an online, data-dependent way.

\begin{table}[!tb]
\setlength\tabcolsep{0pt} 
\centering
\begin{tabular*}{\columnwidth}{@{\extracolsep{\fill}}rccc}
\toprule
  Algorithm  & Regret & Time Complexity & Space Complexity \\
\midrule
  \textsc{AdaMB} (Alg.~\ref{alg:brief}) $(d_\S > 2)$ & $H^{1+\frac{1}{d+1}}K^{1-\frac{1}{d+d_\S}}\;\;\;$ & $HK^{1+\frac{d_\S}{d+d_\S}}$ & 
  $HK$ \\
    \hfill $(d_\S \leq 2)$& $H^{1+\frac{1}{d+1}}K^{1-\frac{1}{d+d_\S+2}}$ & $HK^{1+\frac{d_\S}{d+d_\S + 2}}$ & 
  $HK^{1-\frac{2}{d+d_\S + 2}}$ \\
  \textsc{Adaptive Q-Learning}~\cite{Sinclair_2019} & $H^{5/2}K^{1-\frac{1}{d+2}}$ & $HK\log_d(K)$  & $HK^{1-\frac{2}{d+2}}$\\
  \textsc{Kernel UCBVI}~\cite{domingues2020regret} & $H^3\;\;\;\,K^{1-\frac{1}{2d+1}}$ & $HAK^2$ & $HK$ \\
  \textsc{Net-Based $Q$-Learning}~\cite{song2019efficient} & $H^{5/2}K^{1-\frac{1}{d+2}}$ & $HK^2$ & $HK$ \\
\midrule
  \textsc{Lower-Bounds}~\cite{slivkins_2014} & $H\;\;\;\;K^{1-\frac{1}{d+2}}$ & N/A & N/A \\
\bottomrule
\end{tabular*}
\caption{\em Comparison of our bounds with several state-of-the-art bounds for RL in continuous settings.  Here, $d$ is the covering dimension of the state-action space, $d_\S$ is the covering dimension of the state space, $H$ is the horizon of the MDP, and $K$ is the total number of episodes.  Implementing \textsc{Kernel UCBVI}~\cite{domingues2020regret} is unclear under general action spaces, so we specialize the time complexity under a finite set of actions of size $A$.  As running UCBVI with a fixed discretization is a natural approach to this problem, we include a short discussion of this algorithm in \cref{app:implementation_run_time}.  Since the results are informal, we do not include them in the table here.  We include `N/A' under the time and space complexity lower bound as there is no prior work in this domain to our knowledge.} 
\label{tab:comparison_of_bounds}
\end{table}

\ifdefined\ARXIV
    \subsection{Our Contributions}
\else
   \textbf{Our Contributions.}
\fi We design and analyze a \textit{model-based} RL algorithm, \AdaMB, that discretizes the state-action space in a data-driven way so as to minimize regret. \AdaMB requires the underlying state and action spaces to be embedded in compact metric spaces, and the reward function and transition kernel to be Lipschitz continuous with respect to this metric. This encompasses discrete and continuous state-action spaces with mild assumptions on the transition kernel, and deterministic systems with Lipschitz continuous transitions. Our algorithm only requires access to the metric, unlike prior algorithms which require access to simulation oracles~\cite{kakade2003exploration}, strong parametric assumptions~\cite{jin2019provably}, or impose additional assumptions on the action space to be computationally efficient~\cite{domingues2020regret}.

Our policy achieves near-optimal dependence of the regret on the covering dimension of the metric space when compared to other model-based algorithms.  In particular, we show that for a $H$-step MDP played over $K$ episodes, our algorithm achieves a regret bound 
\begin{align*}
        R(K) & \lesssim \begin{cases}
            H^{1+\frac{1}{d+1}}K^{\frac{d+d_\S - 1}{d+d_\S}} \quad d_S > 2 \\
            H^{1 + \frac{1}{d+1}}K^{\frac{d+d_\S + 1}{d+d_\S + 2}} \quad d_S \leq 2
                \end{cases}
\end{align*}
where $d_\S$ and $d_\A$ are the covering dimensions of the state and action space respectively, and $d = d_\S + d_\A$.
As~\cref{tab:comparison_of_bounds} illustrates, our bounds are uniformly better (in terms of dependence on $K$ and $H$, in all dimensions) than the best existing bounds for model-based RL in continuous-spaces~\cite{lakshmanan2015improved,domingues2020regret}.  In addition to having lower regret, \AdaMB is also simple and practical to implement, with low query complexity and storage requirements (see~\cref{tab:comparison_of_bounds}) compared to other model-based techniques.

To highlight this, we complement our theory with experiments comparing model-free and model-based algorithms, using both fixed and adaptive discretization. 
Our experiments show that with a fixed discretization, model-based algorithms outperform model-free ones; however, when using an adaptive partition of the space, model-based and model-free algorithms perform similarly.  
This provides an interesting contrast between practice (where model-based algorithms are thought to perform much better) and theory (where regret bounds in continuous settings are currently worse for model-based compared to model-free algorithms), and suggests more investigation is required for ranking the two approaches.

    \subsection{Related Work}
\label{sec:related_work}

There is an extensive literature on model-based reinforcement learning; below, we highlight the work which is closest to ours, but for more extensive references, see~\cite{sutton2018reinforcement} for RL, and~\cite{bubeck2012regret,aleks2019introduction} for bandits.

\medskip

\noindent \textbf{Tabular RL}:  There is a long line of research on the sample complexity and regret for RL in tabular settings.  In particular, the first asymptotically tight regret bound for tabular model-based algorithms with non-stationary dynamics of $O(H^{3/2} \sqrt{SAK})$ where $S,A$ are the size of state/action spaces respectively was established in~\cite{azar2017minimax}.
These bounds were matched (in terms of $K$) using an `asynchronous value-iteration' (or one-step planning) approach~\cite{azar2013minimax,efroni2019tight}, which is simpler to implement.  Our work extends this latter approach to continuous spaces via adaptive discretization.
More recently, analysis was extended to develop instance-dependent instead of worst-case guarantees~\cite{zanette2019tighter,simchowitz2019}
There has also been similar regret analysis for model-free algorithms~\cite{jin_2018}.

\medskip

\noindent\textbf{Parametric Algorithms}:  For RL in continuous spaces, several recent works have focused on the use of linear function approximation~\cite{jin2019provably, du2019provably, zanette2019limiting, wang2019optimism,wang2020provably,osband2014model}.  These works assume that the controller has a feature-extractor under which the process is well-approximated via a linear model. While the resulting algorithms can be computationally efficient, they incur linear loss when the underlying process does not meet their strict parametric assumptions. Other work has extended this approach to problems with bounded eluder dimension~\cite{wang2020provably,russo2013eluder}.

\medskip

\noindent\textbf{Nonparametric Algorithms}: In contrast, nonparametric algorithms only require mild local assumptions on the underlying process, most commonly, that the $Q$-function is Lipschitz continuous with respect to a given metric.  For example,~\cite{yang2019learning} and~\cite{shah2018q} consider nearest-neighbour methods for deterministic, infinite horizon discounted settings.  
Others assume access to a generative model~\cite{kakade2003exploration,henaff2019explicit}. 

The works closest to ours concerns algorithms with provable guarantees for continuous state-action settings (see also~\cref{tab:comparison_of_bounds}).
In model-free settings, tabular algorithms have been adapted to continuous state-action spaces via fixed discretization (i.e., $\epsilon$-nets)~\cite{song2019efficient}. In model-based settings, researchers have tackled continuous spaces via kernel methods, based on either a fixed discretization of the space \cite{lakshmanan2015improved}, or more recently, without resorting to discretization~\cite{domingues2020regret}. While the latter does learn a data-driven representation of the space via kernels, it requires solving a complex optimization problem at each step, and hence is efficient mainly for finite action sets (more discussion on this is in \cref{sec:main_results}).
Finally, adaptive discretization has been successfully implemented in model-free settings~\cite{Sinclair_2019, cao2020provably}, and this provides a good benchmark for our algorithm, and for comparing model-free and model-based algorithms.

\medskip

\srsedit{\noindent\textbf{Discretization Based Approaches}: Discretization-based approaches to reinforcement learning have been investigated heuristically through many different settings.  One line of work investigates adaptive basis functions, where the parameters of the functional model (e.g. neural network) are learned online while also adapting the basis functions as well~\cite{keller2006automatic,menache2005basis,whiteson2006evolutionary}.  Similar techniques are done with soft state aggregation~\cite{singh1995reinforcement}.  Most similar to our algorithm, though, are tree based partitioning rules, which store a hierarchical tree based partition of the state and action space (much like \AdaMB) which is refined over time~\cite{pyeatt2001decision,7929968}.  These were tested heuristically with various splitting rules (e.g. Gini index, etc), where instead we split based off the metric and level of uncertainty in the estimates.}

\ifdefined\ARXIV

\medskip
 
\noindent \textbf{Practical RL}: Reinforcement learning policies have enjoyed remarkable success in recent years, in particular in the context of large-scale game playing.  These results, however, mask the high underlying costs in terms of computational resources and training time that the demonstrations requires \cite{silver2017mastering,mnih2016asynchronous,mnih2013playing,silver2016mastering}.  For example, the AlphaGo Zero algorithm that mastered Chess and Go from scratch trained their algorithm over 72 hours using 4 TPUs and 64 GPUs.  These results, while highlighting the intrinsic power in reinforcement learning algorithms, are computationally infeasible for applying algorithms to RL tasks in computing systems.  As an example, RL approaches have received much interest in several of the following problems:
    \begin{itemize}
        \item \textit{Memory Management}: Many computing systems have two sources of memory; on-chip memory which is fast but limited, and off-chip memory which has low bandwidth and suffer from high latency.  Designing memory controllers for these system require a scheduling policy to adapt to changes in workload and memory reference streams, ensuring consistency in the memory, and controlling for long-term consequences of scheduling decisions \cite{alizadeh2010dctcp,alizadeh2013pfabric,chinchali2018cellular}.
        \item \textit{Online Resource Allocation}: Cloud-based clusters for high performance computing must decide how to allocate computing resources to different users or tasks with highly variable demand.  Controllers for these algorithms must make decisions online to manage the trade-offs between computation cost, server costs, and delay in job-completions.  Recent work has studied RL algorithms for such problems \cite{10.1145/1394608.1382172,lykouris2018competitive,nishtala2013scaling}.
    \end{itemize}
    Common to all of these examples are computation and storage limitations on the devices used for the controller.
    \begin{itemize}
        \item \textit{Limited Memory}: On chip memory is expensive and off-chip memory access has low-bandwidth.  As any reinforcement learning algorithm requires memory to store estimates of relevant quantities - RL algorithms for computing systems must manage their computational requirements.
        \item \textit{Power Consumption}: Many applications require low-power consumption for executing RL policies on general computing platforms. 
        \item \textit{Latency Requirements}: Many problems for computing systems (e.g. memory management) have strict latency quality of service requirements that limits reinforcement learning algorithms to execute their policy quickly.
    \end{itemize}
    
    A common technique to these problems is cerebellar model articulation controllers (CMACs) which has been used in optimizing controllers for dynamic RAM access~\cite{10.1145/1394608.1382172,lykouris2018competitive,nishtala2013scaling}.  This technique uses a random-discretizations of the space at various levels of coarseness.  Our algorithm is motivated by this approach, taking a first step towards designing efficient reinforcement learning algorithms for continuous spaces, where efficient means both low-regret, but also low storage and computation complexity (see Table~\ref{tab:comparison_of_bounds}).

\subsection{Outline of Paper}

Section~\ref{sec:preliminary} present preliminaries for the model.  Our algorithm, \AdaMB, is explained in Section~\ref{sec:algorithm} with the regret bound and proof sketch given in Section~\ref{sec:main_results} and Section~\ref{sec:main_proof} respectively.  Sections~\ref{app:concentration},~\ref{app:decomp}, and~\ref{app:lp_bound} give some of the proof details.  Lastly, Section~\ref{sec:experiments} presents numerical experiments of the algorithm.  All technical proofs and experiment details are deferred to the appendix.

\else

\fi

    \section{Preliminaries}
\label{sec:preliminary}

\ifdefined\ARXIV
    \subsection{MDP and Policies}
\else
   \textbf{MDP and Policies.}
\fi We consider an agent interacting with an underlying finite-horizon Markov Decision Processes (MDP) over $K$ sequential episodes, denoted $[K] = \{1, \ldots, K\}$.  The underlying MDP is given by a five-tuple $(\S, \A, H, T, R)$ where horizon $H$ is the number of steps (indexed $[H] = \{1,2,\ldots,H\}$) in each episode, and $(\S,\A)$ denotes the set of states and actions in each step. When needed for exposition, we use $\S_h,\A_h$ to explicitly denote state/action sets at step $h$.  When the step $h$ is clear we omit the subscript for readability.

Let $\Delta(\X)$ denote the set of probability measures over a set $\X$. 
State transitions are governed by a collection of transition kernels $T = \{T_h(\cdot \mid x,a)\}_{h \in [H], x \in \S, a \in \A}$, where $T_h(\cdot \mid x, a) \in \Delta(\S_{h+1})$ gives the distribution over states in $\S_{h+1}$ if action $a$ is taken in state $x$ at step $h$. The instantaneous rewards are bounded in $[0,1]$, and their distributions are specified by a collection of parameterized distributions $R = \{R_h\}_{h \in [H]}$, $R_h : \S_h \times \A_h \rightarrow \Delta([0, 1])$. We denote $r_h(x,a) = \mathbb{E}_{r \sim R_h(x,a)}[r]$.

A policy $\pi$ is a sequence of functions $\{ \pi_h \mid h \in [H] \}$ where each $\pi_h : \S_h \rightarrow \A_h$ is a mapping from a given state $x \in \S_h$ to an action $a \in \A_h$.
At the beginning of each episode $k$, the agent fixes a policy $\pi^k$ for the entire episode, and is given an initial (arbitrary) state $X_1^k \in \S_1$. In each step $h \in [H]$, the agent receives the state $X_h^k$, picks an action $A_h^k = \pi^k_h(X_h^k)$, receives reward $R_h^k \sim R_h(X_h^k,A_h^k)$, and transitions to a random state $X_{h+1}^k \sim T_h\left(\cdot \mid X_h^k, \pi^k_h(X_h^k)\right)$.  
This continues until the final transition to state $X_{H+1}^k$, at which point the agent chooses policy $\pi^{k+1}$ for the next episode after incorporating observed rewards and transitions in episode $k$, and the process is repeated.


\ifdefined\ARXIV
    \subsection{Value Function and Bellman Equations}
\else
   \textbf{Value Function and Bellman Equations.}
\fi  For any policy $\pi$, let $A^{\pi}_h$ denote the (random) action taken in step $h$ under $\pi$, i.e., $A^{\pi}_h = \pi_h(X_h^k)$. We define $V_h^\pi: \S \rightarrow \RR$ to denote the \emph{policy value function} at step $h$ under policy $\pi$, i.e., the expected sum of future rewards under policy $\pi$ starting from $X_h = x$ in step $h$ until the end of the episode. Formally,
\begin{align*}
    V_h^\pi(x) := \Exp{\textstyle\sum_{h'=h}^H R_{h'} ~\Big|~ X_h = x} ~~\text{ for }~~ R_{h'} \sim R_h(X_{h'},A_{h'}^{\pi}).
\end{align*}
We define the state-action value function (or $Q$-function) $Q_h^\pi : \S \times \A \rightarrow \RR$ at step $h$ as the sum of the expected rewards received after taking action $A_h = a$ at step $h$ from state $X_h = x$, and then following policy $\pi$ in all subsequent steps of the episode. Formally,
\begin{align*}
    Q_h^\pi(x,a) := r_h(x,a) + \Exp{\textstyle\sum_{h'=h+1}^H R_{h'} ~\Big|~ X_{h+1} \sim T_h\left(\cdot \mid x,a\right)} ~~\text{ for }~~ R_{h'} \sim R_{h'}(X_{h'},A_{h'}^{\pi}).
\end{align*}
Under suitable assumptions on $\S \times \A$ and reward functions~\cite{puterman_1994}, there exists an optimal policy $\pi^\star$ which gives the optimal value $V_h^\star(x) = \sup_\pi V_h^\pi(x)$ for all $x \in \S$ and $h \in [H]$. For ease of notation we denote $Q^\star = Q^{\pi^\star}$. 
The Bellman equations~\cite{puterman_1994} state that,
\begin{align}
\label{eqn:bellman_equation}
V_{h}^{\pi}(x) & =Q_{h}^{\pi}\left(x, \pi_{h}(x)\right) &\frall x\in \S\nonumber\\ 
Q_{h}^{\pi}(x, a) &= r_{h}(x, a) + \Exp{V_{h+1}^{\pi}(X_{h+1}) \mid X_h = x, A_h = a} &\frall (x,a)\in \S\times\A \\ 
V_{H+1}^{\pi}(x) & =0 &\frall x \in \S \nonumber.
\end{align}
For the optimal policy $\pi^{\star}$, it additionally holds that $V_h^\star(x) = \max_{a \in \A} Q_h^\star(x,a).$

In each episode $k\in[K]$ the agent selects a policy $\pi^k$, and is given an arbitrary starting state $X_1^k$. 
The goal is to maximize the total expected reward $\sum_{k=1}^K V_1^{\pi^k}(X_1^k)$. 
We benchmark the agent on their regret: the additive loss over all episodes the agent experiences using their policy instead of the optimal one.  In particular, the \textit{regret} $R(K)$ is defined as:
\begin{align}
\label{equation:regret}
R(K) = \textstyle\sum_{k=1}^K \left( V_1^\star(X_1^k) - V_1^{\pi^k}(X_1^k) \right).
\end{align}
Our goal is to show that the regret $R(K)$ is sublinear with respect to $K$.

\ifdefined\ARXIV
    \subsection{Metric Space and Lipschitz Assumptions}
\else
   \textbf{Metric Space and Lipschitz Assumptions.}
\fi  We assume the state space $\S$ and the action space $\A$ are each separable compact metric spaces, with metrics $\D_\S$ and $\D_\A$, and covering dimensions $d_\S$ and $d_\A$ respectively.  This imposes a metric structure $\D$ on $\S \times \A$ via the product metric, or any sub-additive metric such that \[\D((x,a), (x',a')) \leq \D_\S(x,x') + \D_\A(a,a').\]  This also ensures that the covering dimension of $\S \times \A$ is at most $d = d_\S + d_\A$.  \srsedit{We assume that the algorithm has oracle access to the metrics $\D_\S$ and $\D_\A$ through several queries, which are explained in more detail in \cref{app:implementation_run_time}.}  \srsedit{We also need that $T_h(\cdot \mid x,a)$ is Borel with respect to the metric $\D_\S$ for any $(x,a) \in \S \times \A$.}

We assume w.l.o.g. that $\S\times\A$ has diameter $1$, and we denote the diameter of $\S$ as $\D(\S) = \sup_{a\in\A,(x,y)\in\S^2}\D((x,a),(y,a)) \leq 1$.  For more information on metrics and covering dimension, see \cite{slivkins_2014,Kleinberg:2019:BEM:3338848.3299873, Sinclair_2019} for a summary.

To motivate the discretization approach, we also assume non-parametric Lipschitz structure on the transitions and rewards of the underlying process~\cite{Sinclair_2019}.
\begin{assumption}[Lipschitz Rewards and Transitions]
\label{assumption:Lipschitz}
    For every $x,x',h \in \S \times \S \times [H]$ and $a, a' \in \A \times \A$, the average reward function $r_h(x,a)$ is Lipschitz continuous with respect to $\D$, i.e.:
    \begin{align*}
        |r_h(x,a) - r_h(x', a')| & \leq L_r \D((x,a), (x',a'))
    \end{align*}
    For every $(x,x',h) \in \S \times \S \times [H]$ and $(a, a') \in \A \times \A$, the transition kernels $T_h(x' \mid x,a)$ are Lipschitz continuous in the $1$-Wasserstein metric $d_W$ with respect to $\D$, i.e.:
    \begin{align*}
    d_W(T_h(\cdot \mid x,a), T_h(\cdot \mid x',a')) & \leq L_T \D((x,a),(x',a')).
    \end{align*}
    We further assume that $Q_h^\star$ and $V_h^\star$ are also $L_V$-Lipschitz continuous for some constant $L_V$.  
\end{assumption}
\noindent See \cite{Sinclair_2019,domingues2020regret} for conditions that relate $L_V$ to $L_r$ and $L_T$.

\ifdefined\ARXIV
The next assumption is similar to previous literature for algorithms in general metric spaces \cite{Kleinberg:2019:BEM:3338848.3299873,slivkins_2014,Sinclair_2019}.  This assumes access to the similarity metrics $\D$, $\D_\S$, and $\D_\A$.  Learning the metric (or picking the metric) is important in practice, but beyond the scope of this paper \cite{wanigasekara2019nonparametric}.

\begin{assumption}
The agent has oracle access to the similarity metrics $\D$, $\D_\S$, and $\D_\A$ via several queries that are used by the algorithm.
\end{assumption}

In particular, \AdaMB (\cref{alg:brief}) requires access to several covering and packing oracles that are used throughout the algorithm.  For more details on the assumptions required and implementing the algorithm in practice, see \cref{app:implementation_run_time}.
\else

\fi

    \section{Algorithm}
\label{sec:algorithm}

\begin{algorithm*}[t]
	\begin{algorithmic}[1]
		\Procedure{AdaMB}{$\S, \A, \D, H, K, \pfail$}
			\State Initialize partitions $\P_h^0 = \S\times\A$ for $h\in[H]$, estimates $\Qhat{h}{0}(\cdot) = \Vhat{h}{k}(\cdot) = H-h+1$
			\For{each episode $k \gets 1, \ldots K$}
				\State Receive starting state $X_1^k$
				\For{each step $h \gets 1, \ldots, H$}
					\State Observe $X_h^k$ and determine $\relevant_h^k(X_h^k) = \{B\in\Pkh[k-1]|X_h^k\in B\}$
					\State Greedy selection rule: 
					pick $B_h^k = \argmax_{B \in \text{RELEVANT}_h^k(X_h^k)} \Qhat{h}{k-1}(B)$
                    \State Play action $A_h^k = \tilde{a}(B_h^k)$ associated with ball $B_h^k$; receive $R_h^k$ and transition to  $X_{h+1}^k$
					\State Update counts for $n_h^k(B_h^k), \rbar{h}{k}(B_h^k),$ and $\Tbar{h}{k}(\cdot \mid B_h^k)$
					\If{$n_h^k(B_h^k)+1 \geq \nplus{B_h^k}$}
					  \textproc{Refine Partition}$(B_h^k)$
					\EndIf 
				\EndFor
				\textproc{Compute Estimates}$(R_h^k, B_h^k)_{h=1}^H$
			\EndFor
		\EndProcedure
		
		\Procedure{Refine Partition}{$B$, $h$, $k$}
		    \State Construct $\P(B) = \{B_1, \ldots, B_{2^{d}}\}$ a $2^{-(\lev{B}+1)}$-dyadic partition of $B$
		    \State Update  $\Pkh=\Pkh[k-1]\cup\P(B) \setminus B$ 
		    \State For each $B_i$, initialize $n_h^k(B_i)=n_h^k(B)$, $\rbar{h}{k}(B_i) = \rbar{h}{k}(B)$ and $\Tbar{h}{k}(\cdot \mid B_i) \sim \Tbar{h}{k}(\cdot \mid B)$
		 \EndProcedure
		\Procedure{Compute Estimates}{$(B_h^k,R_h^k,X_{h+1}^k)_{h=1}^H$}
		\For{each $h \gets 1, \ldots H$ and $B \in \P_h^k$}
		 : Update $\Qhat{h}{k}(B)$ and $\Vhat{h}{k}(\cdot)$ via~\cref{eq:q_update} and~\cref{eq:def-V}
		\EndFor
		\EndProcedure
	\end{algorithmic}
	\caption{Model-Based Reinforcement Learning with Adaptive Partitioning (\AdaMB)}
	\label{alg:brief}
\end{algorithm*}

We now present our \emph{\textbf{M}odel-\textbf{B}ased RL with \textbf{Ada}ptive Partitioning} algorithm, which we refer to as \AdaMB. At a high level, \AdaMB maintains an adaptive partition of $\S_h\times\A_h$ for each step $h$, and uses \emph{optimistic value-iteration} over this partition. 
It takes as input the number of episodes $K$, metric $\D$ over $\S\times\A$, and the Lipschitz constants. 
It maintains \emph{optimistic estimates} for $r_h(x,a)$ and $T_h(\cdot \mid x,a)$ (i.e. high-probability uniform upper bounds $\frall h,x,a$). 
These are used for performing value iteration to obtain optimistic estimates $\Qhat{h}{}$ and $\Vhat{h}{}$ via one-step updates in \cref{eq:q_update} and \cref{eq:def-V}.  For full pseudocode of the algorithm, and a discussion on implementation details, see \cref{app:full_algo}.

\medskip

\textbf{Adaptive State-Action Partitioning}: For each step $h\in[H]$, \AdaMB maintains a partition of the space $\S_h\times\A_h$ into a collection of `balls' which is refined over episodes $k \in [K]$.  We denote  $\P_h^k$ to be the partition for step $h$ at the end of episode $k$; the initial partition is set as $\P_h^0 = \S\times\A \, \frall h\in[H]$.  
Each element $B \in \P_h^k$ is a ball of the form $B = \S(B)\times\A(B)$, where $\S(B) \subset \S$ (respectively $\A(B) \subset \A$) is the projection of ball $B$ onto its corresponding state (action) space. We let $(\tilde{x}(B), \tilde{a}(B))$ be the center of $B$ and denote $\D(B)=\sup \{\D((x,a),(y,b)) \mid (x,a),(y,b)\in B\}$ to be the diameter of a ball $B$.  The partition $\P_h^k$ can also be represented as a tree, with leaf nodes representing \emph{active} balls, and inactive \emph{parent} balls of $B\in\Pkh$ corresponding to $\{B'\in\Pkh[k']|B'\supset B, k'<k\}$; moreover, $\lev{B}$ is the depth of $B$ in the tree (with the root at level $0$). See Figure~\ref{fig:partition} for an example partition and tree generated by the algorithm.  Let 
\begin{align}
\S(\P_h^k) := \bigcup_{B \in \P_h^k ~\text{s.t.}~ \nexists B' \in \P_h^k, \S(B') \subset \S(B)} \S(B) \label{eq:induced_state_partition_def}
\end{align}
denote the partition over the state spaced induce by the current state-action partition $\P_h^k$. We can verify that the above constructed $\S(\P_h^k)$ is indeed a partition of $\S$ because the partition $\P_h^k$ is constructed according to a dyadic partitioning.

\begin{figure*}[!t]
\centering
\begin{minipage}{0.64\columnwidth}
  \centering
  \resizebox{\columnwidth}{!}{\input{parts/partitionexampledetailed}}%
  \caption*{Illustrating the state-action partitioning scheme}
\end{minipage}%
\hfill
\begin{minipage}{0.36\columnwidth}
\centering
  \includegraphics[scale=.2]{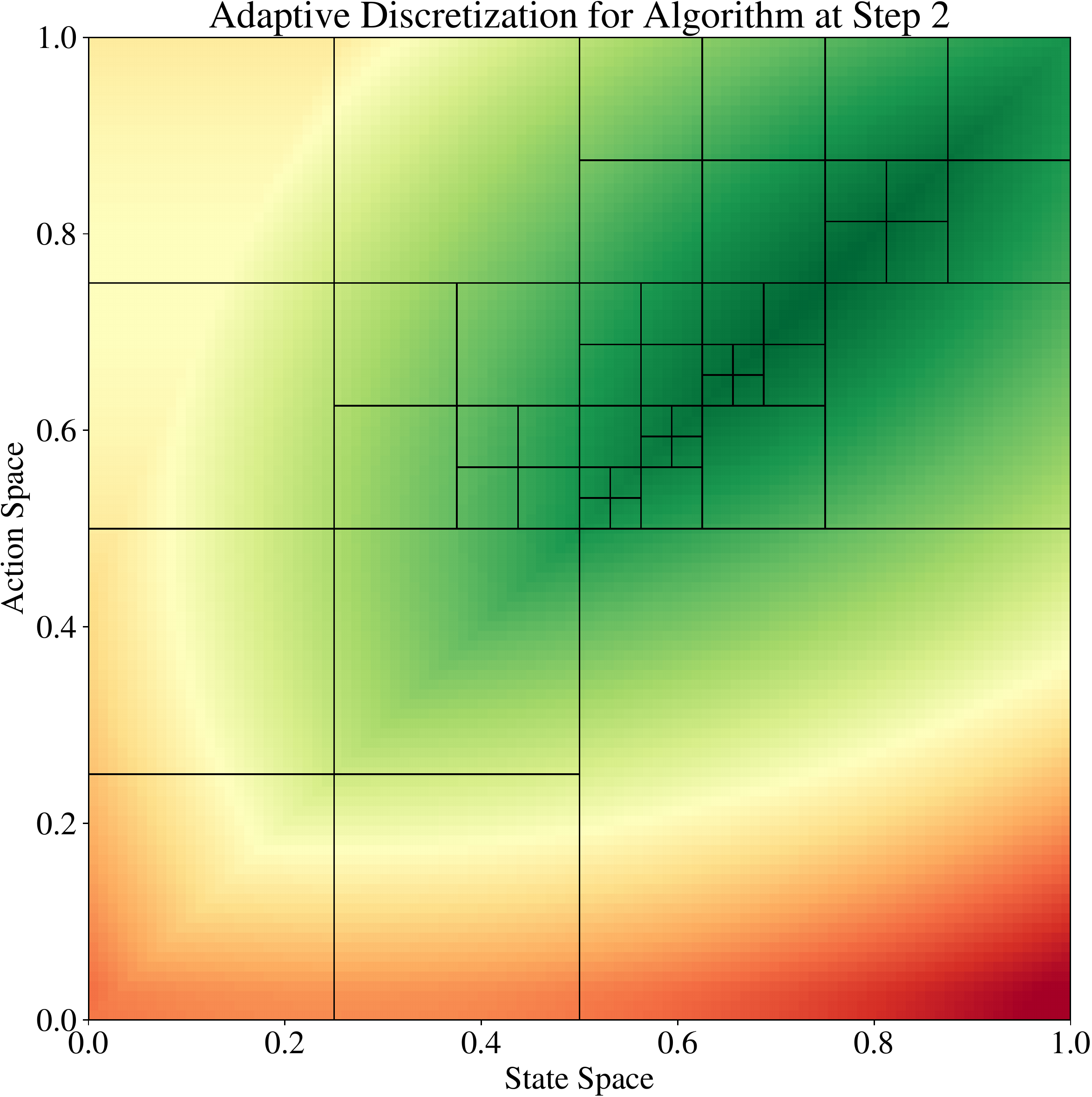}
  \caption*{Partitioning in practice}
\end{minipage}%
\caption{\em Partitioning scheme for $\S\times\A=[0,1]^2$: On the left, we illustrate our scheme. Partition $\Pkh[k-1]$ is depicted with corresponding tree (showing active balls in green, inactive parents in red). The algorithm plays ball $B_{h-1}^k$ in step $h-1$, leading to new state $X_h^k$. Since $\lev{B_{h-1}^k}=2$, we store transition estimates $\Tbar{h-1}{k}(\cdot \mid B_{h-1}^k)$ for all subsets of $\S_h$ of diameter $2^{-2}$ (depicted via dotted lines). The set of relevant balls $\relevant_h^k(X_h^k) = \{B_4,B_{21},B_{23}\}$ are highlighted in blue.\\
On the right, we show the partition $\P_{2}^K$ from one of our synthetic experiments (See `Oil Discovery' in~\cref{sec:experiments}). The colors denote the true $Q_2^\star(\cdot)$ values, with green corresponding to higher values. Note that the partition is more refined in areas which have higher $Q_2^\star(\cdot)$.}
\label{fig:partition}
\end{figure*}

While our partitioning works for any compact metric space, a canonical example to keep in mind is $\S=[0,1]^{d_\S},\A=[0,1]^{d_\A}$ with the infinity norm $\D((x,a),(x',a')) = ||(x,a)-(x',a')||_{\infty}$ (which was used in some of the simulations). 
We illustrate this in~\cref{fig:partition} for $d_\S=d_\A=1$. 
We define $\lev{B} = -\log_2(\D(B))$ to be the \emph{level} of a ball $B$, and construct $B$ as a level-$\lev{B}$ \emph{dyadic cube} in the metric-space $(\S\times\A,\D)$.
In our example of $([0,1]^2,||\cdot||_{\infty})$, a ball $B$ is an axis-aligned cube of length $2^{-\lev{B}}$ and corners in  $2^{-\lev{B}}\mathbb{Z}^{2}$, as depicted in~\cref{fig:partition}.

\medskip

At the end of each episode, for each active ball $B \in \P_h^k$ \AdaMB maintains three statistics:
\begin{itemize}
    \item $n_h^k(B)$: the number of times the ball $B$ has been \textit{selected} up to and including episode $k$.
    \item $\rhat{h}{k}(B)$: the empirical (instantaneous) reward earned from playing actions in $B$.\\
$\rbar{h}{k}(B)$: the empirical reward earned from playing actions in $B$ and its ancestors.
    \item $\{\That{h}{k}(\cdot \mid B)\}$: the empirical fractions of transitions to sets in a $2^{-\lev{B}}$-coarse partition of $\S_{h+1}$ (which we denote as $\dyad{\lev{B}}$) after playing actions in $B$.\\ $\{\Tbar{h}{k}(\cdot \mid B)\}$: the empirical fractions of transitions from playing actions in $B$ and its ancestors.
\end{itemize}

These estimates are used to construct \emph{optimistic $Q$-function estimates} $\Qhat{h}{k}(B)$ for each $B\in\Pkh$. Each ball $B\in\Pkh$ has an \emph{associated action} $\tilde{a}(B)\in\A(B)$ (we take this to be the center of the ball $\A (B)$). 

\medskip

\textbf{The \AdaMB Algorithm}: Given the above partitions and statistics, the algorithm proceeds as follows. In each episode $k$ and step $h$, \AdaMB observes state $X_h^k$, and finds all \emph{relevant} balls $\relevant_h^k(X_h^k) = \{B\in\Pkh[k-1]|X_h^k\in B\}$ (see~\cref{fig:partition}). 
It then selects an action according to a \emph{greedy selection rule}, picking $B_h^k \in \relevant_h^k(X_h^k)$ with highest $\Qhat{h}{k-1}(B)$, and plays action $\tilde{a}(B_h^k)$.  Note that the algorithm can also play any action $a$ such that $(X_h^k, a) \in B_h^k$ uniformly at random and the theory still applies.
Next, the algorithm \textit{updates counts} for $\rhat{h}{k}(B_h^k)$ and $\That{h}{k}(\cdot \mid B_h^k)$ based on the observed reward $R_h^k$ and transition to $X_{h+1}^k$.
Following this, it \textit{refines the partition} if needed. 
Finally, at the end of the episode, \AdaMB \textit{updates estimates} by solving for $\Qhat{h}{k}(\cdot)$ which are used in the next episode. We now describe the last three subroutines in more detail; see~\cref{alg:full_brief} for the full pseudocode, and \cref{app:implementation_run_time} for implementation, space, and run-time analysis.

\medskip
\textbf{Update Counts}: After playing active ball $B_h^k$ and observing $(R_h^k, X_{h+1}^k)$ for episode $k$ step $h$, \\ 
-- Increment counts and reward estimates according to
\[n_h^k(B_h^k) = n_h^{k-1}(B_h^k)+1 ~~\text{ and }~~  \rhat{h}{k}(B_h^k) = \frac{n_h^{k-1}(B_h^k)\rhat{h}{k-1}(B_h^k) + R_h^k}{n_h^{k}(B_h^k)}.\]
-- Update $\That{h}{k}(\cdot \mid B_h^k)$ as follows: For each set $A$ in a $2^{-\lev{B}}$-coarse partition of $\S_{h+1}$ denoted by $\dyad{\lev{B}}$, we set
\[\That{h}{k}(A\mid B) = \frac{n_h^{k-1}(B_h^k)\That{h}{k-1}(A\mid B)+\mathds{1}_{\{X_{h+1}^k\in A\}}}{ n_h^{k}(B_h^k)}.\]
This is maintaining an empirical estimate of the transition kernel for a ball $B$ at a level of granularity proportional to its diameter $\D(B) = 2^{-\lev{B}}.$

\medskip
\textbf{Refine Partition}: To refine the partition over episodes, we split a ball when the confidence in its estimate is smaller than its diameter. 
Formally, for any ball $B$, we define a \emph{splitting threshold} $\nplus{B} = \phi2^{\gam\lev{B}}$ , and partition $B$ once we have $n_h^k(B)+1 \geq \nplus{B}$. Note the splitting threshold grows exponentially with the level.  More concretely the splitting threshold is defined via
\begin{align*}
    n_+(B) & = \phi 2^{d_\S \ell(B)} \quad\quad d_\S > 2 \\
    n_+(B) & = \phi 2^{(d_\S + 2) \ell(B)} \quad\quad d_\S \leq 2
\end{align*}
where the difference in terms comes from the Wasserstein concentration.  This is in contrast to the splitting threshold for the model-free algorithm where $\nplus{\B} = 2^{2\lev{B}}$ \cite{Sinclair_2019}.  The $\phi$ term is chosen to minimize the dependence on $H$ in the final regret bound where $\phi = H^{(d+d_\S)/(d+1)}$.

In episode $k$ step $h$, if we need to split $B_h^k$, then we partition $\S(B_h^k)\times\A(B_h^k)$ using new balls each of diameter $\frac{1}{2}\D(B_h^k)$. 
This partition $\P(B)$ can be constructed by taking a cross product of a level $(\lev{B}+1)$-dyadic partition of $\S(B_h^k)$ and a level-$(\lev{B}+1)$ dyadic partition of $\A(B_h^k)$.  
We then remove $B$ and add $\P(B)$ to $\P_h^{k-1}$ to form the new partition $\P_h^{k}$.
In practice, each child ball can inherit all estimates from its parent, and counts for the parent ball are not updated from then on.  However, for ease of presentation and analysis we assume each child ball starts off with fresh estimates of $\rhat{h}{k}(\cdot)$ and $\That{h}{k}(\cdot)$ and use $\rbar{h}{k}(\cdot)$ and $\Tbar{h}{k}(\cdot)$ to denote the aggregate statistics.

\medskip	    
\textbf{Compute Estimates}: At the end of the episode we set
\begin{align*}
    \rbar{h}{k}(B) & =  \frac{\sum_{B' \supseteq B} \rhat{h}{k}(B') n_h^k(B')}{\sum_{B' \supseteq B} n_h^k(B')} \\
    \Tbar{h}{k}(A \mid B) & = \frac{\sum_{B' \supseteq B} \sum_{A' \in \dyad{\ell(B')}; A \subset A'}2^{-d_\S(\ell(B') - \ell(B))}n_h^k(B') \That{h}{k}(A' \mid B')}{\sum_{B' \supseteq B} n_h^k(B')}
\end{align*}
When aggregating the estimates of the transition kernel, we need to multiply by a factor to ensure we obtain a valid distribution.  This is because any ancestor $B'$ of $B$ maintain empirical estimates of the transition kernel to a level $\dyad{\lev{B'}}$.  Thus, we need to split the mass in order to construct a distribution over $\dyad{\lev{B}}$.  We also define confidence terms typically used in multi-armed bandits which are defined via:
\begin{align*}
    &\rbonus{h}{k}(B) = \sqrt{\frac{8\log(2HK^2/\delta)}{\sum_{B' \supseteq B} n_h^k(B')}} + 4 L_r \D(B)\\
    &\tbonus{h}{k}(B) 
    = \begin{cases}
    L_V \left((5L_T + 4) \D(B) + 4 \sqrt{\frac{\log(HK^2 / \delta)}{\sum_{B' \subseteq B} n_h^k(B')}} + c \left(\sum_{B' \subseteq B} n_h^k(B')\right)^{-1/d_\S}\right) &\text{ if } d_\S > 2 \\
    L_V \left((5 L_T + 6) \D(B) + 4 \sqrt{\frac{\log(HK^2/\delta)}{\sum_{B' \supseteq B} n_h^k(B')}} + c \sqrt{\frac{2^{d_\S \lev{B}}}{\sum_{B' \supseteq B} n_h^k(B')}}\right) &\text{ if } d_\S \leq 2   
    \end{cases}
\end{align*}

The difference in definitions of $\tbonus{h}{k}(\cdot)$ comes from the Wasserstein concentration in \cref{app:concentration}.  With these in place we set
\begin{align}
\label{eq:q_update}
    \Qhat{h}{k}(B) & := \begin{cases}
    & \rbar{H}{k}(B) + \rbonus{H}{k}(B) \hfill \text{ if } h = H \\
    & \rbar{h}{k}(B) + \rbonus{h}{k}(B) + \E_{A \sim \Tbar{h}{k}(\cdot \mid B)}\big[\Vhat{h+1}{k-1}(A)\big]
		+ \tbonus{h}{k}(B) \hfill \text{ if } h < H\end{cases}
\end{align}
mimicing the Bellman equations by replacing the true unknown quantities with their estimates.  The value function estimates are computed in a two-stage process.  For each ball $A \in \S(\P_h^k)$ we have that
\begin{align}
\label{eq:v_tilde_update}
    \Vtilde{h}{k}(A) := \min\{\Vtilde{h}{k-1}(A), \max_{B \in \P_h^k: \S(B) \supseteq A} \Qhat{h}{k}(B) \}.
\end{align}

For technical reasons we need to construct a Lipschitz continuous function to estimate the value function in order to show concentration of the transition kernel estimates.  For each point $x \in \S_h$ we define
\begin{align}
\label{eq:def-V}
    \Vhat{h}{k}(x) = \min_{A' \in \S(\P_h^k)} \left(\Vtilde{h}{k}(A') + L_V \D_S(x, \tilde{x}(A') \right).
\end{align}
However, as the support of $\Tbar{h}{k}(\cdot \mid B)$ is only over sets in $\dyad{\lev{B}}$ we overload notation to let $\Vhat{h}{k}(A) = \Vhat{h}{k}(\tilde{x}(A))$. We equivalently overload notation so that $x \sim \Tbar{h}{k}(\cdot \mid B)$ refers to sampling over the centers associated to balls in $\dyad{\lev{B}}$.

This corresponds to a value-iteration step, where we replace the true rewards and transitions in the Bellman Equations (\cref{eqn:bellman_equation}) with their (optimistic) estimates. We only compute one-step updates as in  \cite{efroni2019tight}, which reduces computational complexity as opposed to solving the full Bellman update.

Note that at the end of the episode, for each step $h$, we only need to update $\Qhat{h}{k}(B)$ for $B = B_h^k$ and $\Vtilde{h}{k}(A)$ for each $A \in \S(\P_h^k)$ such that $A \subseteq B_h^k$. $\Vhat{h}{k}$ is only used to compute the expectation in \cref{eq:q_update}, and thus it is only evaluated in episode $k+1$ for balls $A$ in the $2^{-\lev{B^{k+1}_{h-1}}}$-coarse partition of $\S_{h}$.
    \section{Main Results}
\label{sec:main_results}

We provide two main forms of performance guarantees, worst-case regret bounds with arbitrary starting states, which yields sample-complexity guarantees for learning a policy.

\subsection{Worst-Case Regret Guarantees}

We start with giving worst-case regret guarantees for \AdaMB.
\begin{theorem}
\label{thm:regret}
	Let $d = d_A + d_S$, then the regret of \AdaMB for any sequence of starting states $\{X_1^k\}_{k=1}^K$ is upper bounded with probability at least $1 - \delta$ by
	\begin{align*}
	R(K) & \lesssim \begin{cases}
	LH^{1+\frac{1}{d+1}}K^{\frac{d+d_\S - 1}{d+d_\S}} \quad d_S > 2 \\
	LH^{1 + \frac{1}{d+1}}K^{\frac{d+d_\S + 1}{d+d_\S + 2}} \quad d_S \leq 2
	\end{cases}
	\end{align*}
	where $L = 1 + L_r + L_V + L_V L_T$ and $\lesssim$ omits poly-logarithmic factors of $\frac{1}{\delta}, H,K,$ $d$, and any universal constants.
\end{theorem}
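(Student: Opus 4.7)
The plan is to follow the standard optimism-based regret-decomposition template, adapted to the adaptive partition so that the final summation can be controlled through a packing argument tied to the splitting threshold $n_+(B)=\phi\,2^{\gamma\ell(B)}$ with $\gamma\in\{d_\S,\, d_\S+2\}$ and $\phi=H^{(d+d_\S)/(d+1)}$.

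First, I would define a high-probability \emph{good event} $\mathcal{E}$ on which the aggregated estimates $\bar r_h^k(B)$ and $\bar T_h^k(\cdot\mid B)$ concentrate around their true means for every active ball $B$ and every $k,h$. For the reward, a Hoeffding bound over the accumulated samples from $B$ and its ancestors, together with the Lipschitz bias $L_r\D(B)$, yields concentration at rate $\rbonus{h}{k}(B)$. For the transition kernel, I would invoke a Wasserstein-type concentration (treating $\bar T_h^k$ as a localized empirical measure together with the hierarchical-aggregation bias coming from splitting mass across $\dyad{\ell(B)}$). This is where the dichotomy on $d_\S$ enters: in dimension $d_\S>2$ the Wasserstein rate is $n^{-1/d_\S}$, while for $d_\S\leq 2$ one gets the improved $\sqrt{2^{d_\S\ell(B)}/n}$ bound after partitioning $\S_{h+1}$ at granularity $2^{-\ell(B)}$. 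These are exactly the two terms appearing in $\tbonus{h}{k}(B)$. A union bound over the at most $HK^2$ relevant $(h,k,B)$ triples handles the $\log(2HK^2/\delta)$ factor. This step I expect to be the most delicate, since it also has to absorb the bias from evaluating $\hat V_{h+1}^{k-1}$ against a measure supported on dyadic cells rather than points; the explicit Lipschitz extension in \cref{eq:def-V} is tailored to make this work.

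Next, I would establish \emph{optimism}: on $\mathcal{E}$, for every $h,k$ and every $(x,a)\in B\in\Pkh[k-1]$,
\[
\Qhat{h}{k-1}(B)\ \geq\ Q_h^\star(x,a),\qquad \Vhat{h}{k-1}(x)\ \geq\ V_h^\star(x),
\]
by backward induction on $h$ from $H+1$, using \cref{eq:q_update,eq:v_tilde_update,eq:def-V}, \cref{assumption:Lipschitz}, and the concentration bounds together with the $L_V$-Lipschitz property of $V_h^\star$ (which also controls the Wasserstein-vs-Kantorovich pairing with $\bar T$). The monotone $\min$ in $\Vtilde{}{}$ and the Lipschitz envelope in $\Vhat{}{}$ preserve optimism across episodes.

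With optimism in hand, the per-episode regret is bounded by $\Vhat{1}{k-1}(X_1^k)-V_1^{\pi^k}(X_1^k)$, and I would unroll this along the sampled trajectory using the Bellman equation and \cref{eq:q_update}. This produces a telescoping ``clock'' term plus a sum of one-step bonuses
\[
R(K)\ \lesssim\ \sum_{k=1}^K\sum_{h=1}^H\Bigl(\rbonus{h}{k}(B_h^k)+\tbonus{h}{k}(B_h^k)\Bigr)\ +\ (\text{lower-order martingale terms}),
\]
where the martingale difference between the expectation under $T_h$ and the realized next state is handled by Azuma--Hoeffding on $\mathcal{E}$.

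The final step is the \emph{packing/counting argument}. Group active balls played by the algorithm by their level $\ell$; by the splitting rule, each ball at level $\ell$ is played at most $\nplus{B}=\phi 2^{\gamma\ell}$ times, and the number of active balls at level $\ell$ is at most the packing number of $\S\times\A$ at scale $2^{-\ell}$, i.e.\ $O(2^{d\ell})$. Plugging $\D(B)=2^{-\ell}$ and $n_h^k(B)\asymp\phi 2^{\gamma\ell}$ into $\rbonus{}{}$ and $\tbonus{}{}$ bounds the per-level contribution, and summing over levels up to the deepest $\ell^\star$ that can be reached in $K$ episodes (determined by $\sum_\ell 2^{d\ell}\phi 2^{\gamma\ell}\lesssim HK$) yields the stated regret once $\phi=H^{(d+d_\S)/(d+1)}$ and $\gamma\in\{d_\S,d_\S+2\}$ are plugged in; the two cases in the theorem fall out from the two cases of $\gamma$. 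The main obstacle I foresee is the Wasserstein-plus-hierarchical-bias concentration from step one, together with making sure the Lipschitz extension of $\Vhat{}{}$ in \cref{eq:def-V} is compatible with bounding $\E_{\Tbar{}{}}[\Vhat{}{}]-\E_{T}[V^\star]$ by the $\tbonus{}{}$ terms uniformly across levels.
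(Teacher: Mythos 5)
Your proposal matches the paper's proof essentially step for step: Hoeffding-plus-Lipschitz concentration for rewards, Wasserstein concentration for the aggregated transition estimates with exactly the $d_\S>2$ versus $d_\S\leq 2$ dichotomy, optimism by backward induction, a one-step regret decomposition into a telescoping term plus bonus sums, and a level-by-level counting argument driven by the splitting threshold $n_+(B)=\phi 2^{\gamma\ell}$ with $\gamma\in\{d_\S,d_\S+2\}$ and $\phi=H^{(d+\gamma)/(d+1)}$ (the paper packages your packing count as a small LP with a dual-feasible certificate, but its worst case is precisely the uniform-net configuration you describe). The only slip worth noting is that the budget determining the deepest level $\ell^\star$ should be $K$ samples per step-$h$ partition (each episode contributes one sample to each $\P_h^k$), not $HK$; using $HK$ there would inflate the $H$-dependence beyond the stated $H^{1+\frac{1}{d+1}}$.
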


\noindent \textbf{Comparison to Model-Free Methods}: Previous model-free algorithms achieve worst-case bounds scaling via $H^{5/2} K^{(d+1)/(d+2)}$, which achieve the optimal worst-case dependence on the dimension $d$~\cite{Sinclair_2019}.  The bounds presented here have better dependence on the number of steps $H$.  This is expected, as current analysis for model-free and model-based algorithms under tabular settings shows that model-based algorithms achieve better dependence on $H$. However, under the Lipschitz assumptions here the constant $L$ also scales with $H$ so the true dependence is somewhat masked.  A modification of our algorithm that uses full planning instead of one-step planning will achieve linear dependence on $H$, with the negative effect of increased run-time.  When we compare the dependence on the number of episodes $K$ we see that the dependence is worse - primarily due to the additional factor of $d_\S$, the covering dimension of the state-space.  This term arises as model-based algorithms maintain an estimate of the transition kernel, whose complexity depends on $d_\S$.

\medskip

\noindent \textbf{Comparison to Model-Based Methods}:  Current state of the art model-based algorithms (\textsc{Kernel-UCBVI}) achieve regret scaling like $H^{3} K^{2d/(2d+1)}$ \cite{domingues2020regret}.  We achieve better scaling with respect to both $H$ and $K$, and our algorithm has lower time and space complexity.  However, we require additional oracle assumptions on the metric space to be able to construct packings and coverings efficiently, whereas \textsc{Kernel-UCBVI} uses the data and the metric itself.  Better dependence on $H$ and $K$ is primarily achieved by using recent work on concentration for the Wasserstein metric.  These guarantees allow us to construct tighter confidence intervals which are independent of $H$, obviating the need to construct a covering of $H$-uniformly bounded Lipschitz functions like prior work (see \cref{app:concentration}).

In addition, \textsc{Kernel-UCBVI} uses a fixed bandwidth parameter in their kernel interpolation.  We instead keep an adaptive partition of the space, helping our algorithm maintain a smaller and more efficient discretization.  This technique also lends itself to show instance dependent bounds, which we leave for future work.


\medskip

\noindent \textbf{Discussion on Instance-Specific Bounds}: The bounds presented here are worst-case, problem independent guarantees.  Recent work has shown that model-free algorithms are able to get problem dependent guarantees which depend on the zooming dimension instead of the covering dimension of the space \cite{cao2020provably}.  Extending this result to model-based algorithms will be more technical, due to requiring improved concentration guarantees for the transition kernel.  Most model-based algorithms require showing \textit{uniform concentration}, in particular that the estimate of the transition kernel concentrates well when taking expectation over any Lipschitz function.  Getting tighter bounds for model-based algorithms in continuous settings will require showing that the transition kernel is naturally estimated well in parts of the space that matter - as the state-visitation frequency is dependent on the policy used.  In \cref{sec:main_proof} we discuss the transition concentration in more details.

\ifdefined\ARXIV

\else
\begin{figure*}[!t]
\centering
  \includegraphics[width=\columnwidth]{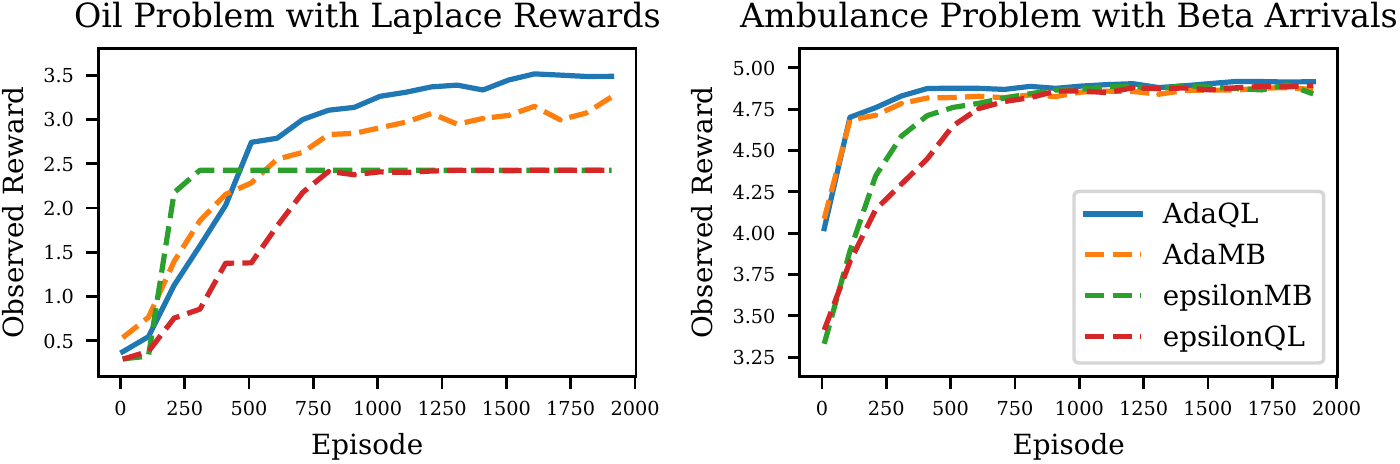}
\caption{\em Here we compare four algorithms: \textsc{epsQL} and \textsc{epsMB}, uniform discretization versions of model-free and model-based algorithms, and \textsc{adaQL} and \AdaMB, their adaptive discretization counterparts.  More simulation results are in \cref{sec:experiments}.}
\label{fig:result_main_paper}
\end{figure*}

\medskip

\noindent \textbf{Proof Sketch}:  The high level proof is divided into three sections.  First we show \textit{concentration and clean-events}, under which our estimates $\rbar{}{}$ and $\Tbar{}{}$ constitute upper bounds on the relevant quantities (\cref{app:concentration}).  Afterwards, we show a \emph{regret decomposition}, which relates the difference between the estimated value and the value accumulated by the algorithm to the bonus terms (\cref{app:decomp}).  Lastly, we use an LP-based argument to bound the \emph{worst-case size of the partition} and the \emph{sum of the bonus terms} which is used for the final regret bound (\cref{app:lp_bound}).  The full proof sketch is in \cref{sec:main_proof}.

\fi\medskip

\subsection{Policy-Identification Guarantees}

We can also adapt the algorithm to give sample complexity guarantees on learning a policy of a desired quality.  We use the PAC guarantee framework for learning RL policies \cite{watkins1989learning}.  Under this setting we assume that in each episode $k \in [K]$ the agent receives an initial state $X_1^k$ drawn from some fixed distribution, and try to find the minimum number of episodes needed to find a near-optimal policy with high probability.

Following similar arguments as in \cite{jin_2018, Sinclair_2019} it is straightforward to show that
\begin{theorem}
After running \AdaMB with a number of episodes
\begin{align*}
    K = \begin{cases}
        & \tilde{O}\left(\frac{LH^{1+\frac{1}{d+1}}}{\delta \epsilon}\right)^{d+d_\S} \quad d_\S > 2\\
        & \tilde{O}\left(\frac{LH^{1+\frac{1}{d+1}}}{\delta \epsilon}\right)^{d+d_\S+2} \quad d_S \leq 2
        \end{cases}
\end{align*}
consider a policy $\pi$ chosen uniformly at random from $\pi_1, \ldots, \pi_k$.  Then for an initial state $X$ drawn from the starting distribution, with probability at least $1 - \delta$ the policy $\pi$ obeys $$V_1^\star(X) - V_1^\pi(X) \leq \epsilon.$$
\end{theorem}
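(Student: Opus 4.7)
The plan is to derive this sample-complexity statement as a direct corollary of the worst-case regret bound (Theorem~4.1) via a standard online-to-batch conversion, together with Markov's inequality to convert the expected suboptimality into a high-probability guarantee.

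First I would invoke Theorem~4.1: since the starting states $X_1^k$ are now drawn i.i.d.\ from a fixed distribution $\mathcal{D}$ (rather than chosen adversarially), the regret bound still applies, and with probability at least $1-\delta/2$ we have
\begin{align*}
\sum_{k=1}^K \bigl(V_1^\star(X_1^k) - V_1^{\pi^k}(X_1^k)\bigr) \;\lesssim\; \begin{cases} L H^{1+\frac{1}{d+1}} K^{\frac{d+d_\S-1}{d+d_\S}} & d_\S>2,\\[2pt] L H^{1+\frac{1}{d+1}} K^{\frac{d+d_\S+1}{d+d_\S+2}} & d_\S\le 2.\end{cases}
\end{align*}
Dividing by $K$ and taking expectation over $\pi$ drawn uniformly from $\{\pi^1,\dots,\pi^K\}$ and over $X\sim\mathcal{D}$ gives
\begin{align*}
\mathbb{E}_{\pi,X}\bigl[V_1^\star(X) - V_1^\pi(X)\bigr] \;=\; \frac{1}{K}\sum_{k=1}^K \mathbb{E}_{X\sim\mathcal{D}}\bigl[V_1^\star(X) - V_1^{\pi^k}(X)\bigr] \;=\; \frac{R(K)}{K},
\end{align*}
using that $X_1^k \sim \mathcal{D}$ independently of $\pi^k$ (which depends only on data from episodes $1,\dots,k-1$).

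Next I would convert this expected suboptimality into a high-probability bound by applying Markov's inequality to the nonnegative random variable $V_1^\star(X) - V_1^\pi(X)$, which yields
\begin{align*}
\Pr\bigl[V_1^\star(X) - V_1^\pi(X) > \epsilon\bigr] \;\le\; \frac{R(K)/K}{\epsilon}.
\end{align*}
Setting this to be at most $\delta/2$ requires $R(K)/K \lesssim \delta\epsilon$, i.e.\ $K^{-1/(d+d_\S)} \lesssim \delta\epsilon/(LH^{1+1/(d+1)})$ in the case $d_\S>2$, and analogously $K^{-1/(d+d_\S+2)}$ in the case $d_\S\le 2$. Solving for $K$ yields the stated episode counts, and a union bound over the two failure events (regret bound and Markov) gives total failure probability at most $\delta$ after rescaling the internal $\delta/2$ terms appropriately (absorbed into the $\tilde O$ notation).

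The only delicate step is separating the randomness cleanly: the high-probability regret bound conditions on a good event $\mathcal{E}$ of the algorithm's internal randomness and data, while the Markov step is over the independent random draws of $\pi$ and $X$. Since Theorem~4.1 holds uniformly over the (now random) starting sequence, one can first condition on $\mathcal{E}$ so that the per-episode regrets are deterministically bounded, and then apply Markov conditional on $\mathcal{E}$; the final $1-\delta$ guarantee follows from a union bound. Everything else is a mechanical inversion of the regret rate to solve for the required $K$, and should mirror the argument of Jin et al.~\cite{jin_2018} and Sinclair et al.~\cite{Sinclair_2019}.
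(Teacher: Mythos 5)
Your proposal is correct and is essentially the argument the paper intends: the paper itself gives no separate proof, deferring to the standard online-to-batch conversion of \cite{jin_2018,Sinclair_2019}, which is exactly what you carry out (regret bound, divide by $K$, Markov's inequality over the uniformly random choice of $\pi$ and the draw of $X$, union bound on the failure events, then invert the rate to solve for $K$). Your inversion of the $K^{-1/(d+d_\S)}$ and $K^{-1/(d+d_\S+2)}$ rates matches the stated episode counts, so no gap.
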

    \section{Proof Sketch}
\label{sec:main_proof}

The high level proof is divided into three sections.  First, we show \textit{concentration and clean-events}, under which our estimates constitute upper bounds on the relevant quantities.  Afterwards, we show a \textit{regret decomposition}, which relates the difference between the estimated value and the value accumulated by the algorithm with the bonus terms.  Lastly, we use an LP-based argument to bound the \textit{worst-case size of the partition} and the \textit{sum of the bonus terms} which is used for the final regret bound.  We discuss each of them briefly before giving more technical details.  As the final regret-bound is technical \srsedit{and mostly involves algebra and combining terms}, its derivation is deferred to \cref{sec:finalregret}.

\subsection{Concentration and Clean Events (\cref{app:concentration})}

\AdaMB maintains estimates $\rbar{h}{k}(B)$ and $\Tbar{h}{k}(\cdot \mid B)$ of the unknown rewards and transitions of the underlying MDP.  In order to ensure that the one-step value iteration update in Equation~\ref{eq:q_update} concentrates we need to verify that these estimates provide good approximations to their true quantities.  In particular, applying Azuma-Hoeffding's inequality shows that:

\begin{lemma}
With probability at least $1 - \delta$ we have that for any $h, k \in [H] \times [K]$ and ball $B \in \P_h^k$, and any $(x,a) \in B$, 
\begin{align*}
    \left|\rbar{h}{k}(B) - r_h(x,a) \right| \leq \rbonus{h}{k}(B).
\end{align*}
\end{lemma}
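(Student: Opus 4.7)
My plan is to decompose $\rbar{h}{k}(B) - r_h(x,a)$ into a mean-zero martingale term plus a bias term induced by the Lipschitz rewards, and control each separately. Writing $\rbar{h}{k}(B) = \frac{1}{N}\sum_{i=1}^{N} y_i$ where $N = \sum_{B' \supseteq B} n_h^k(B')$ is the aggregate sample count and each $y_i$ is a reward observed when some $(x_i, a_i) \in B'_i$ was played while an ancestor $B'_i \supseteq B$ was active, I would write
\begin{align*}
\rbar{h}{k}(B) - r_h(x,a) = \underbrace{\frac{1}{N}\sum_{i=1}^{N}\bigl(y_i - r_h(x_i,a_i)\bigr)}_{\text{noise}} \; + \; \underbrace{\frac{1}{N}\sum_{i=1}^{N}\bigl(r_h(x_i,a_i) - r_h(x,a)\bigr)}_{\text{bias}}.
\end{align*}

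\textbf{Noise term via martingale concentration.} Since ball selection depends on past observations the samples are not i.i.d., but $\{y_i - r_h(x_i,a_i)\}$ is a bounded martingale difference sequence adapted to the natural filtration $\{\F_i\}$ (increments lie in $[-1,1]$ because $y_i \in [0,1]$). Azuma-Hoeffding, applied for a fixed candidate ball $B$ and fixed $N$, yields deviation at most $\sqrt{8 \log(2/\delta')/N}$ with probability $\geq 1-\delta'$. A union bound over all balls ever created (at most $HK$) and all values of $N$ (at most $K$) taking $\delta' = \delta/(HK^2)$ then gives $|\text{noise}| \leq \sqrt{8\log(2HK^2/\delta)/N}$ uniformly, matching the first term of $\rbonus{h}{k}(B)$.

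\textbf{Bias term via Lipschitz rewards and the splitting rule.} For each sample, $(x_i, a_i) \in B'_i \supseteq B$, so $\D((x_i,a_i),(x,a)) \leq \D(B'_i)$ and by \cref{assumption:Lipschitz}, $|r_h(x_i,a_i) - r_h(x,a)| \leq L_r \D(B'_i)$. Grouping samples by the level of the ancestor they came from: a proper ancestor $B_j$ of $B$ at level $j < \lev{B}$ contributed at most $\nplus{B_j} - 1 \leq \phi 2^{d_\S j}$ samples before being split (by the splitting rule), each with Lipschitz bias at most $L_r 2^{-j}$; $B$ itself contributes its $n_h^k(B)$ samples with bias at most $L_r 2^{-\lev{B}} = L_r \D(B)$. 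Denoting by $n_j$ the total contribution from level $j$, I would bound
\begin{align*}
|\text{bias}| \;\leq\; \frac{L_r}{N}\sum_{j=0}^{\lev{B}} n_j\, 2^{-j},
\end{align*}
and then exploit that both $n_j$ and $N$ scale geometrically: the per-level contribution to the numerator is at most $L_r\phi 2^{(d_\S-1)j}$, while $N$ itself is at least the largest $n_j$, so dividing collapses the geometric sum into $O(L_r \D(B))$ with a constant that the paper absorbs into the factor $4$.

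\textbf{Combining and main obstacle.} Summing the two bounds matches $\rbonus{h}{k}(B)$ exactly. The noise step and union bound are routine once one writes down the correct filtration. The only delicate point is the bias calculation: one must verify that the splitting threshold $\nplus{B_j} = \phi 2^{d_\S j}$ is appropriately matched against the per-level Lipschitz bias $L_r 2^{-j}$ so that contributions from coarser ancestors do not dominate, and in particular that the ratio telescopes back to $O(L_r \D(B))$ rather than growing with $\lev{B}$. This is precisely the design principle behind the splitting schedule, and is what allows the single ``diameter'' term $4 L_r \D(B)$ to absorb all ancestor biases simultaneously.
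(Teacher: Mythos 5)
Your proposal is essentially the paper's proof: the same noise/bias decomposition, Azuma--Hoeffding with a union bound for the martingale noise, and a geometric collapse of the ancestor Lipschitz biases via the splitting schedule, which is exactly the content of the paper's \cref{lem:sum_ancestors} giving $\frac{\sum_{B'\supseteq B}\D(B')n_h^k(B')}{\sum_{B'\supseteq B}n_h^k(B')}\leq 4\D(B)$. Two small points of divergence: the splitting threshold is $\nplus{B}=\phi 2^{\gam\lev{B}}$ with $\gam=d_\S$ only when $d_\S>2$ (and $\gam=d_\S+2$ when $d_\S\leq 2$), and the telescoping to $4L_r\D(B)$ needs $\gam\geq 2$ --- with your quoted $\phi 2^{d_\S j}$ threshold the bias bound would pick up an extra $\lev{B}$ factor when $d_\S\leq 1$, so the correct exponent must be used. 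Also, the paper avoids your union bound over all balls (whose count is really up to $2^dHK$, slightly inflating the log) by observing that only one ball's estimate changes per $(h,k)$ pair and that children inherit the parent's good event, which is how it obtains the exact $\log(2HK^2/\delta)$ constant in $\rbonus{h}{k}(B)$.
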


The next step is ensuring concentration of the transition estimates $\Tbar{h}{k}(\cdot \mid B)$.  As the algorithm takes expectations over Lipschitz functions with respect to these distributions, we use recent work on Wasserstein distance concentration.  This is in contrast to previous work that requires using a covering argument on the space of value functions in order to show concentration guarantees for the transition kernel \cite{dann2015sample,jin2019provably,domingues2020regret}.  In particular, we show the following:
\begin{lemma}
With probability at least $1 - 2 \delta$ we have that for any $h,k \in [H] \times [K]$ and ball $B \in \P_h^k$ with $(x,a) \in B$ that 
\begin{align*}
    d_W(\Tbar{h}{k}(\cdot \mid B), T_h(\cdot \mid x,a)) \leq \frac{1}{L_V}\tbonus{h}{k}(B)
\end{align*}
\end{lemma}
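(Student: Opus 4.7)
The plan is a triangle-inequality decomposition of $d_W\big(\Tbar{h}{k}(\cdot\mid B),T_h(\cdot\mid x,a)\big)$ into three pieces: a deterministic discretization error from collapsing mass onto centers of $\dyad{\lev{B}}$-cells, a Lipschitz bias term because the aggregated samples were drawn at locations $(x'_i,a'_i)\neq(x,a)$, and a sampling-concentration term that is the heart of the argument. Let $n=\sum_{B'\supseteq B}n_h^k(B')$ be the pooled sample count, let the $i$-th sample lie in some ancestor $B'_i\supseteq B$ with realized next-state $Y_i\sim T_h(\cdot\mid x'_i,a'_i)$, and introduce the continuous empirical measure $\hat T^{\mathrm{cont}}=\tfrac{1}{n}\sum_i\delta_{Y_i}$ and its conditional mean $\bar T^{\mathrm{true}}=\tfrac{1}{n}\sum_i T_h(\cdot\mid x'_i,a'_i)$. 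Then
\[d_W\big(\Tbar{h}{k}(\cdot\mid B),T_h(\cdot\mid x,a)\big) \leq d_W\big(\Tbar{h}{k},\hat T^{\mathrm{cont}}\big)+d_W\big(\hat T^{\mathrm{cont}},\bar T^{\mathrm{true}}\big)+d_W\big(\bar T^{\mathrm{true}},T_h(\cdot\mid x,a)\big).\]

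For the first (discretization) term, $\Tbar{h}{k}(\cdot\mid B)$ is obtained from $\hat T^{\mathrm{cont}}$ by moving each Dirac to the center of its containing $\dyad{\lev{B}}$-cell (and, for samples contributed by an ancestor $B'$, implicitly spreading the coarser-cell mass uniformly across its sub-cells). Since each cell has diameter at most $\D(B)$ or $\D(B')$, the explicit transport plan bounds this contribution by a weighted average of $\D(B')$ over ancestors, which is handled jointly with the next term.

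For the third (Lipschitz-bias) term, convexity of the Wasserstein distance together with \cref{assumption:Lipschitz} gives
\[d_W\big(\bar T^{\mathrm{true}},T_h(\cdot\mid x,a)\big)\leq \frac{L_T}{n}\sum_i\D\big((x'_i,a'_i),(x,a)\big)\leq \frac{L_T}{n}\sum_i\D(B'_i).\]
Using the splitting rule $n_h^k(B')+1\geq \nplus{B'}=\phi 2^{\gamma\lev{B'}}$ to bound the number of samples harvested at each ancestor level, together with $\D(B')=2^{-\lev{B'}}$, the sum over $\lev{B'}$ is a geometric series that collapses to $O(L_T\D(B))$. Combined with the previous paragraph this produces the $(5L_T+4)\D(B)$ (respectively $(5L_T+6)\D(B)$) term appearing in $\tbonus{h}{k}(B)/L_V$.

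The hard part will be the middle term $d_W(\hat T^{\mathrm{cont}},\bar T^{\mathrm{true}})$. Conditioned on the sampling locations $\{(x'_i,a'_i)\}$, the $Y_i$ are independent (but not identically distributed) draws with conditional mean $\bar T^{\mathrm{true}}$, so I need a high-probability Wasserstein concentration of an empirical measure of independent samples to its mean on a $d_\S$-dimensional compact metric space. I will handle the two regimes separately. For $d_\S>2$, I will invoke a Fournier--Guillin-type bound to obtain $\mathbb{E}[d_W(\hat T^{\mathrm{cont}},\bar T^{\mathrm{true}})]\lesssim n^{-1/d_\S}$ and add a McDiarmid fluctuation of order $\sqrt{\log(HK^2/\delta)/n}$, which is valid because replacing any single $Y_i$ changes $d_W$ by at most $\D(\S)/n\leq 1/n$. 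For $d_\S\leq 2$ this rate is too crude, so I will instead exploit that after discretization both measures live on the finite alphabet $\dyad{\lev{B}}$ of cardinality at most $2^{d_\S\lev{B}}$, bound $d_W$ by $\D(B)$ times total variation, and apply an $\ell_1$/multinomial concentration on this finite alphabet to get the $\sqrt{2^{d_\S\lev{B}}/n}$ term. A union bound over $h,k,$ and the (polynomially many) balls ever created at per-event failure probability $\delta/(HK^2)$ produces the $\log(HK^2/\delta)$ factor, and collecting constants across all three terms yields $\tbonus{h}{k}(B)/L_V$ in both cases.
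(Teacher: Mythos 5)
Your three-term decomposition and the outer two terms are essentially the paper's Steps One, Two and Four: the discretization/transport bound over ancestors and the Lipschitz bias, with the geometric-series collapse to $O(\D(B))$ supplied by the splitting rule (this is exactly \cref{lem:sum_ancestors}). The genuine gap is in the middle term $d_W(\hat T^{\mathrm{cont}},\bar T^{\mathrm{true}})$. The samples pooled into $B$ are collected \emph{adaptively}: whether $B$ or an ancestor is selected in a later episode, and at which location, depends on the previously realized next states $Y_j$. Hence "conditioned on the sampling locations the $Y_i$ are independent draws from $T_h(\cdot\mid x_i',a_i')$" is false — conditioning on future locations tilts the law of earlier $Y_i$'s, and even the count $n$ is a random stopping-type quantity. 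Fournier--Guillin/Weed--Bach bounds, the multinomial $\ell_1$ bound, and McDiarmid's inequality are all stated for i.i.d.\ (or at least independent) samples from a fixed measure, so none of them can be invoked as a black box against the random, data-dependent mixture $\bar T^{\mathrm{true}}=\tfrac1n\sum_i T_h(\cdot\mid x_i',a_i')$. This is precisely the obstacle the paper's proof is built to avoid: it constructs, via the Wasserstein-optimal coupling $\xi'$, "ghost" samples $Y_i$ whose joint law is verified to be exactly i.i.d.\ from the fixed center distribution $T_h(\cdot\mid\tilde x(B),\tilde a(B))$, controls the total coupling cost by Azuma--Hoeffding (this is where the Lipschitz bias enters, as a martingale rather than via convexity), and only then applies the i.i.d.\ results of Weed and Bach (with an additional snapping argument for $d_\S\le 2$). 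To make your route rigorous you would need either this coupling trick or a genuinely sequential/martingale version of empirical-Wasserstein concentration (e.g.\ multiscale chaining with Freedman/Azuma at each dyadic scale), which you do not supply.

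Two smaller points: in the $d_\S\le 2$ case the factor multiplying total variation is the diameter of the next-state space $\S$ (at most $1$), not $\D(B)$ — the desired rate $\sqrt{2^{d_\S\lev{B}}/n}$ still comes out, but the inequality $d_W\le\D(B)\cdot TV$ as stated is incorrect since $\Tbar{h}{k}(\cdot\mid B)$ and the comparison measure are supported on all of $\dyad{\lev{B}}$, not on $B$. Also, the high-probability statement must be union-bounded over the $K$ possible values of the random sample count $n$ (the paper does this explicitly); a union over balls is actually unnecessary since only one ball's estimate changes per step--episode pair.
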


The main proof uses recent work on bounding the Wasserstein distance between an empirical measure and the true measure~\cite{weed2019sharp}.  For the case when $d_\S > 2$ the concentration inequality holds up to a level of $n^{-\frac{1}{d_\S}}$ with high probability.  We use this result by chaining the Wasserstein distance of various measures together.  Unfortunately, the scaling does not hold for the case when $d_\S \leq 2$.  In this situation we use the fact that $\Tbar{h}{k}(\cdot \mid B)$ is constructed as an empirical measure with finite support $|\dyad{\lev{B}}| = 2^{d_\S \lev{B}}$.  Although $T_h(\cdot \mid x,a)$ is a continuous distribution, we consider ``snapped'' versions of the distributions and repeat a similar argument.  This allows us to get the scaling of $\sqrt{2^{d_\S \lev{B}} / n}$ seen in the definition of $\tbonus{h}{k}(B)$.  The result from \cite{weed2019sharp} has corresponding lower bounds, showing that in the worst case scaling with respect to $d_\S$ is inevitable.  As the transition bonus terms leads to the dominating terms in the regret bounds, improving on our result necessitates creating concentration intervals around the value function instead of the model~\cite{ayoub2020model}.

The Wasserstein concentration established in the previous lemmas allows us to forgo showing uniform convergence of the transition kernels over all value functions.  Indeed, the variational definition of the Wasserstein metric between measures is 
$d_W(\mu, \nu) = \sup_{f} \int f d(\mu - \nu)$
where the supremum is taken over all $1$-Lipschitz functions.  Noting that $V_h^\star$ and $\Vhat{h}{k}(\cdot)$ are constructed to be $L_V$-Lipschitz functions we therefore get that for $V = V_h^\star$ or $V = \Vhat{h}{k}(\cdot)$:
\[ \E_{X \sim \Tbar{h}{k}(\cdot \mid B)}[V(X)] - \E_{X \sim T_h(\cdot \mid x,a)}[V(X)] \leq L_V d_W(\Tbar{h}{k}(\cdot \mid B), T_h(\cdot \mid x,a)) \leq \tbonus{h}{k}(B).\]

Getting improved bounds for model-based algorithms in continuous spaces will necessitate showing that the algorithm does not need to show uniform concentration over all value functions or all Lipschitz functions, but rather a subset that is constructed by the algorithm.

These concentration bounds allow us to now demonstrate a principle of \textit{optimism} for our value-function estimates. Formally, we show that conditioned on the concentration bounds on the rewards and transitions being valid, the estimates for $Q_h^\star$ and $V_h^\star$ constructed by \AdaMB are indeed upper bounds for the true quantities.
This follows a common approach for obtaining regret guarantees for reinforcement learning algorithms~\cite{simchowitz2019}. 
\begin{lemma}
With probability at least $1 - 3 \delta$, the following bounds are all simultaneously true for all $k,h\in[K]\times[H]$, and any partition $\P_h^k$
\begin{align*}
    \Qhat{h}{k}(B) & \geq Q_h^\star(x,a)\qquad \text{for all } B \in \P_h^k \text{, and } (x,a) \in B\\
    \Vtilde{h}{k}(A) & \geq V_h^\star(x)\qquad\;\; \text{ for all } A \in \S(\P_h^k), \text{ and } x \in A\\
    \Vhat{h}{k}(x) & \geq V_h^\star(x)\qquad\;\, \text{ for all } x \in \S
\end{align*}
\end{lemma}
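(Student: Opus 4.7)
The plan is to condition on the two concentration events from the preceding lemmas, which jointly hold with probability at least $1 - 3\delta$ by a union bound, and then prove the three inequalities by induction on the episode index $k$. The base case $k = 0$ is immediate: the initialization $\Qhat{h}{0}(\cdot) = \Vtilde{h}{0}(\cdot) = \Vhat{h}{0}(\cdot) = H - h + 1$ dominates $Q_h^\star$ and $V_h^\star$ pointwise since per-step rewards lie in $[0,1]$. For the inductive step I assume all three claims hold at episode $k - 1$ and establish them for episode $k$ in the order $\Qhat{}{}$, then $\Vtilde{}{}$, then $\Vhat{}{}$.

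For $\Qhat{h}{k}(B) \geq Q_h^\star(x,a)$ with $(x,a) \in B$: when $h = H$, the identity $Q_H^\star(x,a) = r_H(x,a)$ combined with the reward concentration lemma directly gives $\Qhat{H}{k}(B) = \rbar{H}{k}(B) + \rbonus{H}{k}(B) \geq r_H(x,a)$. For $h < H$, the reward portion is handled identically, while for the transition portion I first apply the inductive hypothesis $\Vhat{h+1}{k-1} \geq V_{h+1}^\star$ pointwise under the expectation and then invoke the Wasserstein concentration lemma via Kantorovich--Rubinstein duality on the $L_V$-Lipschitz function $V_{h+1}^\star$ (Assumption~\ref{assumption:Lipschitz}) to obtain
\[
\E_{A \sim \Tbar{h}{k}(\cdot \mid B)}\bigl[\Vhat{h+1}{k-1}(A)\bigr] + \tbonus{h}{k}(B) \;\geq\; \E_{X' \sim T_h(\cdot \mid x,a)}\bigl[V_{h+1}^\star(X')\bigr].
\]
Summing the reward and transition bounds yields the desired $Q$-optimism inequality.

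For $\Vtilde{h}{k}(A) \geq V_h^\star(x)$ with $x \in A \in \S(\P_h^k)$: because $A$ is a minimal state cell and the partition is dyadic, the unique state-action ball $B^\star \in \P_h^k$ containing $(x, \pi^\star_h(x))$ must satisfy $\S(B^\star) \supseteq A$ (otherwise $\S(B^\star) \subsetneq A$ would contradict the minimality condition defining $A$). The just-established $Q$-optimism then yields $\max_{B: \S(B) \supseteq A} \Qhat{h}{k}(B) \geq \Qhat{h}{k}(B^\star) \geq Q_h^\star(x, \pi^\star_h(x)) = V_h^\star(x)$, while $\Vtilde{h}{k-1}(A) \geq V_h^\star(x)$ follows from the inductive hypothesis applied to the parent cell in $\S(\P_h^{k-1})$ together with the parent-inheritance rule in \textproc{Refine Partition}; the min preserves both lower bounds. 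Finally, for $\Vhat{h}{k}(x) \geq V_h^\star(x)$, for any $A' \in \S(\P_h^k)$ the just-established statement gives $\Vtilde{h}{k}(A') \geq V_h^\star(\tilde{x}(A'))$ and $L_V$-Lipschitz continuity of $V_h^\star$ gives $V_h^\star(x) \leq V_h^\star(\tilde{x}(A')) + L_V \D_\S(x, \tilde{x}(A'))$; taking the minimum over $A'$ concludes.

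The main obstacle lies in handling the transition term: Wasserstein duality must be applied to $V_{h+1}^\star$ (which is $L_V$-Lipschitz by assumption) rather than to $\Vhat{h+1}{k-1}$, so the two-step rearrangement above --- replacing $\Vhat{h+1}{k-1}$ by $V_{h+1}^\star$ via the inductive hypothesis first, and only then bounding the expectation gap using Lipschitzness --- is essential. This avoids having to cover the (data-dependent) space of possible $\Vhat{}{}$ functions, which is how prior model-based analyses typically proceed and is precisely the simplification the authors highlighted in the concentration section. A secondary bookkeeping concern is ensuring $\Vtilde{h}{k-1}$ is well-defined on newly-created cells from mid-episode refinement, but this is handled cleanly by the parent-inheritance convention.
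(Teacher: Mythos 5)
Your proposal is correct and follows essentially the same route as the paper's proof: condition on the reward and Wasserstein concentration events, induct on the episode index, establish $Q$-optimism by first invoking the inductive hypothesis $\Vhat{h+1}{k-1} \geq V_{h+1}^\star$ pointwise and then applying Kantorovich--Rubinstein duality to the $L_V$-Lipschitz $V_{h+1}^\star$, and then derive optimism of $\Vtilde{h}{k}$ (via the dyadic-nesting argument for $\S(B^\star) \supseteq A$ and the inheritance convention) and of $\Vhat{h}{k}$ (via Lipschitzness of $V_h^\star$). The only cosmetic difference is that you treat $h = H$ directly rather than starting a backward induction at $h = H+1$, which is immaterial since the one-step update at step $h$ only depends on episode-$(k-1)$ quantities at step $h+1$.
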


\subsection{Regret Decomposition (\cref{app:decomp})}

Similar to \cite{efroni2019tight}, we use one step updates for $\Qhat{h}{k}(\cdot)$ and $\Vhat{h}{k}(\cdot)$. We thus use similar ideas to obtain the final regret decomposition, which then bounds the final regret of the algorithm by a function of the size of the partition and the sum of the bonus terms used in constructing the high probability estimates.  In particular, by expanding the update rules on $\Qhat{h}{k}(B)$ and $\Vhat{h}{k}(x)$ we can show:

\begin{lemma}
	The expected regret for \AdaMB can be decomposed as
	\begin{align*}
	\Exp{R(K)} & \lesssim \sum_{k=1}^K \sum_{h=1}^H \Exp{\Vtilde{h}{k-1}(\S(\P_h^{k-1},X_h^{k})) - \Vtilde{h}{k}(\S(\P_h^{k},X_h^k))} \\
	& + \sum_{h=1}^H \sum_{k=1}^K \Exp{2\rbonus{h}{k}(B_h^k)} + \sum_{h=1}^H \sum_{k=1}^K \Exp{2\tbonus{h}{k}(B_h^k)} + \sum_{k=1}^K \sum_{h=1}^H L_V \Exp{\D(B_h^k)}.
	\end{align*}
	where $\S(\P_h^{k-1}, X_h^k)$ is the region in $\S(\P_h^{k-1})$ containing the point $X_h^k$.
\end{lemma}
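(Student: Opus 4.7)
The plan is to follow the standard optimism-based regret decomposition, tailored to the adaptive partition and one-step value iteration. First, I would invoke the optimism lemma $\Vhat{h}{k-1}(x) \geq V_h^\star(x)$ to replace $V_1^\star(X_1^k)$ by $\Vhat{1}{k-1}(X_1^k)$, reducing the analysis to bounding $\Vhat{h}{k-1}(X_h^k) - V_h^{\pi^k}(X_h^k)$ recursively in $h$. For each $k,h$, I would expand $\Vhat{h}{k-1}(X_h^k)$ using the Lipschitz extension in \cref{eq:def-V} to obtain $\Vhat{h}{k-1}(X_h^k) \leq \Vtilde{h}{k-1}(\S(\P_h^{k-1},X_h^k)) + L_V \D(B_h^k)$, where the trailing $L_V\D(B_h^k)$ accounts for the distance from $X_h^k$ to the center of its enclosing state-partition element and contributes to the final $\sum_{k,h}L_V\,\E[\D(B_h^k)]$ term. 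The greedy selection rule together with the $\max$ in \cref{eq:v_tilde_update} then yields $\Vtilde{h}{k-1}(\S(\P_h^{k-1},X_h^k)) \leq \Qhat{h}{k-1}(B_h^k)$, isolating the single ball chosen by the algorithm.

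The second step is to substitute the one-step update of \cref{eq:q_update} for $\Qhat{h}{k-1}(B_h^k)$ and subtract the Bellman expansion of $Q_h^{\pi^k}(X_h^k, A_h^k)$. Reward concentration gives $\rbar{h}{k-1}(B_h^k) - r_h(X_h^k,A_h^k) \leq \rbonus{h}{k-1}(B_h^k)$, which together with the already-present $\rbonus{h}{k-1}(B_h^k)$ in $\Qhat{h}{k-1}$ produces the $2\,\rbonus{h}{k}(B_h^k)$ contribution. For the transition term, Lipschitzness of $\Vhat{h+1}{k-1}$ with constant $L_V$ (guaranteed by construction in \cref{eq:def-V}) combined with the variational characterization of the $1$-Wasserstein distance gives $\bigl|\,\E_{x'\sim \Tbar{h}{k-1}(\cdot \mid B_h^k)}[\Vhat{h+1}{k-1}(x')] - \E_{x'\sim T_h(\cdot\mid X_h^k,A_h^k)}[\Vhat{h+1}{k-1}(x')]\,\bigr| \leq \tbonus{h}{k-1}(B_h^k)$, which combined with the explicit $\tbonus{h}{k-1}(B_h^k)$ already inside $\Qhat{h}{k-1}$ yields the $2\,\tbonus{h}{k}(B_h^k)$ contribution.

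The third step is to recurse on the residual $\E_{x'\sim T_h(\cdot\mid X_h^k,A_h^k)}\bigl[\Vhat{h+1}{k-1}(x') - V_{h+1}^{\pi^k}(x')\bigr]$. I would replace the expectation by the realized next state $X_{h+1}^k$; the difference is a zero-mean martingale, and a standard Azuma-type argument absorbs its deviation into the polylog factors hidden in $\lesssim$. To line up with the next level of the recursion, which wants to start from $\Vtilde{h+1}{k}(\S(\P_{h+1}^k,X_{h+1}^k))$ (plus another $L_V\D(B_{h+1}^k)$ from the Lipschitz extension), I would insert and telescope $\Vtilde{h+1}{k-1}(\S(\P_{h+1}^{k-1},X_{h+1}^k))$, producing precisely the $\sum_{k,h}\E\bigl[\Vtilde{h}{k-1}(\S(\P_h^{k-1},X_h^k))-\Vtilde{h}{k}(\S(\P_h^k,X_h^k))\bigr]$ gap term appearing in the lemma. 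Iterating this decomposition over $h = 1, \ldots, H$ with $V_{H+1}^{\pi^k} \equiv 0$ and summing over $k$ collects the four claimed contributions.

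The main obstacle I anticipate is careful bookkeeping around the three partitions in play: the pre-episode partition $\P_h^{k-1}$ used to select $B_h^k$, the post-episode partition $\P_h^k$ obtained after possibly refining $B_h^k$, and the induced state partition $\S(\P_h^k)$ on which $\Vtilde{h}{k}$ is defined. The monotonicity $\Vtilde{h}{k}(A) \leq \Vtilde{h}{k-1}(A)$ is immediate from the $\min$ in \cref{eq:v_tilde_update}, but relating $\Vtilde{h}{k-1}(\S(\P_h^{k-1},X_h^k))$ to $\Vtilde{h}{k}(\S(\P_h^k,X_h^k))$ across a refinement of $B_h^k$ requires exploiting that each child ball inherits its parent's aggregate counts and estimates, so that the Qhat inequality used in Step 1 survives the split. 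A secondary subtlety is ensuring that the Lipschitz-extended $\Vhat{h+1}{k-1}$ remains $L_V$-Lipschitz with the same constant from \cref{assumption:Lipschitz} so that the Wasserstein-based transition concentration applies with the constants appearing in $\tbonus{h}{k}(\cdot)$; this is built into \cref{eq:def-V} but must be tracked explicitly when passing from $\Vtilde{h+1}{k-1}$ to $\Vhat{h+1}{k-1}$ inside the Bellman recursion.
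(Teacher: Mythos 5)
Your overall route is the one the paper takes: optimism at $h=1$, the Lipschitz extension \cref{eq:def-V} to pass from $\Vhat{h}{k-1}(X_h^k)$ to $\Vtilde{h}{k-1}(\S(\P_h^{k-1},X_h^k)) + L_V\D(B_h^k)$, expansion of the optimistic $Q$-estimate of the greedily selected ball against the Bellman expansion of $V^{\pi^k}$, reward and Wasserstein concentration to produce the doubled bonuses, and a telescoping gap term controlled by the monotonicity of $\Vtilde{h}{k}$; the paper packages the per-step expansion as \cref{lemma:recursive} and then takes conditional expectations with the tower rule, exactly as you sketch.

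The one concrete discrepancy is the episode index at which you expand. You substitute \cref{eq:q_update} into $\Qhat{h}{k-1}(B_h^k)$, which contains $\rbar{h}{k-1}$, $\rbonus{h}{k-1}$, $\Tbar{h}{k-1}$, $\tbonus{h}{k-1}$ and $\Vhat{h+1}{k-2}$; your chain therefore produces bonuses at the pre-episode counts and gap terms of the form $\Vtilde{h}{k-2}-\Vtilde{h}{k-1}$, not the $k$-indexed quantities in the statement, and the claim that this ``produces the $2\rbonus{h}{k}(B_h^k)$ contribution'' goes the wrong way, since the bonuses are decreasing in the counts. In addition, \cref{eq:q_update} has not been instantiated for balls with no prior (ancestral) samples — e.g.\ at $k=1$ — where $\Qhat{h}{k-1}$ still sits at its initialization $H-h+1$, so the substitution is simply unavailable there. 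All of this is repairable: bound untouched balls trivially by $H$, observe that the aggregated counts at $k-1$ and $k$ differ by exactly one so the bonuses agree up to a constant factor, and note the shifted gap telescopes identically under \cref{lem:decreasing_process}. The paper avoids the patching by first inserting $\Vtilde{h}{k-1}-\Vtilde{h}{k}$ and then bounding $\Vtilde{h}{k}(\S(\P_h^{k},X_h^k)) \le \Qhat{h}{k}(B_h^k)$, i.e.\ expanding the post-episode estimate, for which $n_h^k(B_h^k)\ge 1$ and the bonuses come out at index $k$ exactly as stated. A final small point: since the lemma bounds the \emph{expected} regret, no Azuma-type step is needed when passing from $\E_{x'\sim T_h(\cdot\mid X_h^k,A_h^k)}[\cdot]$ to the realized $X_{h+1}^k$; conditioning on $\F_{k-1}$ and the tower rule make that term vanish exactly, which is how the paper handles it (Azuma is only used later, in the high-probability version of the regret bound).
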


The first term in this expression arises from using one-step planning instead of full-step planning, and the rest due to the bias in the estimates for the reward and transitions.  Using the fact that the $\Vtilde{h}{k}$ are decreasing with respect to $k$ we can show that this term is upper bounded by the size of the partition.  Obtaining the final regret bound then relies on finding a bound on the size of the partition and the sum of bonus terms.

\subsection{Bounds on Size of Partition and Sums of Bonus Terms (\cref{app:lp_bound})}

We show technical lemmas that provide bounds on terms of the form $\sum_{k=1}^K \frac{1}{(n_h^k(B_h^k))^\alpha} $
almost surely based on the splitting rule used in the algorithm and the size of the resulting partition.  We believe that this is of independent interest as many optimistic regret decompositions involve bounding sums of bonus terms over a partition that arise from concentration inequalities.

We formulate these quantities as a linear program (LP) where the objective function is to maximize either the size of the partition or the sum of bonus terms associated to a valid partition (represented as a tree) constructed by the algorithm. The constraints follow from conditions on the number of samples required before a ball is split into subsequent children balls. To derive an upper bound on the value of the LP we find a tight dual feasible solution. This argument could be broadly useful and modified for problems with additional structures by including additional constraints into the LP.  In particular, we are able to show the following:
\begin{corollary}
For any $h\in[H]$, consider any sequence of partitions $\Pkh, k\in[K]$ induced under \AdaMB with splitting thresholds $\nplus{\ell}= \phi2^{\gam\ell}$.
Then, for any $h\in[H]$ we have:
\begin{itemize}
\item $|\P_h^k| \leq 4^d K^{\frac{d}{d+\gam}} \phi^{-\frac{d}{d+\gam}}$
\item For any $\alpha,\beta\geq 0$ s.t. $\alpha \leq 1$ and $\alpha\gamma-\beta\geq 1$, we have
    
\begin{align*}
\sum_{k=1}^K \frac{2^{\beta\lev{B_h^k}}}{\left(n_h^k(B_h^k)\right)^{\alpha}} = O\left(\phi^{\frac{-(d\alpha+\beta)}{d+\gam}} K^{\frac{d+(1-\alpha)\gam+\beta}{d+\gam}}\right)
\end{align*}

\item For any $\alpha,\beta\geq 0$ s.t. $\alpha \leq 1$ and $\alpha\gamma - \beta/\ell^{\star}\geq 1$ (where $\ell^{\star} =2+\frac{1}{d+\gam }\log_2\left(\frac{K}{\phi}\right)$), we have
\begin{align*}
\sum_{k = 1}^K \frac{\lev{B_h^k}^\beta}{\left(n_h^k(B_h^k)\right)^{\alpha}} = O\left(\phi^{\frac{-d\alpha}{d+\gam}} K^{\frac{d+(1-\alpha)\gam}{d+\gam}}\left(\log_2K\right)^{\beta}\right)
\end{align*}
\end{itemize}
\end{corollary}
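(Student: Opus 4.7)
The approach is an LP-based counting argument. Let $X_\ell = M_\ell + N_\ell$ denote the total number of balls at level $\ell$ that appear across the partitions $\{\P_h^k\}_{k \leq K}$, split into $M_\ell$ balls that have already been refined and $N_\ell$ active leaves in $\P_h^K$. Two structural constraints hold almost surely: the dyadic covering bound $X_\ell \leq 2^{d\ell}$, and the sample-budget bound $\sum_\ell M_\ell\,\phi\, 2^{\gamma \ell} \leq K$. The latter follows because the splitting rule forces every refined ball at level $\ell$ to absorb exactly $n_+(\ell) = \phi\, 2^{\gamma \ell}$ disjoint visits before splitting, and the total step-$h$ visits over the $K$ episodes is exactly $K$.

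For the partition-size bound I would use $|\P_h^k| = \sum_\ell N_\ell \leq 2^d \sum_\ell M_\ell + O(1)$ (every active ball is a child of some refined ancestor, aside from the root) and then maximize $\sum_\ell M_\ell$ subject to the two constraints above. Because the per-ball sample cost $\phi\, 2^{\gamma\ell}$ grows in $\ell$ while the covering cap $2^{d\ell}$ tightens at large $\ell$, the LP optimum fills $M_\ell = 2^{d\ell}$ for $\ell \leq \ell^\star := \tfrac{1}{d+\gamma}\log_2(K/\phi)$ and sets $M_\ell = 0$ beyond that; the geometric sum $\sum_{\ell \leq \ell^\star} 2^{d\ell} \lesssim 2^{d\ell^\star} = (K/\phi)^{d/(d+\gamma)}$ then yields the claim, with the $4^d$ prefactor absorbing the branching factor $2^d$ and the boundary contribution at $\ell^\star$.

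For the two bonus sums, I would first reduce each per-episode sum to a per-ball sum via $\sum_{i=1}^{n(B)} i^{-\alpha} = O(n(B)^{1-\alpha}/(1-\alpha))$ together with $n(B) \leq n_+(\lev{B}) = \phi\, 2^{\gamma\lev{B}}$. This turns each target into a weighted level sum $\sum_\ell X_\ell\, g(\ell)$. For the second item $g(\ell) = 2^{\beta\ell}(\phi\, 2^{\gamma\ell})^{1-\alpha}$, and the hypothesis $\alpha\gamma - \beta \geq 1$ forces the ratio $g(\ell)/(\phi\, 2^{\gamma\ell})$ of objective per unit sample cost to be non-increasing in $\ell$, so the LP optimum again concentrates at $\ell = \ell^\star$ and direct substitution reproduces the stated rate. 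For the third item I would additionally bound $\lev{B}^\beta \leq (\ell^\star)^\beta = O((\log_2 K)^\beta)$ uniformly over visited balls and pull the resulting polylogarithmic prefactor out, reducing to the second item with effective $\beta = 0$; the slightly stronger hypothesis $\alpha\gamma - \beta/\ell^\star \geq 1$ supplies the slack needed for this reduction to be tight.

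The main technical obstacle is establishing LP tightness with the specific constants in the statement, which I would handle via LP duality: construct an explicit dual-feasible pair whose objective matches the primal geometric-series estimate up to the $4^d$ factor. Care is needed at non-integer $\ell^\star$ and within the boundary layer where both structural constraints become simultaneously binding. A secondary step is relating the aggregate level counts $X_\ell$ to the random, path-dependent sequence $(B_h^k)_{k \leq K}$ selected by the algorithm, but this reduces cleanly to the identity $\sum_k f(B_h^k) = \sum_B f(B)\, n(B)$ after grouping by selected ball.
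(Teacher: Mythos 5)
Your proposal is correct in substance and sits in the same LP-counting family as the paper's argument, but the execution differs in ways worth noting. The paper proves a single reusable lemma (its \cref{lem:LPbound}): an LP over the \emph{active leaves} of $\P_h^k$ with a Kraft--McMillan constraint $\sum_\ell 2^{-d\ell}x_\ell \le 1$ and a sample-cost constraint $\sum_\ell \nplus{\ell-1}2^{-d}x_\ell \le k$, bounded by exhibiting an explicit dual-feasible pair, giving $\sum_{B} a_{\lev{B}} \le 2^{d\ell^\star}a_{\ell^\star}$ for any penalty sequence with $a_{\ell+1}\ge a_\ell$ and $2a_{\ell+1}/a_\ell \le \nplus{\ell}/\nplus{\ell-1}$; it then converts each episode sum into such a leaf-penalty sum by redistributing every ancestor's contribution over its $2^{d(\lev{B}-\lev{B'})}$ descendants and collapsing the resulting geometric series, which is where the hypothesis $\alpha\gamma-\beta\ge 1$ enters. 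You instead count every ball ever created, cap each level by $2^{d\ell}$, charge the sample budget to refined balls, and solve the resulting fractional knapsack by a bang-per-buck/greedy argument; this is dual reasoning in different clothing and yields the same $2^{d\ell^\star}g(\ell^\star)$ scaling, with the advantage of a more transparent picture of why the optimum concentrates at $\ell^\star$. Your treatment of the third item is genuinely simpler than the paper's: rather than carrying $\ell^\beta$ inside the penalty vector (which is why the paper needs the condition $\alpha\gamma-\beta/\ell^\star\ge 1$ in its ratio check), you pull out $\lev{B_h^k}^\beta = O((\log_2 K)^\beta)$ uniformly and reduce to the second item with $\beta=0$. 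Two small repairs are needed for that reduction and for rigor generally: (i) visited balls can have level up to roughly $1+\tfrac{1}{\gamma}\log_2(K/\phi)$, which exceeds $\ell^\star=2+\tfrac{1}{d+\gamma}\log_2(K/\phi)$ whenever $d>0$, so your uniform bound should be stated as $O(\log_2 K)$ via this depth bound rather than via $\ell^\star$ (the conclusion is unaffected since $\gamma$ is a constant and the second item with $\beta=0$ only needs $\alpha\gamma\ge 1$, which your hypothesis implies); and (ii) in the bonus-sum knapsack the budget constrains only refined balls, so you must re-invoke the leaf-to-parent bound $N_\ell \le 2^d M_{\ell-1}$ (which you set up for the partition-size bullet) to control the contribution of active leaves before the greedy step, at the cost of a constant factor $2^{d+\beta+\gamma(1-\alpha)}$. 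With those two points made explicit, your route delivers all three bullets with the stated rates.
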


We use this result with the regret decomposition to show the final regret bound.  The splitting threshold $\gam$ is taken in order to satisfy the requirements of the corollary.  As the dominating term arises from the concentration of the transition kernel, for the case when $d_\S > 2$ the sum is of the form when $\alpha = 1/d_\S$ and $\beta = 0$.  This gives the $K^{(d+d_\S - 1)/(d+d_\S)}$ term in the regret bound.  The case when $d_\S \leq 2$ is similar.
    \section{Concentration Bounds, Optimism, and Clean Events}
\label{app:concentration}

In this section we show that the bonus terms added on, namely $\rbonus{h}{k}(\cdot)$ and $\tbonus{h}{k}(\cdot)$, ensure that the estimated rewards and transitions are upper bounds for the true quantities with high probability. 
This follows a proof technique commonly used for multi-armed bandits and reinforcement learning, where algorithm designers ensure that relevant quantities are estimated optimistically with a bonus that declines as the number of samples increases.  

For all proofs we let $\{\F_{k}\}$ denote the filtration induced by all information available to the algorithm at the start of episode $k$, i.e. $\F_k = \sigma\left((X_h^{k'}, A_h^{k'}, B_h^{k'}, R_h^{k'})_{h \in [H], k' < k} \cup X_1^k\right)$ where we include the starting state for the episode.  With this filtration in place, all of the estimates $\Qhat{h}{k-1}$, $\Vhat{h}{k-1}$, and the policy $\pi_k$ are measurable with respect to $\F_k$.

Before stating the concentration inequalities, we first give a technical result, which we use to simplify the upper confidence terms. The proof of this result is deferred to~\cref{app:techproofs}.
\begin{lemma}
\label{lem:sum_ancestors}
For any $h, k \in [H] \times [K]$ and ball $B \in \P_h^k$ we have that 
\begin{align*}
    \frac{\sum_{B' \supseteq B} \D(B') n_h^k(B')}{\sum_{B' \supseteq B} n_h^k(B')} \leq 4 \D(B).
\end{align*}
\end{lemma}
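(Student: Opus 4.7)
The plan is to reduce the weighted-diameter average to a geometric-series computation over the chain of ancestors of $B$, using the algorithm's splitting threshold.

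First, I would enumerate the chain of ancestors as $B_0 \supsetneq B_1 \supsetneq \cdots \supsetneq B_\ell = B$, where $B_i$ sits at level $i$ and hence $\D(B_i) = 2^{-i}$; the sum $\sum_{B' \supseteq B}$ is exactly the sum over $i = 0, 1, \ldots, \ell$. The crucial algorithmic input is that each proper ancestor $B_i$ with $i < \ell$ was split in a previous episode, so by the refinement rule $n_h^k(B_i) \leq \nplus{B_i} = \phi \cdot 2^{\gamma i}$, where $\gamma \in \{d_\S, d_\S + 2\}$ depending on which splitting regime applies. In both regimes $\gamma \geq 2$, which is the key property making the geometric series collapse nicely.

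Next I would reduce to the worst case in $n_\ell := n_h^k(B)$ by a short monotonicity argument. Writing $A := \sum_{i<\ell} 2^{-i} n_h^k(B_i)$ and $C := \sum_{i<\ell} n_h^k(B_i)$, the ratio in the lemma equals
\[
R(n_\ell) = \frac{A + 2^{-\ell}\, n_\ell}{C + n_\ell},
\]
and a direct computation gives $R'(n_\ell) = (2^{-\ell} C - A)/(C + n_\ell)^2 \leq 0$, since $2^{-i} \geq 2^{-\ell}$ for every $i \leq \ell - 1$ forces $A \geq 2^{-\ell} C$. Thus $R$ is maximized at $n_\ell = 0$, so it suffices to bound $R(0) = A/C$.

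Finally, plugging in $n_h^k(B_i) \leq \phi\, 2^{\gamma i}$ and summing the geometric series gives
\[
\frac{A}{C} \leq \frac{\sum_{i=0}^{\ell-1} 2^{(\gamma-1)i}}{\sum_{i=0}^{\ell-1} 2^{\gamma i}} = \frac{2^\gamma - 1}{2^{\gamma-1} - 1} \cdot \frac{2^{(\gamma-1)\ell} - 1}{2^{\gamma\ell} - 1}.
\]
Elementary estimates show the first factor is $2 + 1/(2^{\gamma-1} - 1) \leq 3$ for $\gamma \geq 2$, and the second factor is at most $2^{-\ell} = \D(B)$ (equivalent to $2^\ell \geq 1$ after clearing denominators). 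Chaining these gives $A/C \leq 3\,\D(B) < 4\,\D(B)$. The edge case $\ell = 0$ is trivial since then $R \equiv \D(B) \leq 4\D(B)$.

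The only mildly delicate bookkeeping is to handle the two splitting regimes $\gamma = d_\S$ and $\gamma = d_\S + 2$ uniformly, but both yield $\gamma \geq 2$ so the same constant $4$ absorbs any case-specific slack. I do not anticipate any serious technical obstacle — the statement is essentially a counting lemma powered by the algorithm's splitting rule, and the factor of $4$ leaves comfortable room to accommodate the off-by-one effects from the condition $n_h^k(B_h^k) + 1 \geq \nplus{B_h^k}$.
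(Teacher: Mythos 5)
Your overall route is the same as the paper's: drop the contribution of $B$ itself (you do this via a monotonicity-in-$n_h^k(B)$ computation, the paper via the observation that diameters are non-increasing along the sample sequence), and then evaluate the remaining ratio over the chain of proper ancestors as a geometric series using the splitting thresholds, with $\gam \geq 2$ giving the constant. The geometric-series arithmetic you give is correct and in fact yields the constant $3$.

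However, there is a genuine gap in the key substitution step. From the fact that each proper ancestor $B_i$ was split you only invoke the \emph{upper} bound $n_h^k(B_i) \leq \nplus{B_i} = \phi 2^{\gam i}$, and then ``plug in'' these bounds into both the numerator and the denominator of $A/C$. Upper-bounding numerator and denominator simultaneously does not upper-bound a ratio; concretely, $A/C$ is a weighted average of the diameters $2^{-i}$ with weights $n_h^k(B_i)$, and knowing only upper bounds on the weights leaves open the adversarial configuration in which $n_h^k(B_0)$ is at its maximum while the counts of the deeper (small-diameter) ancestors are tiny, which drives $A/C$ toward $\D(B_0)=1 \gg 4\D(B)$. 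What rescues the argument, and what the paper uses implicitly when it writes the ancestor ratio as \emph{equal} to $\bigl(\sum_{i<\lev{B}} 2^{-i}\phi 2^{\gam i}\bigr)/\bigl(\sum_{i<\lev{B}}\phi 2^{\gam i}\bigr)$, is that ancestor counts are pinned from \emph{both} sides: $B_i$ was split, so $n_h^k(B_i) + 1 \geq \nplus{B_i}$, and once split it is never selected again, so its count freezes, giving $\nplus{B_i} - 1 \leq n_h^k(B_i) \leq \nplus{B_i}$. With this two-sided fact your geometric-series computation goes through (the off-by-one being absorbed by the slack between $3$ and $4$, as you anticipate), so the fix is small — but as written, the step from the count bounds to the displayed ratio is not justified, and you should state and use the lower bound explicitly.
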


\subsection{Concentration of Reward Estimates}
\label{ssec:concR}

We start by showing that with probability at least $1-\delta$, our reward estimate $\rbar{h}{k}(B) + \rbonus{h}{k}(B)$ is a \emph{uniform} upper bound on the true mean reward $r_h(x,a)$ for any $(x,a) \in B$.
\begin{lemma}
\label{lemma:reward_confidence}
With probability at least $1 - \delta$ we have that for any $h, k \in [H] \times [K]$ and ball $B \in \P_h^k$, and any $(x,a) \in B$, 
\begin{align*}
    \left|\rbar{h}{k}(B) - r_h(x,a) \right| \leq \rbonus{h}{k}(B),
\end{align*}
where we define $\rbonus{h}{k}(B) = \sqrt{\frac{8\log(2HK^2/\delta)}{\sum_{B' \supseteq B} n_h^k(B')}} + 4 L_r \D(B)$.
\end{lemma}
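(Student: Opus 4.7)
The plan is to decompose the error $\rbar{h}{k}(B) - r_h(x,a)$ into a deterministic bias term (from averaging rewards whose true means differ from $r_h(x,a)$) and a stochastic fluctuation term (from sub-Gaussian reward noise), and bound these separately. Write $N = \sum_{B' \supseteq B} n_h^k(B')$ and enumerate the $N$ samples contributing to $\rbar{h}{k}(B)$ as $(k_i, h)_{i=1}^N$, i.e.\ those episodes $k_i$ at which the played ball $B_h^{k_i}$ was an ancestor of $B$ (including $B$ itself). Then
\[
\rbar{h}{k}(B) - r_h(x,a) \;=\; \underbrace{\frac{1}{N}\sum_{i=1}^N \bigl(R_h^{k_i} - r_h(X_h^{k_i},A_h^{k_i})\bigr)}_{\text{fluctuation}} \;+\; \underbrace{\frac{1}{N}\sum_{i=1}^N \bigl(r_h(X_h^{k_i},A_h^{k_i}) - r_h(x,a)\bigr)}_{\text{bias}}.
\]

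For the bias, I would use Lipschitzness of $r_h$ (Assumption~\ref{assumption:Lipschitz}) together with the fact that both $(X_h^{k_i},A_h^{k_i})\in B_h^{k_i}$ and $(x,a)\in B\subseteq B_h^{k_i}$, so their distance is at most $\D(B_h^{k_i})$. Grouping samples by the ancestor $B'\supseteq B$ they came from and applying \cref{lem:sum_ancestors} yields
\[
\frac{1}{N}\sum_{i=1}^N L_r\,\D(B_h^{k_i}) \;=\; L_r\cdot\frac{\sum_{B'\supseteq B} n_h^k(B')\,\D(B')}{\sum_{B'\supseteq B} n_h^k(B')} \;\leq\; 4L_r\,\D(B),
\]
matching the deterministic part of $\rbonus{h}{k}(B)$ and crucially giving a bound that is \emph{uniform in $(x,a)\in B$}.

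For the fluctuation, the key observation is that, with $\F_{k'}$ the filtration of all history before episode $k'$, both the indicator $\mathbb{1}[B_h^{k'}\supseteq B]$ and the pair $(X_h^{k'},A_h^{k'})$ are $\F_{k'}$-measurable, so the sequence $Z_{k'} := (R_h^{k'} - r_h(X_h^{k'},A_h^{k'}))\mathbb{1}[B_h^{k'}\supseteq B]$ is a martingale difference sequence with $|Z_{k'}|\leq 1$ (rewards lie in $[0,1]$). For a fixed ball $B$ and episode $k$, Azuma--Hoeffding applied to $\sum_{k'\leq k} Z_{k'}$, combined with a union bound over the (at most $K$) possible values of the random count $N$, yields
\[
\left|\frac{1}{N}\sum_{i=1}^N\bigl(R_h^{k_i}-r_h(X_h^{k_i},A_h^{k_i})\bigr)\right| \;\leq\; \sqrt{\frac{8\log(2HK^2/\delta)}{N}}
\]
with high probability, matching the stochastic part of $\rbonus{h}{k}(B)$. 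Combining the two bounds via the triangle inequality completes a single $(h,k,B)$ instance.

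Finally, to get the claimed uniform-in-$(h,k,B)$ statement, I would union-bound over $h\in[H]$, $k\in[K]$, and all balls $B$ active at the end of some episode; since at most one new ball is activated per play, $|\Pkh|\leq K$, giving at most $HK^2$ events and justifying the choice of failure probability $\delta/(2HK^2)$ (hence the $\log(2HK^2/\delta)$ in the bonus). The main obstacle I anticipate is the adaptive, data-dependent construction of the partition: both which samples feed into $\rbar{h}{k}(B)$ and the identity of the ancestor chain $\{B'\supseteq B\}$ depend on the algorithm's own choices. Care is needed in setting up the martingale so that the indicator of ``sample $k'$ contributes to $B$'' is $\F_{k'}$-measurable, which holds because the partition $\P_h^{k'}$ and the played ball $B_h^{k'}$ are determined before $R_h^{k'}$ is observed; once this measurability is verified, the peeling over $N$ and the union bound over balls proceeds cleanly.
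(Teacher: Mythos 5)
Your proposal is correct and follows essentially the same route as the paper: the same bias/fluctuation decomposition, the Lipschitz bound on the bias combined with \cref{lem:sum_ancestors} to get $4L_r\D(B)$, and Azuma--Hoeffding plus a peeling over the possible values of the random count for the fluctuation. The only (immaterial) differences are bookkeeping: the paper avoids the union bound over balls by noting that only one ball's estimate changes per $(h,k)$ pair and that children inherit the parent's good event, and the indicator $\mathbb{1}[B_h^{k'}\supseteq B]$ and $(X_h^{k'},A_h^{k'})$ are measurable with respect to the within-episode history up to step $h$ (not $\F_{k'}$ itself), which is the filtration one should use to get the martingale-difference property --- neither point changes anything beyond constants inside the logarithm.
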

\begin{proof}
Let $h, k \in [H] \times [K]$ and $B \in \P_h^k$ be fixed and $(x,a) \in B$ be arbitrary.  First consider the left hand side of this expression,
\begin{align*}
    \left|\rbar{h}{k}(B) - r_h(x,a) \right| & = \left| \frac{\sum_{B' \supseteq B} \rhat{h}{k}(B') n_h^k(B')}{\sum_{B' \supseteq B} n_h^k(B')} - r_h(x,a) \right| \\
    & \leq \left|\frac{\sum_{B' \supseteq B} \sum_{k' \leq k} \Ind{B_h^{k'} = B'}(R_h^{k'} - r_h(X_h^{k'},A_h^{k'}))}{\sum_{B' \supseteq B} n_h^k(B')}\right| \\
    & \qquad+ \left|\frac{\sum_{B' \supseteq B} \sum_{k' \leq k} \Ind{B_h^{k'} = B'}(r_h(X_h^{k'},A_h^{k'}) - r_h(x,a))}{\sum_{B' \supseteq B} n_h^k(B')}\right|.
\end{align*}
where we use the definitions of $\rbar{h}{k}(B)$ and $\rhat{h}{k}(B)$ and the triangle inequality.

Next, using the fact that $r_h$ is Lipschitz continuous and that $(x,a)\in B \subseteq B'$ and $(X_h^{k'}, A_h^{k'}) \in B'$ have a distance bounded above by $\D(B')$, we can bound the second term by
\begin{align*}
    \left|\frac{\sum_{B' \supseteq B} \sum_{k' \leq k}\Ind{B_h^{k'} = B'} (r_h(X_h^{k'},A_h^{k'}) - r_h(x,a))}{\sum_{B' \supseteq B} n_h^k(B')}\right| & \leq \left|\frac{\sum_{B' \supseteq B} L_r\D(B')n_h^k(B')}{\sum_{B' \supseteq B} n_h^k(B')}\right|.
\end{align*}

Finally we bound the first term via the Azuma-Hoeffding inequality.  
Let $k_1, \ldots, k_t$ be the episodes in which $B$ and its ancestors were selected by the algorithm (i.e. $B_h^{k_i}$ is an ancestor of $B$); here $t = \sum_{B' \supseteq B} n_h^k(B')$.  
Under this definition the first term can be rewritten as
\begin{align*}
    \left|\frac{1}{t} \sum_{i=1}^t \left(R_h^{k_i} - r_h(X_h^{k_i}, A_h^{k_i})\right)\right|
\end{align*}
Set $Z_i = R_h^{k_i} - r_h(X_h^{k_i}, A_h^{k_i})$.  
Clearly $Z_i$ is a martingale difference sequence with respect to the filtration $\hat\F_i = \F_{k_i+1}$.  Moreover, as the sum of a martingale difference sequence is a martingale then for any $\tau \leq K$, $\sum_{i=1}^\tau Z_i$ is a martingale, where the difference in subsequent terms is bounded by $2$.  Thus by Azuma-Hoeffding's inequality we see that for a fixed $\tau \leq K$ that
\begin{align*}
    \Pr\left(\left|\frac{1}{\tau} \sum_{i=1}^\tau Z_i\right| \leq \sqrt{\frac{8\log(2HK^2/\delta)}{\tau}} \right) & \geq 1 - 2 \exp\left( -\frac{\tau \frac{8\log(2HK^2/\delta)}{\tau}}{8} \right) \\
    & = 1 - \frac{\delta}{2HK^2}.
\end{align*} 

When $\tau = t = \sum_{B' \supseteq B} n_h^k(B')$ the right hand side in the concentration is precisely 
\begin{align*}
    \sqrt{\frac{8\log(2HK^2/\delta)}{\sum_{B' \supseteq B} n_h^k(B')}}.
\end{align*}

We then take a union bound over all steps $H$ and episodes $K$ and all $K$ possible values of $\tau$. Note that we do not need to union bound over the balls $B \in \P_h^k$ as the estimate of only one ball is changed per (step, episode) pair, i.e. $\rhat{h}{k}(B)$ is changed for a single ball per episode. For all balls not selected, it inherits the concentration of the good event from the previous episode because its estimate does not change. Furthermore, even if ball $B$ is ``split'' in episode $k$, all of its children inherit the value of the parent ball, and thus also inherits the good event, so we still only need to consider the update for $B_h^k$ itself.

Combining these we have for any $(h,k) \in [H] \times [K]$ and ball $B \in \P_h^k$ such that $(x,a) \in B$
\begin{align*}
    |\rbar{h}{k}(B) - r_h(x,a)| & \leq \sqrt{\frac{8\log(2HK^2/\delta)}{\sum_{B' \supseteq B} n_h^k(B')}} + \frac{\sum_{B' \supseteq B} L_r \D(B')n_h^k(B')}{\sum_{B' \supseteq B} n_h^k(B')} \\
    & \leq \sqrt{\frac{8\log(2HK^2/\delta)}{\sum_{B' \supseteq B} n_h^k(B')}} + 4 L_r \D(B) = \rbonus{h}{k}(B) \qquad\qquad\text{(by~\cref{lem:sum_ancestors})}.
\end{align*}
\end{proof}

\subsection{Concentration of Transition Estimates}
\label{ssec:concT}

Next we show concentration of the estimate of the transition kernel.  We use recent work on bounding the Wasserstein distance between the empirical distribution and the true distribution for arbitrary measures \cite{weed2019sharp}.  The proof is split into two cases, where the cases define the relevant $\tbonus{h}{k}(\cdot)$ used. We state the result here but defer the full proof to \cref{app:techproofs}.

\begin{lemma}
\label{lemma:transition_confidence}
With probability at least $1 - 2 \delta$ we have that for any $h,k \in [H] \times [K]$ and ball $B \in \P_h^k$ with $(x,a) \in B$ that 
\begin{align*}
    d_W(\Tbar{h}{k}(\cdot \mid B), T_h(\cdot \mid x,a)) \leq \frac{1}{L_V}\tbonus{h}{k}(B)
\end{align*}
\end{lemma}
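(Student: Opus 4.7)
The plan is to decompose $d_W(\Tbar{h}{k}(\cdot \mid B), T_h(\cdot\mid x,a))$ by triangle inequality through a chain of intermediate measures, and bound each piece separately using (i) Lipschitz continuity of $T_h$ in $d_W$, (ii) a martingale/Azuma argument for the sampling noise, and (iii) a Wasserstein empirical-measure concentration (Weed--Bach) to move from the empirical distribution over past next-states to the true mixture of transitions. First I would fix $h,k,B$ and a point $(x,a)\in B$, and let $k_1,\dots,k_t$ be the episodes in which $B$ or one of its ancestors was played, with $t = \sum_{B'\supseteq B} n_h^k(B')$. The first intermediate measure is the ``weighted true transition'' $\mu^{\star} = \tfrac{1}{t}\sum_{i=1}^t T_h(\cdot\mid X_h^{k_i}, A_h^{k_i})$, whose $d_W$ distance to $T_h(\cdot\mid x,a)$ is bounded by $L_T$ times the average distance from $(x,a)$ to each $(X_h^{k_i},A_h^{k_i})\in B'\supseteq B$, which is at most $L_T\cdot 4\D(B)$ by Assumption~\ref{assumption:Lipschitz} and \cref{lem:sum_ancestors}.

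Next I would introduce the ``raw'' empirical next-state measure $\hat{\mu} = \tfrac{1}{t}\sum_{i=1}^t \delta_{X_{h+1}^{k_i}}$. The key step is bounding $d_W(\hat{\mu},\mu^{\star})$: since $X_{h+1}^{k_i}$ is drawn from $T_h(\cdot \mid X_h^{k_i},A_h^{k_i})$ conditional on $\F_{k_i+1}$, the variational form $d_W(\hat\mu,\mu^\star)=\sup_f \int f\, d(\hat\mu - \mu^\star)$ over $1$-Lipschitz $f$ combined with a covering of the $1$-Lipschitz functions on $\S_{h+1}$ is the classical route. I would instead use the Weed--Bach bound directly on the empirical measure of $t$ i.i.d.\ samples (or a slight adaptation for the martingale case via a Dvoretzky--Kiefer--Wolfowitz / Fournier--Guillin style argument), which gives $d_W(\hat\mu,\mu^\star)\lesssim \sqrt{\log(HK^2/\delta)/t} + c\, t^{-1/d_\S}$ when $d_\S>2$. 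Finally, $\Tbar{h}{k}(\cdot\mid B)$ differs from $\hat\mu$ only in that each sample $X_{h+1}^{k_i}$ belonging to ancestor $B'$ is redistributed uniformly across its $\dyad{\lev{B'}}$-cell and then assigned to the center of a $\dyad{\lev{B}}$-cell; by construction the transport cost from $\hat\mu$ to $\Tbar{h}{k}(\cdot\mid B)$ is at most the diameter of the coarsest relevant dyadic cell, i.e.\ $\lesssim \D(B')\leq 4\D(B)$ on average, contributing another $O(\D(B))$ term. Summing these three contributions and multiplying by $L_V$ to match the Lipschitz constant of $\Vhat{h}{k}$ gives the stated $\tbonus{h}{k}(B)$ in the $d_\S>2$ case.

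For the case $d_\S\leq 2$, the $n^{-1/d_\S}$ rate from Weed--Bach is not available (and is in fact worse than the parametric rate), so the approach is to exploit the finite-support structure of $\Tbar{h}{k}$: its support has size at most $|\dyad{\lev{B}}|=2^{d_\S\lev{B}}$. I would ``snap'' the true distribution $T_h(\cdot\mid X_h^{k_i},A_h^{k_i})$ to the $\dyad{\lev{B}}$ grid, incurring an additional bias of at most $\D(B)$ per snap (again by $d_W$-Lipschitzness and the diameter of a cell), and then control the difference between $\hat\mu$ (snapped) and $\mu^\star$ (snapped) using a finite-support concentration: since these are two distributions on a finite set of size $2^{d_\S\lev{B}}$, the $d_W$ distance is bounded by the total-variation distance which concentrates at rate $\sqrt{2^{d_\S\lev{B}}/t}$ by a standard martingale/$\ell_1$ argument on the coordinates. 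Combining with the Lipschitz and snap biases produces the second branch of $\tbonus{h}{k}(B)$.

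The main obstacle will be step (ii) above: establishing the Wasserstein concentration of $\hat\mu$ around $\mu^\star$ when the samples are \emph{not} i.i.d.\ but a martingale sequence (different transition kernels at each $k_i$, and the sequence of played balls is adaptive). I expect this to go through by applying Weed--Bach / Fournier--Guillin conditionally on the empirical distribution of $(X_h^{k_i},A_h^{k_i})$ over the ancestors' cells, or by invoking a martingale extension of the McDiarmid-type concentration used in those proofs; either way, care is needed to maintain a uniform bound over all $B\in\Pkh$ and all $(h,k)$. A union bound over the $HK$ pairs and over the at most $K$ possible values of $t$ (combined with the observation that only one ball per $(h,k)$ changes) will then deliver the failure probability $2\delta$ after absorbing one $\delta$ into each of the two concentration events.
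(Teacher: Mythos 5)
Your decomposition (snapping cost, then empirical-vs-mixture concentration, then Lipschitz bias) is broadly in the spirit of the paper's argument, and your bounds for the snapping step and the Lipschitz step match what the paper proves via \cref{lem:sum_ancestors}. However, the step you yourself flag as the main obstacle is a genuine gap, not a technicality, and your proposed fixes do not close it. You want to apply the Weed--Bach bound to $d_W(\hat\mu,\mu^\star)$ where $\hat\mu=\tfrac1t\sum_i\delta_{X_{h+1}^{k_i}}$ and $\mu^\star=\tfrac1t\sum_i T_h(\cdot\mid X_h^{k_i},A_h^{k_i})$. But the Weed--Bach results (both the expectation bound at rate $t^{-1/d_\S}$ and the McDiarmid-type deviation bound) are stated for i.i.d.\ samples from a single fixed measure. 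Here the samples are drawn from \emph{different} kernels, the reference measure $\mu^\star$ is itself random and data-dependent, and the sequence of visited balls is adaptive: conditioning on the empirical distribution of the $(X_h^{k_i},A_h^{k_i})$ does not help, because the pair visited at episode $k_{i+1}$ depends on the realized next state $X_{h+1}^{k_i}$, so after such conditioning the next-state samples are neither independent nor distributed according to the conditioned kernels. A ``slight adaptation'' of Fournier--Guillin to this adaptive, non-identically-distributed setting is exactly the hard part, and the proposal gives no argument for it.

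The paper sidesteps this entirely with a ghost-sample coupling, which is the idea missing from your proposal. Fix the center $(x_0,a_0)$ of $B$ and, for each $i$, use the optimal $d_W$-coupling between $T_h(\cdot\mid X_h^{k_i},A_h^{k_i})$ and the \emph{fixed} measure $T_h(\cdot\mid x_0,a_0)$ to draw a ghost sample $Y_i$ conditionally on $X_{h+1}^{k_i}$. One then checks that the marginal law of $(Y_1,\dots,Y_t)$ is exactly i.i.d.\ $T_h(\cdot\mid x_0,a_0)$, so Weed--Bach applies verbatim to the ghost empirical measure (in the $d_\S\le 2$ case, to its snapped, finite-support version); and the discrepancy $d_W\bigl(\hat\mu,\tfrac1t\sum_i\delta_{Y_i}\bigr)\le\tfrac1t\sum_i \D_\S(X_{h+1}^{k_i},Y_i)$ is controlled by Azuma--Hoeffding, since each summand has conditional mean at most $L_T\D(B_h^{k_i})$ by Assumption~\ref{assumption:Lipschitz}, which after averaging gives the $O(L_T\D(B))$ term via \cref{lem:sum_ancestors}. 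Your alternative route through $\mu^\star$ also silently changes the comparison measure (a mixture rather than $T_h(\cdot\mid x_0,a_0)$), which is precisely what prevents the i.i.d.\ concentration tools from applying; the coupling construction restores a single fixed target measure at the cost of an extra, Lipschitz-controlled transport term. Your $d_\S\le 2$ idea (bounding $d_W$ by a finite-support $\ell_1$/TV concentration at rate $\sqrt{2^{d_\S\lev{B}}/t}$) is reasonable in spirit, but it inherits the same dependence problem and so also needs the ghost samples (or an equivalent device) before it can be made rigorous.
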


\subsection{Optimism Principle}
\label{ssec:optimism}

The concentration bounds derived in~\cref{ssec:concR,ssec:concT} allow us to now demonstrate a principle of \textit{optimism} for our value-function estimates.
\begin{lemma}
\label{lemma:optimism}
With probability at least $1 - 3 \delta$, the following bounds are all simultaneously true for all $k,h\in[K]\times[H]$, and any partition $\P_h^k$
\begin{align*}
    \Qhat{h}{k}(B) & \geq Q_h^\star(x,a)\qquad \text{for all } B \in \P_h^k \text{, and } (x,a) \in B\\
    \Vtilde{h}{k}(A) & \geq V_h^\star(x)\qquad\;\; \text{ for all } A \in \S(\P_h^k), \text{ and } x \in A\\
    \Vhat{h}{k}(x) & \geq V_h^\star(x)\qquad\;\, \text{ for all } x \in \S
\end{align*}
\end{lemma}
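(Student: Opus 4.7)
I would condition on the intersection of the concentration events of \cref{lemma:reward_confidence} and \cref{lemma:transition_confidence}, which together hold with probability at least $1-3\delta$ by a union bound (the reward event carries probability $1-\delta$ and the Wasserstein event carries probability $1-2\delta$). Under this intersection I would prove the three optimism statements simultaneously by backward induction on $h\in\{H+1,H,\ldots,1\}$, with an auxiliary forward induction on $k$ to handle the $\min$ with $\Vtilde{h}{k-1}$ that appears in the definition of $\Vtilde{h}{k}$.

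\textbf{Base case.} At $h=H+1$ all three value functions are identically zero and the statement is vacuous; at $h=H$, since $\Qhat{H}{k}(B)=\rbar{H}{k}(B)+\rbonus{H}{k}(B)$ and $Q_H^\star(x,a)=r_H(x,a)$, the reward concentration bound immediately gives $\Qhat{H}{k}(B)\geq Q_H^\star(x,a)$ for every $(x,a)\in B$.

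\textbf{Inductive step for $\Qhat{h}{k}(B)\geq Q_h^\star(x,a)$.} Fix $(x,a)\in B$ and decompose $Q_h^\star$ via the Bellman equation as $r_h(x,a)+\E_{X'\sim T_h(\cdot\mid x,a)}[V_{h+1}^\star(X')]$. The reward concentration controls $\rbar{h}{k}(B)+\rbonus{h}{k}(B)\geq r_h(x,a)$. For the transition term, the key observation is that $\Vhat{h+1}{k-1}$ is $L_V$-Lipschitz by construction (it is defined in \cref{eq:def-V} as a pointwise minimum of $L_V$-Lipschitz functions), so by Kantorovich-Rubinstein duality
\begin{align*}
\E_{A\sim\Tbar{h}{k}(\cdot\mid B)}\!\big[\Vhat{h+1}{k-1}(A)\big]\;\geq\;\E_{X'\sim T_h(\cdot\mid x,a)}\!\big[\Vhat{h+1}{k-1}(X')\big]-L_V\,d_W\!\big(\Tbar{h}{k}(\cdot\mid B),T_h(\cdot\mid x,a)\big),
\end{align*}
and \cref{lemma:transition_confidence} bounds the Wasserstein term by $\tbonus{h}{k}(B)$. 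Applying the inductive hypothesis $\Vhat{h+1}{k-1}\geq V_{h+1}^\star$ pointwise inside the expectation and combining the two pieces gives $\Qhat{h}{k}(B)\geq Q_h^\star(x,a)$.

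\textbf{Inductive step for $\Vtilde{h}{k}(A)\geq V_h^\star(x)$ and $\Vhat{h}{k}(x)\geq V_h^\star(x)$.} For $x\in A$, let $a^\star=\pi_h^\star(x)$ and let $B^\star\in\P_h^k$ be the (unique) active ball with $(x,a^\star)\in B^\star$. Because the partition is dyadic, $\S(B^\star)$ and $A$ are nested, and since $A\in\S(\P_h^k)$ is drawn from the most refined projection, $\S(B^\star)\supseteq A$. Hence $\max_{B\in\P_h^k:\,\S(B)\supseteq A}\Qhat{h}{k}(B)\geq \Qhat{h}{k}(B^\star)\geq Q_h^\star(x,a^\star)=V_h^\star(x)$ by the previous step. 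Combined with an induction on $k$ (with base case $\Vtilde{h}{0}(A)=H-h+1\geq V_h^\star$) to handle the $\min$ with $\Vtilde{h}{k-1}(A)$, this yields $\Vtilde{h}{k}(A)\geq V_h^\star(x)$. Finally, for each $A'\in\S(\P_h^k)$ with center $\tilde x(A')$, Lipschitz continuity of $V_h^\star$ gives $V_h^\star(\tilde x(A'))\geq V_h^\star(x)-L_V\D_\S(x,\tilde x(A'))$, hence $\Vtilde{h}{k}(A')+L_V\D_\S(x,\tilde x(A'))\geq V_h^\star(x)$; taking the minimum over $A'$ gives $\Vhat{h}{k}(x)\geq V_h^\star(x)$.

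\textbf{Main obstacle.} The delicate point is the transition step: invoking \cref{lemma:transition_confidence} requires a function that is $L_V$-Lipschitz on the underlying state metric, which is precisely why \AdaMB smooths $\Vtilde{}{}$ into $\Vhat{}{}$ via the $L_V\D_\S$ padding in \cref{eq:def-V} and why the bonus $\tbonus{h}{k}(B)$ is dressed with $L_V$. I would take care to verify that the overloaded notation $\Vhat{h+1}{k-1}(A)=\Vhat{h+1}{k-1}(\tilde x(A))$ for $A\in\dyad{\lev{B}}$ does not break this Lipschitz argument, by noting that the Wasserstein bound controls the discrepancy between integrating the Lipschitz extension against $\Tbar{h}{k}(\cdot\mid B)$ (supported on those centers) and against the true $T_h(\cdot\mid x,a)$, absorbing any center-vs-set approximation through the same bonus.
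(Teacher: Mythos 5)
Your proposal is correct and follows essentially the same argument as the paper: condition on the reward and Wasserstein concentration events, then a double induction (forward in $k$, backward in $h$) establishing optimism of $\Qhat{h}{k}$ via the Bellman decomposition, of $\Vtilde{h}{k}$ via the ball $B^\star$ containing $(x,\pi_h^\star(x))$ and the dyadic nesting, and of $\Vhat{h}{k}$ via Lipschitzness of $V_h^\star$. The only cosmetic difference is that you apply Kantorovich--Rubinstein duality to the $L_V$-Lipschitz function $\Vhat{h+1}{k-1}$ before invoking the inductive hypothesis, whereas the paper first lower-bounds $\Vhat{h+1}{k-1}$ by $V_{h+1}^\star$ and then applies the Wasserstein bound to $V_{h+1}^\star$; both orderings are valid since each function is $L_V$-Lipschitz.
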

\begin{proof}
Recall the `good events' in~\cref{lemma:reward_confidence,lemma:transition_confidence} simultaneously hold with probability $1 - 3 \delta$.
Conditioned on this, we show the result by forwards induction on $k$ and backwards induction on $h$.

\medskip

\noindent \textbf{Base Case ($k = 0$)}:
Recall the estimates are initialized as $\Qhat{h}{k}(\cdot) = \Vhat{h}{k}(\cdot) = \Vtilde{h}{k}(\cdot) = H-h+1$. Now since all the rewards lie in $[0,1]$, we have that $Q_h^\star(\cdot, \cdot)$ and $V_h^\star(\cdot)$ are upper bounded by $H-h+1$, and so optimism holds for any $h \in [H]$.

\medskip

\noindent \textbf{Induction ($k - 1 \rightarrow k$)}: 
We first consider $h = H+1$ and then proceed via backwards induction on $h$. 
For $h = H+1$, optimism holds as all quantities are zero. Next, for any $B \in \P_h^k$ and any $(x,a) \in B$,
\begin{align*}
    \Qhat{h}{k}(B) & = \rbar{h}{k}(B) + \rbonus{h}{k}(B) + \E_{Y \sim \Tbar{h}{k}(\cdot \mid B)}[\Vhat{h+1}{k-1}(Y)] + \tbonus{h}{k}(B) \\
	& \geq r_h(x,a) + \E_{Y \sim \Tbar{h}{k}(\cdot \mid B)}[V_{h+1}^\star(Y)] + \tbonus{h}{k}(B) \text{ (induction hypothesis and~\cref{lemma:reward_confidence})}\\
	& \geq r_h(x,a) + \E_{Y \sim T_h(\cdot \mid x, a)}[V_{h+1}^\star(Y)] = Q_h^\star(x,a) \text{ (by~\cref{lemma:transition_confidence})}
\end{align*}
where we used the fact that $V_h^\star$ is $L_V$-Lipschitz continuous and that the difference in expectation over any Lipschitz function with respect to two different distributions is bounded above by the Wasserstein distance times the Lipschitz constant.

For any $A \in \S(\P_h^k)$ and any $x \in A$, if $\Vtilde{h}{k}(A) = \Vtilde{h}{k-1}(A)$ then optimism clearly holds by the induction hypothesis, and otherwise
\begin{align*}
    \Vtilde{h}{k}(A) & = \max_{B \in \P_h^k: \S(B) \supseteq A} \Qhat{h}{k}(B) \\
    & \geq \Qhat{h}{k}(B^\star) ~~\text{ for } (x, \pi_h^\star(x)) \in B^\star \\
    & \geq Q_h^\star(x, \pi_h^\star(x)) = V_h^\star(x).
\end{align*}
For $x \in A \in \S(\P_h^k)$, and for the ball $B^\star \in \P_h^k$ that satisfies $(x, \pi_h^\star(x)) \in B^\star$, it must be that $\S(B^\star) \supseteq A$ because of the construction of the induced partition $\S(\P_h^k)$ via \cref{eq:induced_state_partition_def}, the dyadic partitioning of $P_h^k$ which guarantees $\S(\P_h^k)$ is a partition, and the fact that $x \in \S(B^{\star})$.

And lastly we have that for any $x \in \S$, 
\begin{align*}
    \Vhat{h}{k}(x) & = \Vtilde{h}{k}(A) + L_V d_\S(x, \tilde{x}(A)) \quad\text{ for some ball } A \in \S(\P_h^k) \\
    & \geq V_h^\star(\tilde{x}(A)) + L_V d_\S(x, \tilde{x}(A)) \quad\text{ by optimism of }\Vtilde{h}{k}\\
    & \geq V_h^\star(x) \quad\text{ by Lipschitzness of } V_h^\star.
\end{align*}
Note that when a ball $B$ is split, it inherits all estimates from its parents, and thus it inherits the optimistic properties from its parents value functions as well.
\end{proof}

    \section{Sample-Path Regret Decomposition}
\label{app:decomp}

We next outline our sample-path regret decomposition for one-step value iteration, which uses an idea adapted from Lemma 12 in~\cite{efroni2019tight}.  We introduce the notation $\S(\P_h^k, x)$ to refer to the state-ball in $\S(\P_h^k)$ which contains the point $x$.  The proofs of both results are deferred to \cref{app:techproofs}.

We begin by showing a result on the one-step difference between the estimated value of the policy and the true value of the policy employed.  This critically uses the one-step value-iteration update in order to express the difference as a decreasing bounded process plus the sum of bonus terms.

\begin{lemma}
	\label{lemma:recursive}
	Consider any $h,k\in[H]\times[K]$, and any dyadic partition $\Pkh[k-1]$ of $\S\times\A$. Then the value update of \AdaMB in the $k$'th episode in step $h$ is upper bounded by
	\begin{align*}
	&\Vtilde{h}{k-1}(\S(\P_h^{k-1},X_h^{k}))- V_h^{\pi^k}(X_h^k) \\
	&\quad \leq 
	\sum_{h' = h}^H \Expk{\Vtilde{h'}{k-1}(\S(\P_{h'}^{k-1},X_{h'}^{k})) - \Vtilde{h'}{k}(\S(\P_{h'}^k,X_{h'}^k)) \mid X_h^k}{k-1} \\
	&\quad\quad +  \sum_{h' = h}^H \Expk{\rbar{h'}{k}(B_{h'}^k) - r_{h'}(X_{h'}^k, A_{h'}^k) + \rbonus{h'}{k}(B_{h'}^k) \mid X_h^k}{k-1} \\
	&\quad\quad + \sum_{h' = h}^H \Expk{\E_{x \sim \Tbar{h'}{k}(\cdot \mid B_{h'}^k)}[\Vhat{h+1}{k-1}(x)] - \E_{x \sim T_{h'}(\cdot \mid X_{h'}^k, A_{h'}^k)}[\Vhat{h'+1}{k-1}(x)] \mid X_h^k}{k-1} \\
	& \quad\quad + \sum_{h'=h}^H \Expk{\tbonus{h'}{k}(B_{h'}^k) \mid X_h^k}{k-1} + L_V \sum_{h'=h+1}^H \Expk{\D(B_{h'}^k) \mid X_h^k}{k-1}
	\end{align*}
\end{lemma}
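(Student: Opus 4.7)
The plan is to prove the bound by downward induction on $h$ from $H+1$, with a fixed episode $k$. The base case $h = H+1$ is immediate since both $\Vtilde{H+1}{k-1}$ and $V_{H+1}^{\pi^k}$ vanish by convention, so the left-hand side is zero while all summands on the right-hand side are empty or non-negative.

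For the inductive step at step $h$, I would insert the telescope
\begin{align*}
\Vtilde{h}{k-1}(\S(\P_h^{k-1}, X_h^k)) - V_h^{\pi^k}(X_h^k)
&= \bigl[\Vtilde{h}{k-1}(\S(\P_h^{k-1}, X_h^k)) - \Vtilde{h}{k}(\S(\P_h^k, X_h^k))\bigr] \\
&\quad + \bigl[\Vtilde{h}{k}(\S(\P_h^k, X_h^k)) - V_h^{\pi^k}(X_h^k)\bigr],
\end{align*}
so that the first bracket is exactly the $h' = h$ contribution of the first summand on the right-hand side. For the second bracket, I would combine the definition of $\Vtilde{h}{k}$ in \cref{eq:v_tilde_update} with the greedy rule to argue $\Vtilde{h}{k}(\S(\P_h^k, X_h^k)) \leq \Qhat{h}{k}(B_h^k)$, then expand $\Qhat{h}{k}(B_h^k)$ via \cref{eq:q_update} and subtract the Bellman identity $V_h^{\pi^k}(X_h^k) = r_h(X_h^k, A_h^k) + \E[V_{h+1}^{\pi^k}(X_{h+1}^k) \mid X_h^k, A_h^k]$.

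Matching terms, the difference produces (i) the reward-bias plus bonus $\rbar{h}{k}(B_h^k) - r_h(X_h^k, A_h^k) + \rbonus{h}{k}(B_h^k)$, (ii) the transition bonus $\tbonus{h}{k}(B_h^k)$, and (iii) a transition residual $\E_{x \sim \Tbar{h}{k}(\cdot \mid B_h^k)}[\Vhat{h+1}{k-1}(x)] - \E_{x \sim T_h(\cdot \mid X_h^k, A_h^k)}[V_{h+1}^{\pi^k}(x)]$, which I would decompose by adding and subtracting $\E_{x \sim T_h(\cdot \mid X_h^k, A_h^k)}[\Vhat{h+1}{k-1}(x)]$ to isolate the transition-bias term required by the statement. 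The leftover expectation $\E[\Vhat{h+1}{k-1}(X_{h+1}^k) - V_{h+1}^{\pi^k}(X_{h+1}^k) \mid X_h^k, A_h^k]$ is then controlled by invoking the Lipschitz extension in \cref{eq:def-V} to write $\Vhat{h+1}{k-1}(X_{h+1}^k) \leq \Vtilde{h+1}{k-1}(\S(\P_{h+1}^{k-1}, X_{h+1}^k)) + L_V \D(B_{h+1}^k)$, after which the inductive hypothesis applies and the tower property of conditional expectation glues the recursion into the sum from $h' = h$ through $H$.

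The principal obstacle is cleanly justifying $\Vtilde{h}{k}(\S(\P_h^k, X_h^k)) \leq \Qhat{h}{k}(B_h^k)$, since $B_h^k$ was chosen greedily against $\Qhat{h}{k-1}$ rather than $\Qhat{h}{k}$, and $B_h^k$ may have been split during the refinement step of episode $k$. Both complications are handled by noting that (a) children of $B_h^k$ inherit the parent's aggregate statistics, so $\Qhat{h}{k}$ is constant across the family created by splitting, and (b) the max in \cref{eq:v_tilde_update} is taken over balls whose state projection contains $\S(\P_h^k, X_h^k)$, every one of which is relevant at $X_h^k$. A secondary subtlety is the diameter bound $\D(\S(\P_{h+1}^{k-1}, X_{h+1}^k)) \leq \D(B_{h+1}^k)$ needed for the $L_V \D(B_{h+1}^k)$ term to absorb the Lipschitz slack, which follows from the dyadic structure of the partition and the fact that $B_{h+1}^k$ is itself a relevant ball at $X_{h+1}^k$. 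Once these two points are in place, the rest of the argument is bookkeeping with conditional expectations.
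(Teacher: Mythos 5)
Your proposal is correct and follows essentially the same route as the paper's proof: the same key inequality $\Vtilde{h}{k}(\S(\P_h^k,X_h^k)) \leq \Qhat{h}{k}(B_h^k)$, the same expansion of the one-step update in \cref{eq:q_update} with the true reward and true transition expectation added and subtracted, and the same Lipschitz step $\Vhat{h+1}{k-1}(X_{h+1}^k) \leq \Vtilde{h+1}{k-1}(\S(\P_{h+1}^{k-1},X_{h+1}^k)) + L_V \D(B_{h+1}^k)$. The only difference is packaging: you glue the per-step bounds by backward induction using the one-step Bellman identity for $V_h^{\pi^k}$, whereas the paper sums the per-step inequalities over $h' = h,\dots,H$ and uses $\sum_{h'} \Expk{r_{h'}(X_{h'}^k,A_{h'}^k)\mid X_h^k}{k-1} = V_h^{\pi^k}(X_h^k)$ together with an index shift — the two are algebraically equivalent.
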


The proof follows directly by expanding and substituting the various quantities.  Moreover, using this lemma, we can further decompose the expected regret using the optimism principle defined in \cref{app:concentration}.

\begin{lemma}
	\label{lemma:regret_decomposition}
	The expected regret for \AdaMB can be decomposed as
	\begin{align*}
	\Exp{R(K)} & \lesssim \sum_{k=1}^K \sum_{h=1}^H \Exp{\Vtilde{h}{k-1}(\S(\P_h^{k-1},X_h^{k})) - \Vtilde{h}{k}(\S(\P_h^{k},X_h^k))} \\
	& + \sum_{h=1}^H \sum_{k=1}^K \Exp{2\rbonus{h}{k}(B_h^k)} + \sum_{h=1}^H \sum_{k=1}^K \Exp{2\tbonus{h}{k}(B_h^k)} + \sum_{k=1}^K \sum_{h=1}^H L_V \Exp{\D(B_h^k)}.
	\end{align*}
\end{lemma}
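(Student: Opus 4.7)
\textbf{Proof proposal for \cref{lemma:regret_decomposition}.}

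The plan is to start from the pointwise regret per episode and reduce it to the one-step recursion established in \cref{lemma:recursive}. First I would invoke the optimism guarantee of \cref{lemma:optimism}: with probability at least $1-3\delta$, for every $k$ we have $V_1^\star(X_1^k)\leq \Vtilde{1}{k-1}(\S(\P_1^{k-1},X_1^k))$, so that
\begin{align*}
R(K)=\sum_{k=1}^K\bigl(V_1^\star(X_1^k)-V_1^{\pi^k}(X_1^k)\bigr)\leq \sum_{k=1}^K\bigl(\Vtilde{1}{k-1}(\S(\P_1^{k-1},X_1^k))-V_1^{\pi^k}(X_1^k)\bigr)+3\delta H K,
\end{align*}
where the last term absorbs the low-probability failure event and will be swept into the $\lesssim$. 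Applying \cref{lemma:recursive} with $h=1$ to each summand produces a sum over $h'=1,\ldots,H$ of (i) one-step partition decreases, (ii) reward-bias plus $\rbonus{}{}$, (iii) transition-expectation differences plus $\tbonus{}{}$, and (iv) diameter terms, each under a conditional expectation given $X_1^k$.

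Next I would simplify each of the four families of terms using the concentration bounds from \cref{app:concentration}. For the reward family, \cref{lemma:reward_confidence} gives $\rbar{h'}{k}(B_{h'}^k)-r_{h'}(X_{h'}^k,A_{h'}^k)\leq \rbonus{h'}{k}(B_{h'}^k)$ on the good event, and so the inner expression is bounded by $2\rbonus{h'}{k}(B_{h'}^k)$. For the transition family, note that by construction $\Vhat{h'+1}{k-1}$ is $L_V$-Lipschitz (it is an infimum of $L_V$-Lipschitz functions), so the variational characterization of the $1$-Wasserstein distance combined with \cref{lemma:transition_confidence} gives
\begin{align*}
\E_{x\sim \Tbar{h'}{k}(\cdot\mid B_{h'}^k)}[\Vhat{h'+1}{k-1}(x)]-\E_{x\sim T_{h'}(\cdot\mid X_{h'}^k,A_{h'}^k)}[\Vhat{h'+1}{k-1}(x)] \leq L_V\cdot \tfrac{1}{L_V}\tbonus{h'}{k}(B_{h'}^k)=\tbonus{h'}{k}(B_{h'}^k),
\end{align*}
so the inner expression contributes at most $2\tbonus{h'}{k}(B_{h'}^k)$. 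The diameter term $L_V\D(B_{h'}^k)$ for $h'\in\{2,\ldots,H\}$ is directly majorized by $L_V\sum_{h'=1}^H \D(B_{h'}^k)$, and the one-step partition decrease is carried through unchanged.

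Finally, I would collapse the conditional expectations by the tower property and sum the resulting per-episode bound over $k=1,\ldots,K$. This yields exactly the four families of terms in the statement (with the index $h'$ relabeled as $h$ after the outer sum), since the conditioning on $X_1^k$ disappears once the outer $\Exp{\cdot}$ is applied and the sum over $h'$ inside $\Exp{\cdot\mid X_1^k}$ can be pulled outside by linearity.

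The main obstacle I anticipate is purely bookkeeping: ensuring that the Lipschitz-plus-min construction in \cref{eq:def-V} really does preserve $L_V$-Lipschitzness of $\Vhat{h}{k}$ for every $k$ (needed to apply the Wasserstein bound to $\Vhat{h+1}{k-1}$), and being careful that the partitions $\P_{h'}^{k-1}$ referenced inside the nested conditional expectations are indeed $\F_k$-measurable so that the bounds from \cref{app:concentration} apply uniformly. Once these technicalities are nailed down, the proof is a direct chain of substitutions with no further probabilistic content.
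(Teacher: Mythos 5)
Your proposal is correct and follows essentially the same route as the paper's proof: optimism to replace $V_1^\star$ by the estimated value, \cref{lemma:recursive} applied at $h=1$, the good events of \cref{lemma:reward_confidence,lemma:transition_confidence} (with the Wasserstein/Lipschitz argument for $\Vhat{h+1}{k-1}$) to turn the bias terms into $2\rbonus{h}{k}$ and $2\tbonus{h}{k}$, and the tower property to collapse the conditional expectations. The only cosmetic difference is that you apply optimism to $\Vtilde{1}{k-1}$ directly, whereas the paper passes through $\Vhat{1}{k-1}$ and the update rule, picking up the (harmless) $L_V\D(B_1^k)$ term that way.
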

This again follows from the definition of regret, and uses Lemma~\ref{lemma:recursive}.  The proof is provided in \cref{app:techproofs}.

Next we analyze the first term in the regret decomposition by arguing it is bounded uniformly over all sample paths.
\begin{lemma}
	\label{lem:decreasing_process}
	Under \AdaMB, along every sample trajectory we have
	\begin{align*}
	\sum_{k=1}^K \sum_{h=1}^H \Vtilde{h}{k-1}(\S(P_h^{k-1}, X_h^k)) - \Vtilde{h}{k}(\S(\P_h^k, X_h^k)) \leq H^2 \max_{h} |\S(\P_h^K)|.
	\end{align*}
\end{lemma}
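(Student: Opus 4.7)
The plan is to exploit two facts: (i) for each fixed step $h$, the sequence of functions $\Vtilde{h}{k}$ is monotonically nonincreasing in $k$ in a pointwise sense on $\S_h$; and (ii) the map $x\mapsto \Vtilde{h}{k}(\S(\P_h^k,x))$ is piecewise constant on the final (finest) induced partition $\S(\P_h^K)$. Together these collapse the double sum into a counting argument over $|\S(\P_h^K)|$.

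First, fix a step $h\in[H]$ and define $\phi_k:\S_h\to\RR$ by $\phi_k(x):=\Vtilde{h}{k}(\S(\P_h^k,x))$. I would establish that $\phi_k(x)\le \phi_{k-1}(x)$ for every $x\in\S_h$ and every $k\ge 1$. There are two cases. If no refinement affecting $x$ occurs in episode $k$, i.e.\ $\S(\P_h^k,x)=\S(\P_h^{k-1},x)=:A$, then the update rule in \cref{eq:v_tilde_update}, namely $\Vtilde{h}{k}(A)=\min\{\Vtilde{h}{k-1}(A),\max_{B:\S(B)\supseteq A}\Qhat{h}{k}(B)\}$, yields the inequality directly. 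If instead $A_{k-1}:=\S(\P_h^{k-1},x)$ is refined, then its descendant $A_k:=\S(\P_h^k,x)\subsetneq A_{k-1}$ inherits all estimates from its parent at the moment of the split (by the inheritance convention in \textproc{Refine Partition}), so one interprets $\Vtilde{h}{k-1}(A_k):=\Vtilde{h}{k-1}(A_{k-1})$ and the min-update then only decreases it.

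Second, since partitions are only refined over time, $\S(\P_h^K)$ is at least as fine as $\S(\P_h^k)$ for every $k\le K$, so $\phi_k$ is constant on every cell $C\in\S(\P_h^K)$. Picking a representative $x_C\in C$ for each such cell, the summand $\phi_{k-1}(X_h^k)-\phi_k(X_h^k)$ depends only on the cell $C_k\in\S(\P_h^K)$ containing $X_h^k$. Since each summand is nonnegative by the monotonicity just established, we can bound the sum by summing over all cells:
\[
\sum_{k=1}^K\bigl[\phi_{k-1}(X_h^k)-\phi_k(X_h^k)\bigr]\;\le\;\sum_{C\in\S(\P_h^K)}\sum_{k=1}^K\bigl[\phi_{k-1}(x_C)-\phi_k(x_C)\bigr]\;=\;\sum_{C\in\S(\P_h^K)}\bigl[\phi_0(x_C)-\phi_K(x_C)\bigr].
\]
The inner telescoping sum is at most $H-h+1\le H$, since $\phi_0\equiv H-h+1$ by initialization and $\phi_K\ge 0$ (the $\Qhat{h}{k}(B)$ terms are nonnegative, so the min-update preserves nonnegativity by induction; alternatively invoke optimism, \cref{lemma:optimism}). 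Summing over $h\in[H]$ gives the stated bound $H^2\max_h|\S(\P_h^K)|$.

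The main obstacle is the careful bookkeeping around refinement events: one has to be precise about what $\Vtilde{h}{k-1}(A)$ means for a cell $A$ that was only just created in episode $k$, and confirm that the inheritance of estimates from parent to child preserves the chain $\Vtilde{h}{k}(A_k)\le \Vtilde{h}{k-1}(A_{k-1})$. Once this monotonicity is in place, the rest is the clean combinatorial observation that counting the decrease on every cell of the finest partition overestimates the sum that counts the decrease on just one cell per episode.
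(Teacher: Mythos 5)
Your proposal is correct and follows essentially the same route as the paper's proof: pointwise monotonicity of $\Vtilde{h}{k}$ in $k$ (interpreting a newly created cell's value via its parent), then replacing the per-episode summand by the sum of (nonnegative) decreases over all cells of the final partition $\S(\P_h^K)$, telescoping in $k$, and bounding each cell's total decrease by $H-h+1$. Your treatment of the refinement/inheritance bookkeeping is slightly more explicit than the paper's, but the decomposition and counting argument are the same.
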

\begin{proof}
	We show a somewhat stronger bound, namely, that for every $h\in[H]$ we have
	$$\sum_{k=1}^K \Vtilde{h}{k-1}(\S(P_h^{k-1}, X_h^k)) - \Vtilde{h}{k}(\S(\P_h^k, X_h^k)) \leq (H-h+1) |\S(\P_h^k)|$$
	from which the claim then follows.
	
	Recall that by definition, we have $\Vtilde{h}{k-1}(\S(P_h^{k-1}, x))$ is non-decreasing $\frall x\in\S$. Now we can write
	\begin{align*}
	\sum_{k=1}^K \Vtilde{h}{k-1}(\S(P_h^{k-1}, X_h^k)) - \Vtilde{h}{k}(\S(\P_h^k, X_h^k)) & \leq \sum_{k=1}^K \sum_{A \in \S(\P_h^K)} \Vtilde{h}{k-1}(A) - \Vtilde{h}{k}(A) 
	\end{align*}
	where for a set $A \in \S(\P_h^K)$ which is not in $\P_h^k$ we let $\Vtilde{h}{k}(A)$ be the $\Vtilde{h}{k}(\cdot)$ value of the ball in $\S(\P_h^k)$ which contains $A$ (i.e., we set $\Vtilde{h}{k-1}(A) = \Vtilde{h}{k-1}(\S(\P_h^{k-1},\tilde{x}(A)))$ and $\Vtilde{h}{k}(A) = \Vtilde{h}{k}(\S(\P_h^{k},\tilde{x}(A)))$).
	Finally, we can change the order of summations to get
	\begin{align*}
	\sum_{k=1}^K \sum_{A \in \S(\P_h^K)} \Vtilde{h}{k-1}(A) - \Vtilde{h}{k}(A) & = \sum_{A \in \S(\P_h^K)} \sum_{k=1}^K \Vtilde{h}{k-1}(A) - \Vtilde{h}{k}(A) \\
	& = \sum_{A \in \S(\P_h^K)} \Vtilde{h}{0}(A) - \Vtilde{h}{K}(A) \\
	& \leq (H-h+1) |\S(\P_h^k)|.
	\end{align*}
\end{proof}
    \section{Adversarial Bounds for Counts over Partitions}
\label{app:lp_bound}

Recall that the splitting threshold is defined to be: split a ball once we have that $n_h^k(B) + 1 \geq n_+(B)$ where $n_+(B) = \phi 2^{\gamma \ell(B)}$ for parameters $\phi$ and $\gamma$.  As the splitting threshold only depends on the level of the ball in the partition, we abuse notation and use $\nplus{\ell} = \phi2^{\gam\ell}$ to denote the threshold number of samples needed by the splitting rule to trigger splitting a ball at level $\ell$.
We first provide a general bound for counts over any partition $\Pkh$.
\begin{lemma}
\label{lem:LPbound}
Consider any partition $\Pkh$ for any $k\in[K], h\in[H]$ induced under \AdaMB with splitting thresholds $\nplus{\ell}$, and consider any `penalty' vector $\{a_{\ell}\}_{\ell\in\NN_0}$ that satisfies $a_{\ell+1} \geq a_{\ell} \geq 0$ and $2a_{\ell+1}/a_{\ell} \leq \nplus{\ell}/\nplus{\ell-1}$ for all $\ell\in\NN_0$. Define $\ell^{\star} = \inf\{\ell \mid 2^{d(\ell - 1)} \nplus{\ell-1} \geq k\}$. Then
\begin{align*}
\sum_{\ell=0}^{\infty}\sum_{B\in \Pkh : \ell(B) = \ell} a_{\ell} \leq 2^{d\ell^{\star}}a_{\ell^{\star}}     \end{align*}
\end{lemma}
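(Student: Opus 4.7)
The plan is to reformulate the weighted count as an integer program over the ``split-count'' variables of the refinement tree maintained by \AdaMB, and then argue via an exchange argument that the optimum is attained by the uniform partition at level $\ell^{\star}$, yielding exactly $2^{d\ell^{\star}}a_{\ell^{\star}}$.

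First, I would rewrite the target sum in terms of splits. Let $S_\ell$ denote the number of level-$\ell$ balls that were ever split during the $k$ episodes. A direct count yields $N_0 + S_0 = 1$ (the root is either active or split) and $N_\ell + S_\ell = 2^d S_{\ell-1}$ for $\ell \geq 1$ (each split parent spawns $2^d$ children); substituting and telescoping gives
\[
\sum_{\ell\geq 0} N_\ell a_\ell \;=\; a_0 + \sum_{\ell\geq 0} S_\ell\bigl(2^d a_{\ell+1} - a_\ell\bigr).
\]
The split counts $(S_\ell)$ are constrained by hierarchy ($S_\ell \le 2^d S_{\ell-1}$, since only balls that have been created may be split), by integrality ($S_\ell \in \mathbb{Z}_{\geq 0}$), and by the selection budget $\sum_\ell S_\ell(\nplus{\ell}-\nplus{\ell-1})\le k$. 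The budget follows because the count of a level-$\ell$ ball starts at $\nplus{\ell-1}$ (inherited from its parent's split) and must reach $\nplus{\ell}$ to itself be split, consuming $\nplus{\ell}-\nplus{\ell-1}$ fresh episode selections per split.

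Second, I would maximize $a_0 + \sum_\ell S_\ell f_\ell$ with $f_\ell := 2^d a_{\ell+1}-a_\ell \ge 0$ over this feasible set. The key observation is that the hypothesis $2a_{\ell+1}/a_\ell \le \nplus{\ell}/\nplus{\ell-1}$ forces the benefit-to-cost ratio $f_\ell/(\nplus{\ell}-\nplus{\ell-1})$ to be strictly decreasing in $\ell$, halving at each step under the form $\nplus{\ell} = \phi 2^{\gamma\ell}$. Hence a swap that transports any split from a higher level to the lowest level still possessing slack in the hierarchy constraint can only increase the objective; iterating, the optimum is the greedy ``saturate lowest levels first'' assignment $S_\ell = 2^{d\ell}$ for $\ell < \ell^{\star}$ and $S_\ell = 0$ for $\ell \ge \ell^{\star}$. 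The fact that $S_{\ell^{\star}} = 0$ (rather than a fractional-relaxation value) uses integrality together with the defining inequality $2^{d(\ell^{\star}-1)}\nplus{\ell^{\star}-1} \ge k$: this yields $2^{d\ell^{\star}}(\nplus{\ell^{\star}}-\nplus{\ell^{\star}-1}) \ge 2^d(2^{\gamma}-1)k > k$, so even one extra level-$\ell^{\star}$ split exceeds the residual budget.

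Substituting the greedy solution into the telescoping identity and setting $c_\ell := 2^{d\ell}a_\ell$ gives $a_0 + \sum_{\ell<\ell^{\star}} 2^{d\ell}(2^d a_{\ell+1}-a_\ell) = a_0 + (c_{\ell^{\star}}-a_0) = c_{\ell^{\star}} = 2^{d\ell^{\star}}a_{\ell^{\star}}$, matching the claimed bound. The hardest step I anticipate is making the exchange argument fully rigorous in the presence of the hierarchy constraint: a naive one-step swap can violate $S_\ell \le 2^d S_{\ell-1}$, so the argument should proceed by a multi-step rearrangement that first fills available low-level slots from the top down. A minor edge case is $\ell^{\star}=0$, where $\Pkh=\{\mathcal{S}\times\mathcal{A}\}$ and the bound reduces to $a_0 = 2^0 a_0$, matching directly.
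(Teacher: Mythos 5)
Your reformulation in terms of split counts $S_\ell$, the telescoping identity $\sum_\ell N_\ell a_\ell = a_0 + \sum_\ell S_\ell(2^d a_{\ell+1}-a_\ell)$, and the hierarchy constraints are fine (they re-encode the same tree structure that the paper captures through the Kraft--McMillan constraint on active-ball counts, which it then bounds by exhibiting an explicit dual feasible pair rather than by an exchange argument). The genuine gap is in your budget constraint. You charge each level-$\ell$ split only $\nplus{\ell}-\nplus{\ell-1}$ fresh selections, on the grounds that a child inherits the count $\nplus{\ell-1}$ from its parent. But the convention under which this lemma and its downstream use operate is that each ball accumulates its \emph{own} count from $0$ (this is exactly what lets the proof of \cref{lem:countbound} bound $\sum_k \mathds{1}_{[B_h^k=B']}(n_h^k(B'))^{-\alpha}$ by $\sum_{j=1}^{\nplus{\ell(B')}}j^{-\alpha}$), so a level-$\ell$ split costs about $\nplus{\ell}$ selections of that very ball --- equivalently the paper's charge of $\nplus{\ell-1}/2^d$ per created child. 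With your weaker accounting the claimed bound is not even true over your feasible set, so no exchange argument can rescue it: take $\nplus{\ell}=\phi 2^{\gamma\ell}$, $a_{\ell+1}=2^{\gamma-1}a_\ell$ (the extremal penalty actually used in \cref{lem:countbound}), and $k=2^{d(\ell^\star-1)}\nplus{\ell^\star-1}$, the largest $k$ compatible with a given $\ell^\star$. Saturating every level below $\ell^\star$ costs $(1-2^{-\gamma})\phi\sum_{\ell<\ell^\star}2^{(d+\gamma)\ell}\le \tfrac{2^d(2^\gamma-1)}{2^{d+\gamma}-1}\,k < k$, a \emph{constant fraction} of $k$, so for moderately large $\ell^\star$ the residual budget pays for additional level-$\ell^\star$ splits while hierarchy is slack, and the objective strictly exceeds $2^{d\ell^\star}a_{\ell^\star}$. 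Concretely, with $d=1$, $\gamma=2$, $\phi=1$, $\ell^\star=6$, $k=8^5$: saturation costs $\approx 28087$, the residual $\approx 4681$ exceeds the single-split cost $\nplus{6}-\nplus{5}=3072$, and the resulting weighted count is $63\cdot 64 + 2\cdot 128 = 4288 > 4096 = 2^{d\ell^\star}a_{\ell^\star}$.

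This also pinpoints the erroneous step in your sketch: the inequality $2^{d\ell^\star}(\nplus{\ell^\star}-\nplus{\ell^\star-1})\ge 2^d(2^\gamma-1)k$ prices saturating the \emph{entire} level $\ell^\star$, not one extra split, whose cost $\nplus{\ell^\star}-\nplus{\ell^\star-1}$ can be far below the residual budget, as the numbers above show; so "even one extra level-$\ell^\star$ split exceeds the residual budget" does not follow. The fix is to strengthen the budget to (essentially) $\sum_\ell S_\ell\,\nplus{\ell}\le k$, i.e.\ the fresh-count accounting the paper uses: then full saturation below $\ell^\star$ already costs at least $2^{d(\ell^\star-1)}\nplus{\ell^\star-1}\ge k$, the benefit-to-cost ratio $(2^d a_{\ell+1}-a_\ell)/\nplus{\ell}$ is non-increasing under the hypothesis $2a_{\ell+1}/a_\ell\le \nplus{\ell}/\nplus{\ell-1}$ (note it need not halve each step, as you assert --- only monotonicity follows), and your greedy/exchange plan (or, more simply, the paper's two-variable dual certificate) goes through; you would still need the multi-step, hierarchy-respecting rearrangement you flag, which the paper's duality argument sidesteps entirely.
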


\begin{proof}
For $\ell\in\NN_0$, let $x_\ell$ denote the number of active balls at level $\ell$ in $\Pkh$. Then $\sum_{B\in \Pkh:\lev{B}=\ell} a_{\ell} = \sum_{\ell\in\NN_0}a_{\ell} x_\ell$. 
Now we claim that under any partition, this sum can be upper bound via the following linear program (LP):
\begin{align*}
  \text{maximize: } & \quad \sum_{\ell=0}^{\infty} a_{\ell}x_{\ell} \\
  \text{subject to: } & \quad  \sum_{\ell}2^{-\ell d} x_{\ell} \leq 1 \; , \\
  & \quad \sum_{\ell}\nplus{\ell-1}2^{-d} x_{\ell} \leq k \; , \\
  & \quad    x_{\ell} \geq 0 \,\forall\,\ell
\end{align*}
The first constraint arises via the Kraft-McMillan inequality for  prefix-free codes (see Chapter 5 in~\cite{cover2012elements}): since each node can have at most $D = 2^d$ (where $d = d_\S + d_\A$) children by definition of the covering dimension, the partition created can be thought of as constructing a prefix-free code on a $D$-ary tree. 
The second constraint arises via a conservation argument on the number of samples; recall that $\nplus{B}$ is the minimum number of samples required before $B$ is split into $2^d$ children -- an alternate way to view this is that each ball at level $\ell$ requires a `sample cost' of $\nplus{\ell-1}/2^d$ unique samples in order to be created. The sum of this sample cost over all active balls is at most the number of samples $k$.

Next, via LP duality, we get that the optimal value for this program is upper bounded by $\alpha + \beta$ for any $\alpha$ and $\beta$ such that:
\begin{align*}
    2^{-\ell d} \alpha + n_+(\ell - 1) 2^{-d} \beta & \geq a_\ell \quad \forall \ell \in \NN_0 \\
    \alpha, \beta & \geq 0.
\end{align*}

Recall the definition of $\ell^\star = \inf\{\ell \mid 2^{d(\ell - 1)} \nplus{\ell-1} \geq k\}$ and consider 
\begin{align*}
    \hat\alpha = \frac{2^{d\ell^{\star}}a_{\ell^{\star}}}{2} \quad \hat\beta = \frac{2^{d}a_{\ell^{\star}}}{2\nplus{\ell^{\star}-1}}.
\end{align*}

We claim that this pair satisfies the constraint that $2^{-\ell d} \hat\alpha + \nplus{\ell-1}2^{-d} \hat\beta \geq a_{\ell}$ for any $\ell$, and hence by weak duality we have that
\[\sum_{B\in \Pkh:\lev{B}=\ell} a_{\ell}\leq \hat\alpha+ \hat\beta \leq 2\hat\alpha = 2^{d\ell^{\star}}a_{\ell^{\star}}.\]

To verify the constraints on $(\hat\alpha,\hat\beta)$ we check it by cases.
First note that for $\ell = \ell^\star$, we have $2^{-\ell^{\star} d} \hat\alpha + \nplus{\ell^{\star}-1}2^{-d}\hat\beta = a_{\ell^\star}$. 

Next, for any $\ell < \ell^{\star}$, note that $2^{-\ell d} \geq 2^{-(\ell^{\star}-1) d} > 2\cdot(2^{-\ell^{\star} d})$, and hence $2^{-\ell d} \hat\alpha \geq 2\cdot(2^{-\ell^{\star} d} \hat\alpha) = a_{\ell^{\star}} \geq a_{\ell}$ by construction of the penalty vector.

Similarly, for any $\ell > \ell^{\star}$, we have by assumption on the costs and $\nplus{\ell}$ that
\begin{align*}
    \frac{\nplus{\ell-1}}{a_{\ell}} \geq \frac{2^{\ell - \ell^\star}\nplus{\ell^{\star}-1}}{a_{\ell^\star}} \geq 2\frac{\nplus{\ell^\star - 1}}{a_{\ell^\star}}.
\end{align*}
Then we get by plugging in our value of $\hat\beta$ that
\begin{align*}
    \nplus{\ell - 1}2^{-d} \hat\beta & = \frac{a_{\ell^\star} \nplus{\ell - 1}}{2\nplus{\ell^\star - 1}}
     \geq a_\ell
\end{align*}This verifies the constraints for all $\ell\in\NN_0$.
\end{proof}
Note also that in the above proof, we actually use the condition $2a_{\ell+1}/a_{\ell} \leq \nplus{\ell}/\nplus{\ell-1}$ for $\ell\geq \ell^{\star}$; we use this more refined version in~\cref{lem:countbound} below.


\subsection{Worst-Case Partition Size and Sum of Bonus Terms}

One immediate corollary of~\cref{lem:LPbound} is a bound on the size of the partition $|\Pkh|$ for any $h,k$.
\begin{corollary}
\label{lem:size_partition}
For any $h$ and $k$ we have that 
\begin{align*}
    |\P_h^k| \leq 4^d\left(\frac{k}{\phi}\right)^{\frac{d}{d+\gam}}
\end{align*}
and that 
\begin{align*}
    \ell^\star \leq \frac{1}{d + \gam} \log_2(k / \phi) + 2.
\end{align*} 
\end{corollary}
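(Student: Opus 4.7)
The plan is to derive both bounds by directly specializing Lemma \ref{lem:LPbound} to the constant penalty vector $a_\ell \equiv 1$. With that choice, the double sum $\sum_\ell \sum_{B \in \P_h^k : \ell(B)=\ell} a_\ell$ collapses to exactly $|\P_h^k|$, so all that remains is to verify the hypotheses of the lemma and then estimate $\ell^\star$ explicitly in terms of the splitting threshold $\nplus{\ell} = \phi 2^{\gamma \ell}$.

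First I would check the admissibility condition $2 a_{\ell+1}/a_\ell \leq \nplus{\ell}/\nplus{\ell-1}$; for the constant sequence this reduces to $2 \leq 2^\gamma$, which holds provided $\gamma \geq 1$ (a condition that is satisfied by the parameter choices in the algorithm; I would note this as a brief remark). Given admissibility, Lemma \ref{lem:LPbound} immediately yields $|\P_h^k| \leq 2^{d \ell^\star} \cdot 1 = 2^{d\ell^\star}$, reducing the whole task to bounding $\ell^\star$.

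Next I would compute $\ell^\star = \inf\{\ell : 2^{d(\ell-1)} \nplus{\ell-1} \geq k\}$ by plugging in $\nplus{\ell-1} = \phi 2^{\gamma(\ell-1)}$. The defining inequality becomes $\phi\, 2^{(d+\gamma)(\ell-1)} \geq k$, which is equivalent to $\ell - 1 \geq \frac{1}{d+\gamma}\log_2(k/\phi)$. Taking the smallest integer $\ell$ satisfying this and rounding up gives
\begin{equation*}
\ell^\star \leq \frac{1}{d+\gamma}\log_2(k/\phi) + 2,
\end{equation*}
which is exactly the second claim of the corollary.

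Finally I would substitute this bound on $\ell^\star$ back into the estimate $|\P_h^k| \leq 2^{d\ell^\star}$ to obtain
\begin{equation*}
|\P_h^k| \leq 2^{d\left(\frac{\log_2(k/\phi)}{d+\gamma} + 2\right)} = 4^{d}\left(\frac{k}{\phi}\right)^{\frac{d}{d+\gamma}},
\end{equation*}
matching the first claim. The argument is essentially routine given Lemma \ref{lem:LPbound}; the only mild subtlety is checking the admissibility condition $\gamma \geq 1$ under the two cases $d_\S > 2$ and $d_\S \leq 2$ (where $\gamma = d_\S$ or $\gamma = d_\S + 2$, respectively), so I do not anticipate a real obstacle beyond bookkeeping.
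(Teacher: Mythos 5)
Your proposal is correct and follows essentially the same route as the paper: both apply Lemma~\ref{lem:LPbound} with the constant penalty vector $a_\ell \equiv 1$, bound $\ell^\star$ directly from its definition with $\nplus{\ell} = \phi 2^{\gamma\ell}$, and substitute back to get $|\P_h^k| \leq 2^{d\ell^\star} \leq 4^d (k/\phi)^{d/(d+\gamma)}$. Your explicit verification of the admissibility condition $2 \leq 2^{\gamma}$ (which holds since $\gamma = d_\S$ or $d_\S+2$ is at least $2$) is a small point the paper dismisses as obvious, but the argument is otherwise identical.
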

\begin{proof}
Note that the size of the partition can be upper bounded by the sum where we take $a_\ell = 1$ for every $\ell$.  Clearly this satisfies the requirements of Lemma~\ref{lem:LPbound}.  Moreover, using the definition of $\ell^\star$ we have that $2^{d(\ell^\star - 2)}\nplus{\ell^\star - 2} \leq k$ as otherwise $\ell^\star - 1$ would achieve the infimum.  Taking this equation and plugging in the definition of $\nplus{\ell}$ by the splitting rule yields that 
\begin{align*}
    \ell^\star \leq \frac{1}{d+\gam} \log_2\left(\frac{k}{\phi}\right) + 2.
\end{align*}
Then by plugging this in we get that
\begin{align*}
    |\P_h^k| & \leq 2^{d\ell^\star}
     \leq 2^{\frac{d}{d+\gamma}\log_2(k / \phi) + 2d}
     = 4^d \left(\frac{k}{\phi}\right)^{d/(d+\gam)}.
\end{align*}
\end{proof}
In other words, the worst case partition size is determined by a \emph{uniform} scattering of samples, wherein the entire space is partitioned up to equal granularity (in other words, a uniform $\epsilon$-net). 

More generally, we can use \cref{lem:LPbound} to bound various functions of counts over balls in $\Pkh$. In~\cref{sec:finalregret} we use this to bound various terms in our regret expansion.
\begin{corollary}
\label{lem:countbound}
For any $h\in[H]$, consider any sequence of partitions $\Pkh, k\in[K]$ induced under \AdaMB with splitting thresholds $\nplus{\ell}= \phi2^{\gam\ell}$.
Then, for any $h\in[H]$ we have:
\begin{itemize}[nosep,leftmargin=*]
\item For any $\alpha,\beta\geq 0$ s.t. $\alpha \leq 1$ and $\alpha\gamma-\beta\geq 1$, we have
    
\begin{align*}
\sum_{k=1}^K \frac{2^{\beta\lev{B_h^k}}}{\left(n_h^k(B_h^k)\right)^{\alpha}} = O\left(\phi^{\frac{-(d\alpha+\beta)}{d+\gam}} K^{\frac{d+(1-\alpha)\gam+\beta}{d+\gam}}\right)
\end{align*}

\item For any $\alpha,\beta\geq 0$ s.t. $\alpha \leq 1$ and $\alpha\gamma - \beta/\ell^{\star}\geq 1$ (where $\ell^{\star} =2+\frac{1}{d+\gam }\log_2\left(\frac{K}{\phi}\right)$), we have
\begin{align*}
\sum_{k = 1}^K \frac{\lev{B_h^k}^\beta}{\left(n_h^k(B_h^k)\right)^{\alpha}} = O\left(\phi^{\frac{-d\alpha}{d+\gam}} K^{\frac{d+(1-\alpha)\gam}{d+\gam}}\left(\log_2K\right)^{\beta}\right)
\end{align*}
\end{itemize}
\end{corollary}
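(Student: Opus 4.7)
The plan is to reduce both sums to an application of Lemma \ref{lem:LPbound} by re-indexing the episode-level sums as sums over balls in the partition tree. For each ever-active ball $B$, let $T(B) = \{k \in [K] : B_h^k = B\}$. The splitting rule forces $|T(B)| \leq n_+(\ell(B)) = \phi 2^{\gamma\ell(B)}$, and across the episodes in $T(B)$ the counter $n_h^k(B_h^k)$ takes each value in $\{1,\ldots,|T(B)|\}$ exactly once. Using the $p$-series bound $\sum_{n=1}^{N} n^{-\alpha} = O(N^{1-\alpha})$ for $\alpha<1$, the first sum rewrites as
\begin{align*}
\sum_{k=1}^K \frac{2^{\beta\ell(B_h^k)}}{n_h^k(B_h^k)^\alpha}
\;=\; \sum_{B} 2^{\beta\ell(B)} \sum_{n=1}^{|T(B)|} n^{-\alpha}
\;=\; O\!\left(\phi^{1-\alpha}\sum_{B} 2^{\ell(B)(\beta+\gamma(1-\alpha))}\right).
\end{align*}

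Now apply Lemma \ref{lem:LPbound} with penalty vector $a_\ell = 2^{\ell(\beta+\gamma(1-\alpha))}$. The monotonicity/ratio hypothesis $2a_{\ell+1}/a_\ell \leq n_+(\ell)/n_+(\ell-1) = 2^\gamma$ becomes $2^{1+\beta+\gamma(1-\alpha)} \leq 2^\gamma$, i.e.\ $\alpha\gamma - \beta \geq 1$, which is exactly the hypothesis of the corollary. The lemma then bounds the ball sum by $2^{d\ell^\star} a_{\ell^\star} = 2^{\ell^\star(d+\beta+\gamma(1-\alpha))}$, and by Corollary \ref{lem:size_partition}, $\ell^\star = \frac{1}{d+\gamma}\log_2(K/\phi) + 2$. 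Substituting and combining with the leading $\phi^{1-\alpha}$ factor, straightforward exponent arithmetic shows the $\phi$-exponent collapses to $-(d\alpha+\beta)/(d+\gamma)$ and the $K$-exponent to $(d+(1-\alpha)\gamma+\beta)/(d+\gamma)$, yielding the advertised bound.

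For the second sum involving $\ell(B_h^k)^\beta$, repeat the reduction with $a_\ell = \ell^\beta 2^{\gamma(1-\alpha)\ell}$. Here $a_{\ell+1}/a_\ell = (1+1/\ell)^\beta 2^{\gamma(1-\alpha)}$, so the ratio condition reads $\alpha\gamma \geq 1 + \beta\log_2(1+1/\ell)$. Invoking the refined version of the LP argument noted after Lemma \ref{lem:LPbound} (the ratio need only hold for $\ell \geq \ell^\star$) together with $\log_2(1+1/\ell) \leq 1/\ell^\star$ for $\ell \geq \ell^\star$, this reduces exactly to the assumed $\alpha\gamma - \beta/\ell^\star \geq 1$. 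The same chain of substitutions yields $2^{d\ell^\star}a_{\ell^\star} = (\ell^\star)^\beta \cdot 2^{\ell^\star(d+\gamma(1-\alpha))}$, and since $\ell^\star = O(\log K)$ this produces the stated $(\log_2 K)^\beta$ factor and the polynomial $\phi^{-d\alpha/(d+\gamma)}K^{(d+(1-\alpha)\gamma)/(d+\gamma)}$.

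The main obstacle is the bookkeeping step that converts the sum over all ever-active balls (internal ancestors plus leaves of the final tree) into a sum over only the leaves of $\P_h^K$, since Lemma \ref{lem:LPbound} is stated for the active partition. Because the tree is dyadic with branching $2^d$ and the penalty vectors $a_\ell$ used above are non-decreasing, the contribution of internal nodes at level $\ell$ is dominated by that of their $2^d$ descendants at level $\ell+1$; telescoping shows that the total contribution from internal ancestors is at most a factor $(1-2^{-d})^{-1} = O(1)$ larger than the leaf contribution, and is therefore absorbed in the big-$O$. Everything else is exponent arithmetic.
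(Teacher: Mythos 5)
Your proof is correct and follows essentially the same route as the paper's: reduce the episode sum to per-ball sums via the splitting threshold and a $p$-series bound, then apply Lemma~\ref{lem:LPbound} with the penalty vectors $a_\ell = 2^{\ell(\beta+\gamma(1-\alpha))}$ and $a_\ell = \ell^\beta 2^{\gamma(1-\alpha)\ell}$ (invoking the refined version for $\ell\ge\ell^\star$ in the second case), together with the same bound on $\ell^\star$ from Corollary~\ref{lem:size_partition}. The only difference is bookkeeping: where you absorb the internal (ancestor) nodes into the leaf sum via the $2^d$-children telescoping argument, the paper redistributes each ancestor's contribution over the leaves below it with weights $2^{d(\ell(B')-\ell(B))}$ and then sums the resulting geometric series over ancestor levels for each leaf--both yield the same constant-factor reduction to the active partition $\P_h^K$ before the LP bound is applied.
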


The proof of both the inequalities follows from a direct application of~\cref{lem:LPbound} (and in fact, using the same $\ell^{\star}$ as in~\cref{lem:size_partition}), after first rewriting the summation over balls in $\Pkh$ as a summation over active balls in $\Pkh[K]$. The complete proof is deferred to~\cref{app:techproofs}.

    \begin{figure}[t!]
    \centering
    \includegraphics[width=.75\textwidth]{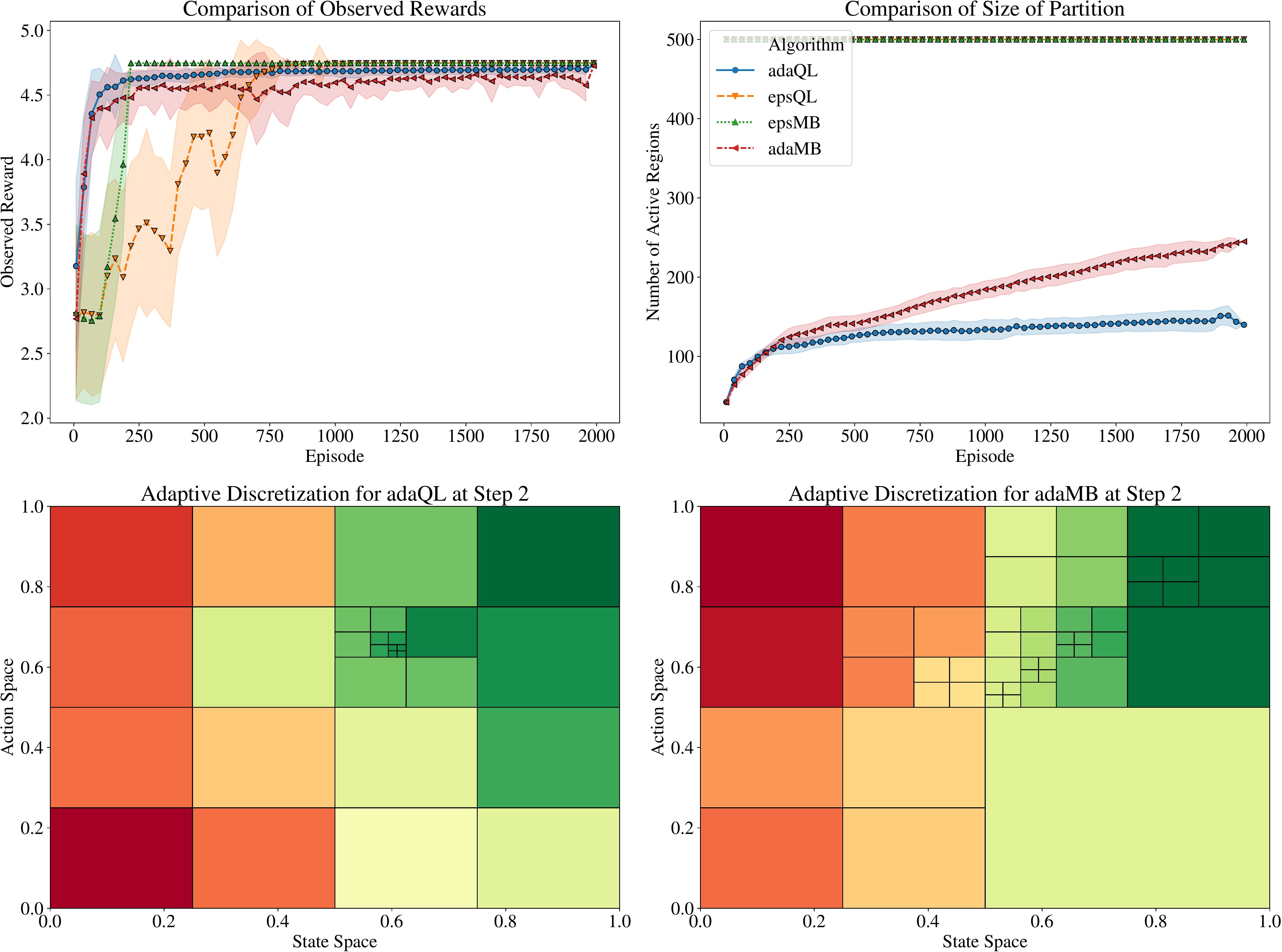}
    \caption{Comparison of the observed rewards, size of the partition, and resulting discretization for the four algorithms on the one-dimensional oil problem with no noise and survey function $f(x,a) = 1-(x-.7)^2$ and $\alpha = 1$.  The colours correspond to the estimated $\Qhat{h}{k}(B)$ values, where green corresponds to a larger estimated $Q$ value.}
    \label{fig:oil_quadratic_1}
\end{figure}

\begin{figure}[t!]
    \centering
    \includegraphics[width=.75\textwidth]{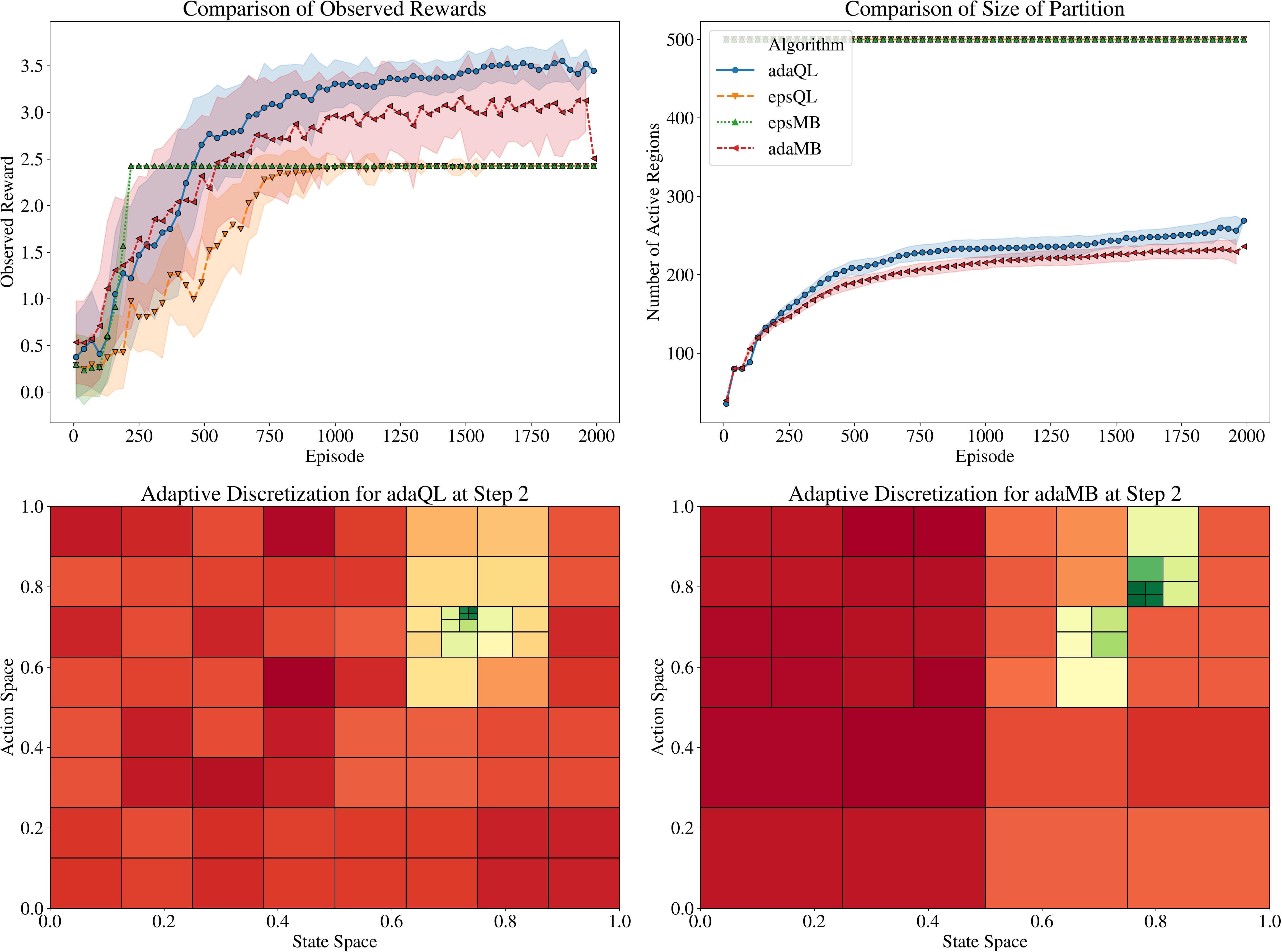}
    \caption{Comparison of the observed rewards, size of the partition, and resulting discretization for the four algorithms on the one-dimensional oil problem in the ``laplace-reward'' setting with $\alpha = 1$ and $\lambda = 10$.  The colours correspond to the estimated $\Qhat{h}{k}(B)$ values, where green corresponds to a larger estimated $Q$ value.}
    \label{fig:oil_sparse_1}
\end{figure}

\section{Experiments}
\label{sec:experiments}

In this section we give full details on the experiments and simulations performed.  For full code implementation and more results please see the Github repository at \url{https://github.com/seanrsinclair/AdaptiveQLearning}.

For the experiments we were motivated to work on ambulance routing and the oil discovery problem as efficient algorithms for reinforcement learning in operations tasks is still largely unexplored.  It is, however, a very natural objective in designing systems where agents must learn to navigate an uncertain environment to maximize their utility.  These experiments can have broader implications in planning effective public transportation, stationing medics at events, or even cache management (which technically is a discrete measurement, but is most usefully talked about in a continuous manner due to the magnitude of memory units).

The main objective for continuous space problems in reinforcement learning is to meaningfully store continuous data in a discrete manner while still producing optimal results in terms of performance and reward. We find that the oil discovery and ambulance routing problems are simple enough that we can realistically produce uniform discretization benchmarks to test our adaptive algorithm against. At the same time, they provide interesting continuous space scenarios that suggest there can be substantial improvements when using adaptive discretization in real world problems. The ambulance routing problem also allows us to naturally increase the state and action space dimensionality by adding another ambulance and consequently test our algorithms in a slightly more complex setting.  In particular, we compare \textsc{Adaptive Q-Learning}\cite{Sinclair_2019}, \textsc{Model-Free $\epsilon$-Net}\cite{song2019efficient}, \textsc{AdaMB} (Algorithm~\ref{alg:brief}), and a $\epsilon$-net variant of UCBVI~\cite{azar2017minimax}.  We refer to the simulations as \textsc{AdaQL}, \textsc{epsilonQL}, \textsc{AdaMB}, and \textsc{epsilonMB} respectively in the figures.


\subsection{Oil Discovery}

This problem, adapted from \cite{mason2012collaborative} is a continuous variant of the ``Grid World'' environment. It comprises of an agent surveying a 1D map in search of hidden ``oil deposits''.  The world is endowed with an unknown survey function which encodes the probability of observing oil at that specific location.  For agents to move to a new location they pay a cost proportional to the distance moved, and surveying the land produces noisy estimates of the true value of that location.  In addition, due to varying terrain the true location the agent moves to is perturbed as a function of the state and action.

\begin{figure}[t!]
    \centering
    \includegraphics[scale=0.75]{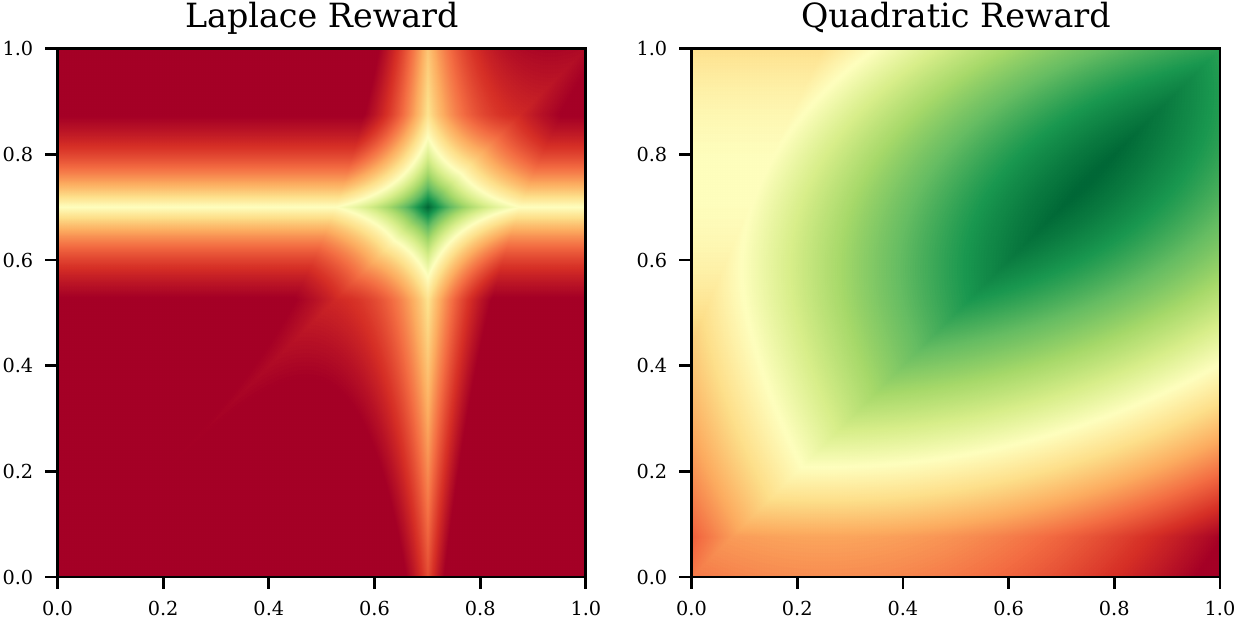}
    \caption{Plot of discretized approximation to $Q_h^\star$ for the one-dimensional oil problem in the ``laplace'' $(\lambda = 10)$ and ``quadratic'' ($\lambda = 1)$ setting.  Note that here the $x$-axis corresponds to states and the $y$-axis to actions.  The colour corresponds to the true $Q_2^\star$ value where green corresponds to a larger value.}
    \label{fig:heat_map}
\end{figure}

To formalize the problem, here the state space $\S = [0,1]$ and action space $\A = [0,1]$, where the product space is endowed with the $\ell_\infty$ metric.  The reward function is defined as 
\begin{align*}
    r_h(x,a) = \max\{ \min\{ f_h(x,a) - \alpha |x-a| + \epsilon, 1\}, 0\} 
\end{align*}
where $f_h(x,a)$ is the survey function, corresponding to the probability of observing an oil deposit at that specific location and $\alpha$ is a parameter used to govern the transportation cost and $\epsilon$ is independent Gaussian noise.  The transition function is defined as
\begin{align*}
    \Pr_h(\cdot \mid x,a) = \max\{ \min\{ \delta_{a} + N(0, \sigma_h(x,a)^2), 1\}, 0\}
\end{align*}
where again we have truncated the new state to fall within $[0,1]$ and the noise function $\sigma_h(x,a)$ allows for varying terrain in the environment leading to noisy transitions.  Clearly if we take $\sigma_h(x,a) = 0$ we recover deterministic transitions from a state $x$ taking action $a$ to the next state being $a$.

We performed three different simulations, where we took $f_h(x,a)$ and $\sigma_h(x,a)$ as follows:

\medskip

\noindent \textbf{Noiseless Setting}: $\sigma_h(x,a) = 0$ and the reward function $f_h(x,a) = 1 - \lambda(x-c)^2$ or $f_h(x,a) = 1 - e^{-\lambda |x-c|}$ where $c$ is the location of the oil deposit and $\lambda$ is a tunable parameter.

\medskip

\noindent \textbf{Sparse-Reward Setting}: $\sigma_h(x,a) = .025(x+a)^2$ and the survey function is defined via:
\begin{align*}
    f_h(x,a) & = \begin{cases}
        \frac{1}{h}\left(1-e^{-\lambda|x-.5|}\right) \qquad & h = 1 \\
        \frac{1}{h}\left(1-e^{-\lambda|x-.25|}\right) \qquad & h = 2 \\
        \frac{1}{h}\left(1-e^{-\lambda|x-.5|}\right) \qquad & h = 3 \\
        \frac{1}{h}\left(1-e^{-\lambda|x-.75|}\right) \qquad & h = 4 \\
        \frac{1}{h}\left(1-e^{-\lambda|x-1|}\right) \qquad & h = 5 \\
        \end{cases}
\end{align*}

\medskip

\noindent \textbf{Discussion.} We can see in Figure~\ref{fig:oil_sparse_1} and in Figure~\ref{fig:oil_quadratic_1} that the \textsc{epsilonQL} algorithm takes much longer to learn the optimal policy than its counterpart \textsc{epsilonMB} and both model-based algorithms. Seeing improved performance of model-based algorithms over model-free with a uniform discretization is unsurprising, as it is folklore that model-based algorithms perform better than model-free in discrete spaces.  

The two adaptive algorithms also offer a significantly smaller partition size than the corresponding uniform discretization. After comparing the adaptive algorithms' discretization of estimated $Q$-values with the true $Q_2^\star$-values in the state-action space, we find that the adaptive algorithms closely approximate the underlying $Q$ function (see Figure~\ref{fig:heat_map}). This is as the adaptive algorithms maintain a much finer partition in regions of the space where the underlying $Q^\star$ values are large, thus reducing unnecessary exploration (hence reducing the size of the partition), and allowing the algorithm to learn the optimal policy faster (low regret).  This demonstrates our algorithms' effectiveness in allocating space only to where it is advantageous to exploit more rigorously.  Interestingly, we see that the model-free algorithm is able to more closely resemble the underlying $Q^\star$ values than the model-based algorithm.  This affirms recent work showing instance-dependent bounds for model-free algorithms \cite{cao2020provably}, and our discussion on the drawback of model-based algorithms storing estimates of the transition kernel.

Moreover, in the attached github repository we include code testing the necessity of the splitting rule in the model based algorithm being of the form $n_+(B) = \phi 2^{\gam \lev{B}}$ for various forms of $\gam$.  While the theoretical results indicate that $\gam = d_\S$ is necessary for convergence, experimentally we see that $\gam = 2$ matching the model-free algorithm also suffices.

\begin{figure}[t!]
    \centering
    \includegraphics[width=.75\textwidth]{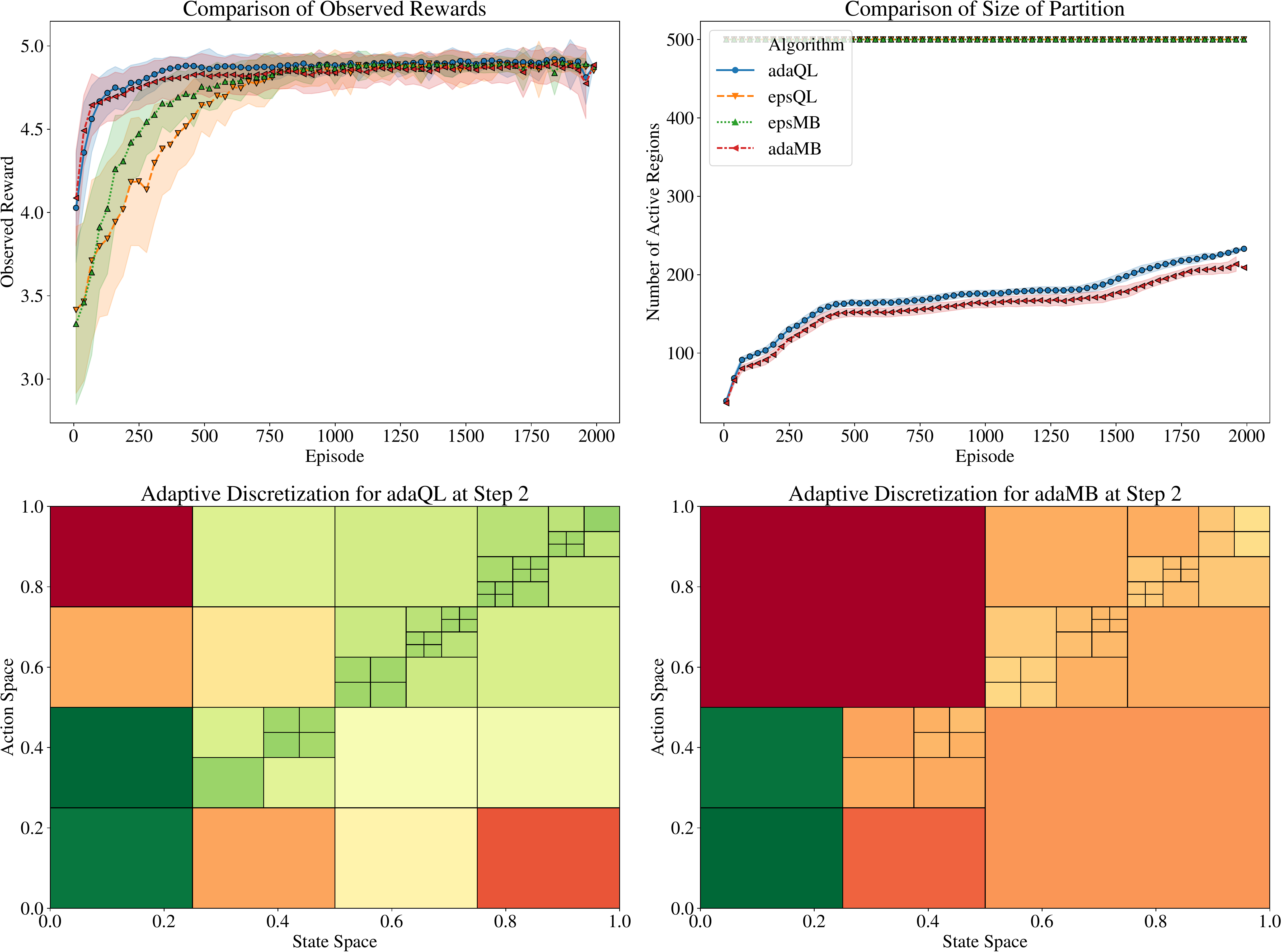}
    \caption{Comparison of the observed rewards, size of the partition, and resulting discretization for the four algorithms on the one ambulance problem with $\alpha = 1$ and arrivals $\F_h = \text{Beta}(5,2)$.  The colors correspond to the estimated $\Qhat{h}{k}(B)$ values, where green corresponds to a larger estimated $Q$ value.}
    \label{fig:single_ambulance}
\end{figure}

\begin{figure}[t!]
    \centering
    \includegraphics[width=.75\textwidth]{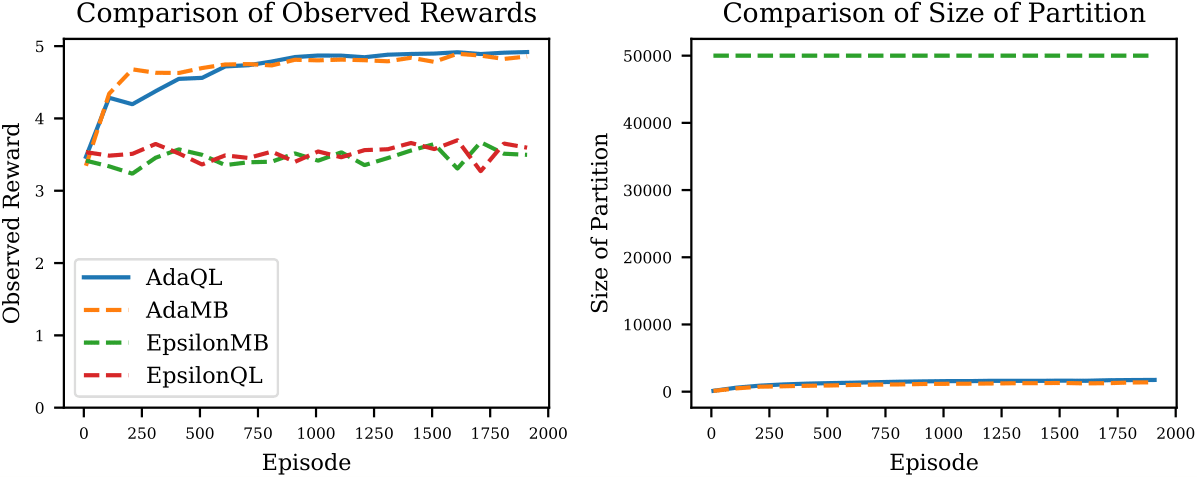}
    \caption{Comparison of the observed rewards and the size of the partition for the four algorithms on the two ambulance problem with $\alpha = 1$ and arrivals $\F_h = \text{Beta}(5,2)$.  We ommit confidence bars in this plot to help with readability.}
    \label{fig:multi_ambulance}
\end{figure}

\subsection{Ambulance Routing}

This problem is a widely studied question in operations research and control, and is closely related to the $k$-server problem.  A controller positions a fleet of $k$ ambulances over $H$ time periods, so as to minimize the transportation costs and time to respond to incoming patient requests.  In our setting, the controller first chooses locations to station the ambulances.  Next, a single request is realized drawn from a fixed $h$-dependent distribution.  Afterwards, one ambulance is chosen to travel to meet the demand, while other ambulances can re-position themselves.

Here the state space $\S = [0,1]^k$ and action space $\A = [0,1]^k$ where $k$ is the number of ambulances, and the product space is endowed with the $\ell_\infty$ metric.  The reward function and transition is defined as follows.  First, all ambulances travel from their initial state $x_i$ to their desired location $a_i$, paying a transportation cost to move the ambulance to location $a_i$.  Afterwards, a patient request location $p_h \sim \F_h$ is drawn i.i.d. from a fixed distribution $\F_h$.  The closest ambulance to $p_h$ is then selected to serve the patient, i.e. let 
\begin{align*}
    i^\star = \argmin_{i \in [k]} |a_i - p_h|
\end{align*}
denote the ambulance traveling to serve the patient.  The rewards and transitions are then defined via:
\begin{align*}
    x_i^{new} & = \begin{cases}
        a_i \qquad & i \neq i^\star \\
        p_h \qquad & i = i^\star
        \end{cases} \\
    r_h(x, a) & = 1 - \left(\frac{\alpha}{k} \norm{x - a}_1 + (1-\alpha) |a_{i^\star} - p_h| \right)
\end{align*}
where $\alpha$ serves as a tunable parameter to relate the cost of initially traveling from their current location $x$ to the desired location $a$, and the cost of traveling to serve the new patient $p_h$.  We tested values of $\alpha$ in $\{0, .25, 1\}$ where $\alpha = 1$ corresponds to only penalizing the ambulances for traveling to the initial location, $\alpha = 0$ only penalizes agents for traveling to serve the patient, and $\alpha = 0.25$ interpolates between these two settings.

For the arrival distributions, we took $\F_h = \text{Beta}(5,2)$, $\F_h = \text{Uniform}(0,1)$ and a time-varying arrival distribution:
\begin{align*}
    \F_h & = \begin{cases}
        \text{Uniform}(0,.25) \qquad & h = 1 \\
        \text{Uniform}(.25,.3) \qquad & h = 2 \\
        \text{Uniform}(.3,.5) \qquad & h = 3 \\
        \text{Uniform}(.5,.6) \qquad & h = 4 \\
        \text{Uniform}(.6,.65) \qquad & h = 5 \\
        \end{cases}
\end{align*}

\medskip

\noindent \textbf{Discussion.} In both the single ambulance case (Figure~\ref{fig:single_ambulance}) and two-ambulance (Figure~\ref{fig:multi_ambulance}) we see that the uniform discretization algorithms are outperformed by their adaptive counterparts.  Unsurprisingly, the partition size of both adaptive algorithms is significantly smaller than the epsilon algorithms, with \textsc{adaQL} being slightly more efficient.  We also see that both adaptive algorithms perform similarly in terms of rate of convergence and observed rewards for both the two and one ambulance problem.  Again, this is because the adaptive algorithms maintain a finer partition in regions of the space where the underlying $Q^\star$ values are large, thus reducing the size of the partition and leading the algorithm to learn the optimal policy faster.  When looking at the resulting discretizations in Figure~\ref{fig:single_ambulance} we observe similar results to the oil problem, where the model-free algorithm exhibits a finer partition than the model-based algorithm.
    \section{Conclusion}
\label{sec:conclusion}

We presented an algorithm using adaptive discretization for model-based online reinforcement learning based on one-step planning.  In worst case instances, we showed regret bounds for our algorithm which are competitive with other model-based algorithms in continuous settings under the assumption that the underlying dynamics of the system are Lipschitz continuous with respect to a known metric on the space.  We also provided simulations comparing model-based and model-free methods using an adaptive and fixed discretizations of the space on several canonical control problems.  Our experiments showed that adaptive partitioning empirically performs better than fixed discretizations in terms of both faster convergence and lower memory.

One future direction for the work is analyzing the discrepancy between model-based and model-free methods in continuous settings, as model-based algorithms so far have sub-optimal dependence on the dimension of the space.  Moreover, we hope to characterize problems where model-based methods using adaptive discretization are able to outperform model-free methods using a gap-dependent analysis inspired by recent gap-dependent analysis for tabular algorithms \cite{simchowitz2019}.
    \section*{Acknowledgements}
    Part of this work was done while Sean Sinclair and Christina Yu were visiting the Simons Institute for the Theory of Computing for the semester on the Theory of Reinforcement Learning. We also gratefully acknowledge funding from the NSF under grants ECCS-1847393, DMS-1839346, CCF-1948256, and CNS-1955997, the ARL under grant W911NF-17-1-0094, and the Cornell Engaged Grant: Applied Mathematics in Action.
    
    \bibliographystyle{plain}
    {\bibliography{references}}
    \appendix
    \newpage
    \section{Table of Notation}
\label{app:notation}

\renewcommand{\arraystretch}{1.2}
\begin{table*}[h!]
\begin{tabular}{l|l}
\textbf{Symbol} & \textbf{Definition} \\ \hline
\multicolumn{2}{c}{Problem setting specifications}\\
\hline
$\S,\A,H,K$  & State space, action space, steps per episode, number of episodes\\
$r_h(x,a)\,,\,T_h(\cdot \mid x,a)$ & Average reward/transition kernel for taking action $a$ in state $x$ at step $h$\\
$\pi_h,V_h^\pi(\cdot),Q_h^\pi(\cdot,\cdot)$ & Arbitrary step-$h$ policy, and Value/$Q$-function at step $h$ under $\pi$ \\
$\pi^{\star}_h, V_h^\star(\cdot),Q_h^\star(\cdot,\cdot)$ & Optimal step-$h$ policy, and corresponding Value/Q-function \\
$L_r,L_T, L_V$ & Lipschitz constants for $r$, $T$ and $V^\star$ respectively\\
$\D_\S$, $D_\A$, $\D$ & Metrics on $\S$, $\A$, and $\S \times \A$ respectively\\
\hline
\multicolumn{2}{c}{Algorithm variables and parameters}\\
\hline
$k,h$ & Index for episode, index for step in episode \\
$(X_h^k, A_h^k, R_h^k)$ & State, action, and received reward under algorithm at step h in episode k \\
$\Pkh$ & Partition tree of $\S \times \A$ for step $h$ at end of episode $k$\\
$\relevant_h^k(x)$ & Set of balls relevant for $x$ at $(k,h)$ (i.e., $\{B\in\Pkh[k-1]| (x,a)\in B \text{ for some } a \in \A\}$)\\
$\tilde{x}(B), \tilde{a}(B)$ & Associated state/action for ball $B$ (i.e., `center' of ball $B$) \\
$B_h^k$ & Ball in $\Pkh[k-1]$ selected at $(k,h)$
($\argmax_{B \in \text{RELEVANT}_h^k(X_h^k)} \Qhat{h}{k-1}(B)$)
\\
$\nplus{B}$ & Threshold number of samples after which ball $B$ is split  \\
$\Qhat{h}{k}(B)$ & $Q$-function estimates for ball $B\in\Pkh$, {at end of} episode $k$\\
$\Vtilde{h}{k}(A)$ & $V$-function estimate for a ball $A \in \S(\P_h^k)$, at end of episode $k$\\
$\Vhat{h}{k}(x)$ & $V$-function estimate for a point $x \in \S$, at end of episode $k$\\
$n_h^k(B)$ & Number of times $B$ has been chosen {by the end of episode $k$}\\
$\rhat{h}{k}(B),\That{h}{k}(\cdot \mid B)$ & Empirical rewards and transitions from ball $B\in\Pkh$ at end of episode $k$\\
$\rbar{h}{k}(B),\Tbar{h}{k}(\cdot \mid B)$ & Inherited reward/transition estimates for $B\in\Pkh$ {at end of episode $k$}\\
\hline
\multicolumn{2}{c}{Definitions used in the analysis}\\
\hline
$\Delta(\S)$ & Set of probability measures on $\S$ \\
$\dyad{\ell}$ & Set of dyadic cubes of $\S$ of diameter $2^{-\ell}$\\
$\S(\P_h^k)$ & Induced state partition from $\P_h^k$ \\
$\S(\P_h^k,x)$ & Region in $\S(\P_h^k)$ containing the point $x$ \\
$\S(B), \A(B)$ & Projection of a ball $B = B_\S \times B_\A$ to $B_\S$ and $B_\A$ accordingly\\
$\D(B)$ & The diameter of a ball $B$ \\
$\lev{B}$ & The depth in the tree of ball $B$, equivalent to $\log_2(\D(\S\times\A)/\D(B))$ \\
$R(K)$ & The regret up to episode $K$ \\
$\Exp{V_{h+1}(\hat{x}) \mid x,a}$ & $\mathbb{E}_{\hat{x} \sim \Pr_h(\cdot \mid x,a)} [V_{h+1}(\hat{x})]$\\
$\F_{k}$ & Sigma-field  generated by all information up to start of episode $k$\\
$\Expk{X}{k}$ & Expectation conditioned on information before episode $k$, i.e., $\Exp{X\mid \F_{k}}$\\
\hline
\end{tabular}
\caption{List of common notation}
\label{table:notation}
\end{table*}
    \section{Algorithm and Implementation}
\label{app:full_algo}

In this section we give the full pseudocode for implementing the algorithm, discuss the run-time and space complexity, and provide some discussion on other heuristic approaches to discretization.

\begin{algorithm*}[ht!]
	\begin{algorithmic}[1]
		\Procedure{AdaMB}{$\S, \A, \D, H, K, \pfail$}
			\State Initialize partitions $\P_h^0 = \S\times\A$ for $h\in[H]$, estimates $\Qhat{h}{0}(\cdot) = \Vhat{h}{k}(\cdot) = H-h+1$
			\For{each episode $k \gets 1, \ldots K$}
				\State Receive starting state $X_1^k$
				\For{each step $h \gets 1, \ldots, H$}
					\State Observe $X_h^k$ and determine $\relevant_h^k(X_h^k) = \{B\in\Pkh[k-1] \mid X_h^k \in B\}$
					\State Greedy selection rule: 
					pick $B_h^k = \argmax_{B \in \text{RELEVANT}_h^k(X_h^k)} \Qhat{h}{k-1}(B)$
                    \State Play action $A_h^k = \tilde{a}(B_h^k)$ associated with ball $B_h^k$; receive $R_h^k$ and transition to  $X_{h+1}^k$
					\State Update counts for $n_h^k(B_h^k), \rhat{h}{k}(B_h^k),$ and $\That{h}{k}(\cdot \mid B_h^k)$ via:
					\State $n_h^k(B_h^k) \leftarrow n_h^{k-1}(B_h^k) +1$
					\State $\rhat{h}{k}(B_h^k) \leftarrow \frac{(n_h^k(B_h^k) - 1)\rhat{h}{k}(B_h^k) + R_h^k}{n_h^k(B_h^k)}$
					\State $\That{h}{k}(A \mid B_h^k) = \frac{(n_h^k(B_h^k) - 1) \That{h}{k-1}(A \mid B_h^k) + \Ind{X_{h+1}^k \in A}}{n_h^k(B_h^k)}$ for $A \in \dyad{\lev{B_h^k}}$
					\If{$n_h^k(B_h^k) + 1\geq \nplus{B_h^k}$}
					  \textproc{Refine Partition}$(B_h^k)$
					\EndIf 
				\EndFor
				\textproc{Compute Estimates}$(B_h^k,R_h^k,X_{h+1}^k)_{h=1}^H$
			\EndFor
		\EndProcedure
		
		\Procedure{Refine Partition}{$B$, $h$, $k$}
		    \State Construct $\P(B) = \{B_1, \ldots, B_{2^{d}}\}$ a $2^{-(\lev{B}+1)}$-dyadic partition of $B$
		    \State Update  $\Pkh=\Pkh[k-1]\cup\P(B) \setminus B$ 
		    \State For each $B_i$, initialize $n_h^k(B_i)=0$, $\rhat{h}{k}(B_i) = 0$ and $\That{h}{k}(B_i) = 0$
		 \EndProcedure
		\Procedure{Compute Estimates}{$(B_h^k,R_h^k,X_{h+1}^k)_{h=1}^H$}
		\For{each $h \gets 1, \ldots H$ and $B \in \P_h^k$}
		    \State Construct $\rbar{h}{k}(B)$ and $\Tbar{h}{k}(B)$ by
		    \State $\rbar{h}{k}(B) = \frac{\sum_{B' \supseteq B} \rhat{h}{k}(B') n_h^k(B')}{\sum_{B' \supseteq B} n_h^k(B')}$
		    \State $\Tbar{h}{k}(A \mid B) = \frac{\sum_{B' \supseteq B} \sum_{A' \in \dyad{\lev{B'}}; A \subset A'}2^{-d_\S(\ell(B') - \ell(B))}n_h^k(B') \That{h}{k}(A' \mid B')}{\sum_{B' \supseteq B} n_h^k(B')}$ for $A \in \dyad{\lev{B}}$
		    \State Solve for $\Vhat{h+1}{k-1}(A)$ for every $A \in \dyad{\ell(B)}$ by
		    \[\Vhat{h+1}{k-1}(A) = \min_{A' \in \S(\P_{h+1}^{k-1})} \Vtilde{h+1}{k-1}(A') + L_V \D_\S(\tilde{x}(A), \tilde{x}(A'))\]
		    \State Set $\Qhat{h}{k}(B) = \rbar{h}{k}(B) + \rbonus{h}{k}(B) + \E_{A \sim \Tbar{h}{k}(\cdot \mid B)}[\Vhat{h+1}{k-1}(A)] + \tbonus{h}{k}(B)$
		\EndFor
		\For{each $h \gets 1, \ldots H$ and $A \in \S(\P_h^k)$}
		    \State Set $\Vtilde{h}{k}(A) = \min\{\Vtilde{h}{k-1}(A), \max_{B \in \P_h^k; \S(B) \supseteq A} \Qhat{h}{k}(B)\}$
		\EndFor
		\EndProcedure
	\end{algorithmic}
	\caption{Model-Based Reinforcement Learning with Adaptive Partitioning (\AdaMB)}
	\label{alg:full_brief}
\end{algorithm*}

\subsection{Implementation and Running Time}
\label{app:implementation_run_time}

Here we briefly discuss the oracle assumptions required for implementing the algorithm, and analyze the run-time and storage complexity.

\medskip

\noindent \textbf{Oracle Assumptions}: There are three main oracle assumptions needed to execute the algorithm.  In line 14 of \cref{alg:full_brief} we need access to a ``covering oracle'' on the metric space.  This oracle takes as input a ball $B \subset \S \times \A$ and outputs an $r$-covering of $B$.  This subroutine is easy in many metrics of interest (e.g. the Euclidean norm or any equivalent norms in $\mathbb{R}^d$) by just splitting each of the principle dimensions in half.  Second, we need to be able to compute $\S(B)$ for any $B \in \S \times \A$.  As our algorithm is maintaining a dyadic partition of the space, this subroutine is also simple to implement as each ball $B$ is of the form $\S(B) \times \S(A)$ and so the algorithm can store the two components separately.  Lastly, we require computing $\relevant_h^k(X)$.  By storing the partition as a tree, this subroutine can be implementing by traversing down the tree and checking membership at each step.  See the Github repository at \url{https://github.com/seanrsinclair/AdaptiveQLearning} for examples of implementing these methods.  \srsedit{Storing the discretization as a hash function would allow some of these access steps to be implemented in $O(1)$ time, with the downside being that splitting a region has a larger computational requirement.}

\medskip

\noindent \textbf{Storage Requirements}:  The algorithm maintains a partition $\P_h^k$ of $\S_h \times \A_h$ for every $h$, and the respective induced partition $\S(\P_h^k)$ whose size is trivially upper bounded by the size of the total partition.  Each element $B \in \P_h^k$ maintains four estimates.  The first three ($n_h^k(B)$, $\rhat{h}{k}(B)$, and $\Qhat{h}{k}(B)$) are linear with respect to the size of the partition.  The last one, $\That{h}{k}(\cdot \mid B)$ has size $|\dyad{\lev{B}}| \lesssim O(2^{d_\S \lev{B}}$).  Moreover, the algorithm also maintains estimate $\Vtilde{h}{k}(\cdot)$ over $\S(\P_h^k)$.  Clearly we have that the worst-case storage complexity arises from maintaining estimates of the transition kernels over each region in $\P_h^k$.  Thus we have that the total storage requirement of the algorithm is bounded above by \[ \sum_{h=1}^H \sum_{B \in \P_h^K} 2^{d_\S\lev{B}}.\]
Utilizing \cref{lem:LPbound} with $a_\ell = 2^{d_\S \ell}$ we find that the sum is bounded above by
\begin{align*}
     \sum_{h=1}^H \sum_{B \in \P_h^K} 2^{d_\S\lev{B}} & \leq \sum_{h=1}^H 2^{d \ell^\star} a_{\ell^\star} \\
     & \lesssim HK^{\frac{d+d_\S}{d+\gam}}.
\end{align*}
Plugging in the definition of $\gam$ from the splitting rule yields the results in \cref{tab:comparison_of_bounds}.

\medskip

\noindent \textbf{Run-Time}: We assume that the oracle access discussed occurs in constant time.  The inner loop of \cref{alg:full_brief} has four main steps.  Finding the set of relevant balls for a given state can be implemented in $\log_d(|\P_h^k|)$ time by traversing through the tree structure.  Updating the estimates and refining the partition occur in constant time by assumption on the oracle.  Lastly we need to update the estimates for $\Qhat{h}{k}$ and $\Vhat{h}{k}$.  Since the update only needs to happen for a constant number of regions (as only one ball is selected per step episode pair) the dominating term arises from computing the expectation over $\Tbar{h}{k}(\cdot \mid B_h^k)$.  Noting that the support of the distribution is $|\dyad{\lev{B_h^k}}| = 2^{d_\S \lev{B_h^k}}$ the total run-time of the algorithm is upper bounded by \[\sum_{h=1}^H \sum_{k=1}^K 2^{d_\S \lev{B_h^k}}.\]  Rewriting the sum we have
\begin{align*}
    \sum_{h=1}^H \sum_{k=1}^K 2^{d_\S \lev{B_h^k}} & \leq \sum_{h=1}^H \sum_{\ell \in \mathbb{N}} \sum_{B \in \P_h^K : \lev{B} = \ell} 2^{d_\S \ell} \sum_{k \in [K] : B_h^k = B} 1 \\
    & \lesssim \sum_{h=1}^H \sum_{\ell \in \mathbb{N}} \sum_{B \in \P_h^K : \lev{B} = \ell} 2^{d_\S \ell} n_+(B) \\
    & \lesssim \sum_{h=1}^H \sum_{\ell \in \mathbb{N}} \sum_{B \in \P_h^K : \lev{B} = \ell} 2^{d_\S \ell} \phi 2^{\gam \ell}.
\end{align*}
Utilizing \cref{lem:LPbound} with $a_\ell = 2^{(d_\S + \gam) \ell}$ we find that the sum is bounded above by
$H \phi 2^{d \ell^\star} a_{\ell^\star} \lesssim HK^{1 + \frac{d_\S}{d+\gam}}.$  Plugging in $\gam$ from the splitting rule yields the result in \cref{tab:comparison_of_bounds}.

\medskip

\srsedit{\noindent \textbf{Monotone Increasing Run-Time and Storage Complexity}: The run-time and storage complexity guarantees presented are monotonically increasing with respect to the number of episodes $K$.  However, to get sublinear minimax regret in a continuous setting for nonparametric Lipschitz models, the model complexity must grow over episodes.  In practice, one would run \AdaMB until running out of space - and our experiments show that \AdaMB uses resources (storage and computation) much better than a uniform discretization.  We are not aware of any storage-performance lower bounds, so this is an interesting future direction.}
	\section{Experiment Setup and Computing Infrastructure}

\noindent \textbf{Experiment Setup}: Each experiment was run with \srstodoedit{$200$} iterations where the relevant plots are taking the mean and a standard-normal confidence interval of the related quantities.  We picked a fixed horizon of $H = 5$ and ran it to \srstodoedit{$K = 2000$} episodes.  As each algorithm uses bonus terms of the form $c / \sqrt{t}$ where $t$ is the number of times a related region has been visited, we tuned the constant $c$ separately for each algorithm (for $c \in [.001, 10]$) and plot the results on the performance of the algorithm for the best constant $c$.

\medskip

\noindent \textbf{Fixed Discretization UCBVI}: We bench marked our adaptive algorithm against a fixed-discretization model-based algorithm \srsedit{with full and one-step planning}.  In particular, we implemented UCBVI from \cite{azar2017minimax} using a fixed discretization of the state-action space.  The algorithm takes as input a parameter $\epsilon$ and constructs an $\epsilon$-covering of $\S$ and $\A$ respectively.  It then runs the original UCBVI algorithm over this discrete set of states and actions.  The only difference is that when visiting a state $x$, as feedback to the algorithm, the agent snaps the point to its closest neighbour in the covering.

UCBVI has a regret bound of $H^{3/2}\sqrt{SAK} + H^4 S^2 A$ where $S$ and $A$ are the size of the state and action spaces.  Replacing these quantities with the size of the covering, we obtain $$H^{3/2} \sqrt{\epsilon^{-d_\S} \epsilon^{-d_\A} K} + H^4 \epsilon^{-2d_\S} \epsilon^{-d_\A}.$$
A rough calculation also shows that the discretization error is proportional to $H L K \epsilon$.  Tuning $\epsilon$ so as to balance these terms, we find that the regret of the algorithm can be upper bounded by

\[ L H^2 K^{2d/(2d + 1)}.\]

The major difference in this approach versus a uniform discretization of a model-free algorithm (e.g. \cite{song2019efficient}) is that in model-based algorithms the lower-order terms scale quadratically with the size of the state space.  In tabular settings, this term is independent of the number of episodes $K$.  However, in continuous settings the discretization depends on the number of episodes $K$ in order to balance the approximation error from discretizing the space uniformly.  See \cite{domingues2020regret} for a discussion on this dependence.

Obtaining better results for model-based algorithms with uniform discretization requires better understanding the complexity in learning the transition model, which ultimately leads to the terms which depend on the size of the state space.  The theoretical analysis of the concentration inequalities for the transitions in \cref{app:concentration} are min-max, showing that worst case dependence on the dimension of the state space is inevitable.  However, potential approaches could instead model bonuses over the value function instead of the transitions would lead to better guarantees~\cite{ayoub2020model}.  Our concentration inequalities on the transition kernels is a first-step at understanding this feature in continuous settings.

\begin{table}[!t]
\caption{Comparison of the average running time (in seconds) of the four different algorithms considered in the experimental results: \AdaMB (Algorithm~\ref{alg:brief}), \textsc{Adaptive Q-Learning}~\cite{Sinclair_2019}, \textsc{Net-Based Q-Learning}~\cite{song2019efficient}, and a \textsc{Fixed Discretization UCBVI}~\cite{azar2017minimax}.}
\label{tab:computation}
\setlength\tabcolsep{0pt} 
\centering
\begin{tabular*}{\columnwidth}{@{\extracolsep{\fill}}l|cccc}
\hline
  Problem  & \textsc{AdaMB} & \textsc{AdaQL} & \textsc{epsilonQL} & \textsc{epsilonMB}\\
\hline
\textsc{1 Ambulance\qquad} & 8.07 & 0.90 & 1.10 & 16.59 \\
\textsc{2 Ambulances} & 22.92 & 1.57 & 9.54 & 90.92 \\
\textsc{Oil Problem} & 5.63 & 1.31 & 2.21 & 20.27 \\
\hline
\end{tabular*}
\end{table}

\medskip

\noindent \textbf{Computing Infrastructure and Run-Time}: The experiments were conducted on a personal computer with an AMD Ryzen 5 3600 6-Core 3.60 GHz processor and 16.0GB of RAM. No GPUs were harmed in these experiments.  The average computation time for running a single simulation of an algorithm is listed in Table~\ref{tab:computation}.  As different hyperparameter settings result in similar run-times, we only show the three major simulations conducted \srsedit{with fixed bonus scaling $c = 1$}.  As to be expected, the adaptive algorithms ran much faster than their uniform discretization counterparts.  Moreover, the model-free methods have lower running time than the model-based algorithms.  These results mimic the run-time and space complexity discussed in \cref{tab:comparison_of_bounds}.
    \section{Regret Derivation}
\label{sec:finalregret}

In this section we combine all of the previous results to derive a final regret bound.  We first provide a bound on the expected regret for \AdaMB, before using a simple concentration inequality to obtain a high probability result.
\begin{theorem}
	\label{thm:regret_app}
	Let $d = d_A + d_S$, then the expected regret of \AdaMB for any sequence of starting states $\{X_1^k\}_{k=1}^K$ is upper bounded by
	\begin{align*}
	\Exp{R(K)} & \lesssim \begin{cases}
	LH^{1+\frac{1}{d+1}}K^{\frac{d+d_\S - 1}{d+d_\S}} \quad d_S > 2 \\
	LH^{1 + \frac{1}{d+1}}K^{1-\frac{1}{d+d_\S + 2}} \quad d_S \leq 2
	\end{cases}
	\end{align*}
	where $L = 1 + L_r + L_V + L_V L_T$ and $\lesssim$ omits poly-logarithmic factors of $\frac{1}{\delta}, H,K,$ $d$, and any universal constants.
\end{theorem}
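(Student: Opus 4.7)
The plan is to feed the regret decomposition of Lemma~\ref{lemma:regret_decomposition} into the LP-based counting machinery of Section~\ref{app:lp_bound}. On the clean event of Lemmas~\ref{lemma:reward_confidence}--\ref{lemma:transition_confidence} (joint probability at least $1-3\delta$, with the failure contributing at most $O(\delta HK)$ to the expected regret, absorbed by taking $\delta = 1/K$), Lemma~\ref{lemma:regret_decomposition} bounds $\Exp{R(K)}$ by four pieces: a telescoping value-drop, the reward-bonus sum $\sum_{k,h}\rbonus{h}{k}(B_h^k)$, the transition-bonus sum $\sum_{k,h}\tbonus{h}{k}(B_h^k)$, and the diameter term $L_V\sum_{k,h}\D(B_h^k)$. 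I handle each separately, then pick the splitting parameters $\phi,\gamma$ to balance them.

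For the first piece, Lemma~\ref{lem:decreasing_process} yields the sample-path bound $H^2\max_h|\S(\P_h^K)|$, and Corollary~\ref{lem:size_partition} controls the partition size by $O(4^d(K/\phi)^{d/(d+\gamma)})$. The reward bonus splits into a Hoeffding piece, summed via Corollary~\ref{lem:countbound} with $\alpha=1/2,\beta=0$ (requiring $\gamma\geq 2$), and an $L_r\D(B)$ piece, summed by applying Lemma~\ref{lem:LPbound} directly with penalty $a_\ell = 2^{(\gamma-1)\ell}$; the same LP bound also handles the stand-alone $L_V\D(B_h^k)$ term, giving $O(\phi^{1/(d+\gamma)}K^{(d+\gamma-1)/(d+\gamma)})$. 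The transition bonus contains the same diameter and Hoeffding pieces plus the Wasserstein-concentration term, which is the dominant contribution: for $d_\S>2$ it equals $c\,n^{-1/d_\S}$, summed via Corollary~\ref{lem:countbound} with $\alpha=1/d_\S,\beta=0$, forcing $\gamma\geq d_\S$; for $d_\S\leq 2$ it equals $c\sqrt{2^{d_\S\lev{B}}/n}$, summed with $\alpha=1/2,\beta=d_\S/2$, forcing $\gamma\geq d_\S+2$. Choosing $\gamma$ equal to these tight thresholds recovers the $K$-exponents $(d+d_\S-1)/(d+d_\S)$ and $(d+d_\S+1)/(d+d_\S+2)$ stated in the theorem.

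To produce the $H$-prefactor I then set $\phi=H^{(d+\gamma)/(d+1)}$. This turns the partition term into $H^{(d+2)/(d+1)}K^{d/(d+\gamma)}$, absorbed into the leading term since its $K$-exponent is strictly smaller; it turns the diameter bonus into $L_V H^{1+1/(d+1)}K^{(d+\gamma-1)/(d+\gamma)}$, exactly matching the target; and it turns the Wasserstein bonus into a strictly smaller $H$-exponent, also absorbed. Collecting constants from the Lipschitz hypotheses yields the factor $L = 1+L_r+L_V+L_V L_T$. To lift this to the high-probability form in the theorem, I would apply Azuma--Hoeffding to the martingale $R(K)-\Exp{R(K)\mid\F_1}$, contributing only an $O(H\sqrt{K\log(1/\delta)})$ lower-order term.

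The main obstacle I expect is the exponent bookkeeping when combining the four pieces: each has a different $\phi$-exponent and the Wasserstein exponent changes regime at $d_\S=2$. Verifying that the $L_V\D(B)$ sum is indeed the $K$-dominant term after the choice of $\gamma$, while the Hoeffding and partition terms are strictly dominated in both $H$ and $K$, requires careful case-by-case arithmetic. A secondary subtlety is that the inequality $\alpha\gamma-\beta\geq 1$ in Corollary~\ref{lem:countbound} is exactly saturated at both chosen values of $\gamma$, so the bound is not robust to a weaker splitting threshold; both tight values $\gamma=d_\S$ and $\gamma=d_\S+2$ are forced by the Wasserstein concentration of Lemma~\ref{lemma:transition_confidence}.
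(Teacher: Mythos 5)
Your proposal is correct and follows essentially the same route as the paper's own proof: the same decomposition via \cref{lemma:regret_decomposition}, the same bounds from \cref{lem:decreasing_process}, \cref{lem:size_partition}, and the LP-based counting lemmas, and the same parameter choices $\gamma = d_\S$ (resp.\ $d_\S+2$) with $\phi = H^{(d+\gamma)/(d+1)}$. The only cosmetic difference is bookkeeping: the paper bounds all diameter-type terms by $(\phi/n_h^k(B_h^k))^{1/\gamma}$ and applies \cref{lem:countbound} with $\alpha = 1/\gamma$, whereas you sum some pieces separately (Hoeffding with $\alpha=1/2$, diameters via \cref{lem:LPbound} with $a_\ell = 2^{(\gamma-1)\ell}$), which yields identical exponents.
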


\begin{proof}
	Using~\cref{lemma:regret_decomposition} we have that
	\begin{align*}
	\Exp{R(K)} & \leq \sum_{k=1}^K \sum_{h=1}^H \Exp{\Vtilde{h}{k-1}(\S(\P_h^{k-1},X_h^{k})) - \Vtilde{h}{k}(\S(\P_h^{k},X_h^k))} \\
	& + \sum_{h=1}^H \sum_{k=1}^K \Exp{2\rbonus{h}{k}(B_h^k)} + \sum_{h=1}^H \sum_{k=1}^K \Exp{2\tbonus{h}{k}(B_h^k)} + \sum_{k=1}^K \sum_{h=1}^H L_V \Exp{\D(B_h^k)}.
	\end{align*}
	We ignore the expectations, arguing a worst-case problem-independent bound on each of the quantities which appear in the summation.  At the moment, we leave the splitting rule defined in the algorithm description as $\nplus{\ell} = \phi2^{\gam \ell}$, where we specialize the regret bounds for the two cases at the end.  We also ignore all poly-logarithmic factors of $H$, $K$, $d$, and absolute constants in the $\lesssim$ notation.
	
	First note that via the splitting rule the algorithm maintains that for any selected ball $B$ we have that $\D(B) \leq (\phi / n_h^k(B))^{1/\gam}$.
	
	\noindent \textbf{Term One}: Using~\cref{lem:decreasing_process} we have that
	\[\sum_{k=1}^K \sum_{h=1}^H \Vtilde{h}{k-1}(\S(\P_h^{k-1}, X_h^k)) - \Vtilde{h}{k}(\S(\P_h^k, X_h^k)) \leq H^2 \max_{h} |\S(\P_h^k)|.\]
	However, using~\cref{lem:size_partition} we have that $|\S(\P_h^k)| \leq |\P_h^k| \leq 4^d \left(\frac{K}{\phi}\right)^{d / (d + \gam)}$.  Thus we can upper bound this term by $H^2 4^{d} \left(\frac{K}{\phi}\right)^{d / (d + \gam)} \lesssim H^2 K^{d/(d+\gam)} \phi^{-d/(d+\gam)}$.

	\noindent \textbf{Term Two and Four}: 
	\begin{align*}
	\sum_{h=1}^H \sum_{k=1}^K \rbonus{h}{k}(B_h^k) + L_V \D(B_h^k) & = \sum_{h=1}^H \sum_{k=1}^K \sqrt{\frac{8\log(2HK^2/\delta)}{\sum_{B' \supseteq B_h^k} n_h^k(B')}} + 4 L_r \D(B_h^k) + L_V \D(B_h^k)\\
	& \lesssim \sum_{h=1}^H \sum_{k=1}^K \sqrt{\frac{1}{n_h^k(B_h^k)}} + (L_r + L_V) \left(\frac{\phi}{n_h^k(B_h^k)}\right)^{\frac{1}{\gam}}
	\end{align*}
	where we used the definition of $\rbonus{h}{k}(B)$ and the splitting rule.
	
	Next we start by considering the case when $d_\S > 2$.
	
	\noindent \textbf{Term Three}:
	\begin{align*}
	\sum_{h=1}^H \sum_{k=1}^K \tbonus{h}{k}(B_h^k) & = \sum_{h=1}^H \sum_{k=1}^K (L_T + 1) L_V 4\D(B) + 4 L_V \sqrt{\frac{\log(HK^2 / \delta)}{\sum_{B' \subseteq B} n_h^k(B')}} \\
	&\qquad + \sum_{h=1}^H \sum_{k=1}^K L_T L_V \D(B) + c L_V \left(\sum_{B' \subseteq B} n_h^k(B')\right)^{-1/d_\S} \\
	& \lesssim \sum_{h=1}^H \sum_{k=1}^K (L_V L_T + L_V) \left(\frac{\phi }{n_h^k(B_h^k)} \right)^{\frac{1}{\gam}} + L_V \sqrt{\frac{1}{n_h^k(B_h^k)}} + L_V \left(n_h^k(B_h^k)\right)^{-1/d_\S}.
	\end{align*}
	
	where we used the definition of $\tbonus{h}{k}(B_h^k)$.  
	
	\noindent \textbf{Combining Terms}: We will take $\phi \geq 1$ in order to tune the regret bound in terms of $H$ and $\gam = d_\S$ in this situation.  Using this we find that the dominating term is of the form $(\phi / n_h^k(B_h^k))^{1 / \gam}$.  Thus we get that for $L = 1 + L_r + L_V + L_V L_T$,
	$$R(K) \lesssim H^2 \phi^{-\frac{d}{d + \gam}} K^{\frac{d}{d+\gam}} + L \phi^{\frac{1}{\gam}} \sum_{h=1}^H \sum_{k=1}^K \left(\frac{1}{n_h^k(B_h^k)} \right)^\frac{1}{\gam}.$$
	We now use~\cref{lem:countbound} for the case when $\alpha = \frac{1}{\gam}$ and $\beta = 0$.  This satisfies the required conditions of the result and we get:
	\begin{align*}
	R(K) & \lesssim H^2 \phi^{-\frac{d}{d + \gam}} K^{\frac{d}{d + \gam}} + H L \phi^{\frac{1}{\gam}} \phi^{-\frac{d}{\gam(d+\gam)}}K^{\frac{d+\gam - 1}{d + \gam}} \\
	& = H^2 \phi^{-\frac{d}{d + \gam}} K^{\frac{d}{d + \gam}} + H L \phi^{\frac{1}{d+\gam}}K^{\frac{d+\gam - 1}{d + \gam}}.
	\end{align*}
	Taking $\phi$ as $\phi = H^{\frac{d+\gam}{d+1}} \geq 1$ and plugging in $\gam = d_\S$ we see that $$R(K) \lesssim LH^{1+\frac{1}{d+1}}K^{\frac{d+d_\S - 1}{d+d_\S}}.$$

	
	
	Next we consider the case when $d_S \leq 2$.  The first two terms and the fourth term remain the same, whereby now in the third term we have:
	\begin{align*}
	\sum_{h=1}^H \sum_{k=1}^K \tbonus{h}{k}(B_h^k) & = \sum_{h=1}^H \sum_{k=1}^K L_V \left((5 L_T + 6) \D(B_h^k) + 4 \sqrt{\frac{\log(HK^2)}{\sum_{B' \supseteq B_h^k} n_h^k(B')}} + c \sqrt{\frac{2^{d_\S \lev{B_h^k}}}{\sum_{B' \supseteq B_h^k} n_h^k(B')}}\right)\\
	& \lesssim \sum_{h=1}^H \sum_{k=1}^K L_V(1 + L_T) \left(\frac{\phi }{n_h^k(B_h^k)}\right)^{1/\gam} + L_V \sqrt{\frac{1}{n_h^k(B_h^k)}} + L_V \sqrt{\frac{2^{d_\S \ell(B_h^k)}}{n_h^k(B_h^k)}}.
	\end{align*}
	
	\noindent \textbf{Combining Terms}: Again using that we take $\phi \geq 1$ we can combine terms to get:
	$$R(K) \lesssim H^2 \phi^{-\frac{d}{d+\gam}} K^{\frac{d}{d+\gam}} + L \sum_{h=1}^H \sum_{k=1}^K \left(\frac{1}{n_h^k(B_h^k)}\right)^{\frac{1}{\gam}} + L \sum_{h=1}^H \sum_{k=1}^K \sqrt{\frac{2^{d_\S \lev{B_h^k}}}{n_h^k(B_h^k)}}.$$
	
	Again using \cref{lem:countbound} for the case when $\gam = d_\S + 2$ which satisfies the requirements we get
	\begin{align*}
	R(K) & \lesssim H^2 \phi^{-\frac{d}{d+\gam}} K^{\frac{d}{d+\gam}} + L H \phi^{\frac{1}{d+\gam}} K^{\frac{d+\gam - 1}{d + \gam}} + L H \phi^{-\frac{d}{2(d+\gam)}} K^{\frac{d+\frac{1}{2}\gam + \frac{1}{2}}{d+\gam}} \\
	& \lesssim H^2 \phi^{-\frac{d}{d+\gam}} K^{\frac{d}{d+\gam}} + L H \phi^{\frac{1}{d+\gam}} K^{\frac{d+\gam - 1}{d+\gam}}.
	\end{align*}
	where we used the fact that the second term dominates the third when $\gam = d_\S + 2$.  Taking $\phi$ the same as the previous case we get:
	$$R(K) \lesssim LH^{1 + \frac{1}{d + 1}}K^{\frac{d+d_\S + 1}{d+d_\S + 2}}.$$
\end{proof}

Using this bound on the expected regret and a straightforward use of Azuma-Hoeffding's inequality we can show the following:
\begin{theorem}
	\label{thm:regret_app_prob}
	Let $d = d_A + d_S$, then the regret of \AdaMB for any sequence of starting states $\{X_1^k\}_{k=1}^K$ is upper bounded with probability at least $1 - \delta$ by
	\begin{align*}
	R(K) & \lesssim \begin{cases}
	LH^{1+\frac{1}{d+1}}K^{\frac{d+d_\S - 1}{d+d_\S}} \quad d_S > 2 \\
	LH^{1 + \frac{1}{d+1}}K^{\frac{d+d_\S + 1}{d+d_\S + 2}} \quad d_S \leq 2
	\end{cases}
	\end{align*}
	where $L = 1 + L_r + L_V + L_V L_T$ and $\lesssim$ omits poly-logarithmic factors of $\frac{1}{\delta}, H,K,$ $d$, and any universal constants.
\end{theorem}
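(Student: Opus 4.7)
The plan is to lift the expected-regret bound of~\cref{thm:regret_app} to a high-probability bound by replacing the outer expectations with path-wise quantities at the cost of an additive Azuma--Hoeffding term, and then arguing that this extra term is of lower order. Throughout, I will work on the intersection of the clean events from~\cref{lemma:reward_confidence,lemma:transition_confidence,lemma:optimism}, which by a union bound holds with probability at least $1-3\delta$.

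First, under the optimism event of~\cref{lemma:optimism}, for each episode $k$ we have $V_1^{\star}(X_1^k) \leq \Vtilde{1}{k-1}(\S(\P_1^{k-1}, X_1^k))$, so
\[
R(K) \;\leq\; \sum_{k=1}^{K}\bigl[\Vtilde{1}{k-1}(\S(\P_1^{k-1}, X_1^k)) - V_1^{\pi^k}(X_1^k)\bigr].
\]
Next, I would unroll~\cref{lemma:recursive} along the true trajectory rather than in expectation. Concretely, iterating the one-step Bellman identity $V_h^{\pi^k}(X_h^k) = r_h(X_h^k,A_h^k) + \E[V_{h+1}^{\pi^k}(X_{h+1}^k)\mid X_h^k,A_h^k]$ and substituting the optimistic update~\eqref{eq:q_update} yields the same decomposition as~\cref{lemma:regret_decomposition}, except that the outer expectations over the trajectory are removed, and one extra term appears, namely the martingale
\[
M_K \;=\; \sum_{k=1}^{K}\sum_{h=1}^{H}\Bigl(\E\!\left[\Vhat{h+1}{k-1}(X_{h+1}^k)\mid \F_{k},X_h^k,A_h^k\right] - \Vhat{h+1}{k-1}(X_{h+1}^k)\Bigr).
\]
Each increment is measurable with respect to the natural per-step filtration and bounded by $H$, so Azuma--Hoeffding gives $|M_K| \lesssim H\sqrt{HK\log(1/\delta)}$ with probability at least $1-\delta$.

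All remaining summands in the decomposition are already bounded path-wise: the telescoping $\Vtilde{}{}$ term is controlled almost surely by~\cref{lem:decreasing_process}, and the bonus/diameter sums $\sum_{k,h}\rbonus{h}{k}(B_h^k)$, $\sum_{k,h}\tbonus{h}{k}(B_h^k)$, and $\sum_{k,h}\D(B_h^k)$ are bounded almost surely via~\cref{lem:countbound} in exactly the same way as in the derivation of~\cref{thm:regret_app}. Thus these terms give the same $H^{1+1/(d+1)}K^{(d+d_\S-1)/(d+d_\S)}$ (resp.\ $K^{(d+d_\S+1)/(d+d_\S+2)}$) bound on every sample path in the clean event.

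Taking a union bound over the clean event and the Azuma event (with $\delta$ rescaled by a constant), the total failure probability is at most $\delta$. The Azuma contribution $\tilde O(H^{3/2}\sqrt{K})$ is dominated by the main regret term in both regimes, yielding the stated bound. The main subtlety is bookkeeping: defining a filtration that interleaves episode and step indices so that the one-step innovations $\Vhat{h+1}{k-1}(X_{h+1}^k) - \E[\Vhat{h+1}{k-1}(X_{h+1}^k)\mid \cdot]$ form a genuine martingale difference sequence, and verifying that replacing $V^{\pi^k}$ by $\Vhat{}{k-1}$ in the recursion (via optimism) does not introduce additional non-martingale terms. Once this is set up, the remainder is essentially the same algebra as in the proof of~\cref{thm:regret_app}.
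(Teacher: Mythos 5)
Your proposal is correct in outline, but it takes a genuinely different route from the paper. The paper's proof of this theorem is much shorter: it takes the expected-regret bound of \cref{thm:regret_app}, centers the cumulative regret at its expectation, notes that each per-episode increment $V_1^\star(X_1^k)-V_1^{\pi^k}(X_1^k)$ is bounded by $O(H)$, and applies Azuma--Hoeffding once at the episode level to get $R(K)\leq \Exp{R(K)}+O\bigl(H\sqrt{K\log(1/\delta)}\bigr)$, which is absorbed into the main term. You instead redo the whole decomposition path-wise on the clean events, exploiting the fact that \cref{lem:decreasing_process} and \cref{lem:countbound} are already sample-path (indeed adversarial) bounds, and control the transition-sampling noise by a per-step martingale. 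What your route buys: the per-step innovations (which should include both the $\Vhat{h+1}{k-1}$ and the $V_{h+1}^{\pi^k}$ parts of the recursion, so fold the latter into $M_K$ as well) form a genuine martingale difference sequence with respect to the interleaved step/episode filtration, whereas the paper's argument centers each episode's regret at its \emph{unconditional} mean and asserts the martingale property even though $\pi^k$ and $X_1^k$ are strongly correlated with the past; your version is the more standard and more robust high-probability argument. The costs are (i) the bookkeeping you already flag, and (ii) a lower-order term of order $H^{3/2}\sqrt{K\log(1/\delta)}$ rather than the paper's $H\sqrt{K\log(1/\delta)}$ (you apply Azuma to $HK$ increments each of size $O(H)$, the paper to $K$ increments of size $O(H)$); hiding your term under $LH^{1+\frac{1}{d+1}}K^{\frac{d+d_\S-1}{d+d_\S}}$ (resp.\ the $d_\S\leq 2$ exponent) requires $K$ to be polynomially large in $H$ once $d\geq 2$, which is harmless in the intended regime but deserves a sentence. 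With those two points made explicit, your argument does yield the stated theorem.
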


\begin{proof}
	Let $R(K) = \sum_{k=1}^K V_1^\star(X_1^k) - V_1^{\pi^k}(X_1^k)$ be the true regret of the algorithm.  We apply Azuma-Hoeffding's inequality, where we use \cref{thm:regret_app} to find a bound on its expectation.  Keeping the same notation as before, let $Z_\tau = \sum_{k=1}^\tau V_1^\star(X_1^k) - V_1^{\pi^k} - \Exp{\sum_{k=1}^\tau V_1^\star(X_1^k) - V_1^{\pi^k}(X_1^k)}.$  Clearly we have that $Z_\tau$ is adapted to the filtration $\F_\tau$, and has finite absolute moments.  Moreover, using the fact that the value function is bounded above by $H$ then
	\begin{align*}
	|Z_\tau - Z_{\tau - 1}| & = |V_1^\star(X_1^\tau) - V_1^{\pi^\tau}(X_1^\tau) - \Exp{V_1^\star(X_1^\tau) - V_1^{\pi^\tau}(X_1^\tau)}| \\
	& \leq 4H.
	\end{align*}
	Thus we get, via a straightforward application of Azuma-Hoeffding's that with probability at least $1 - \delta$,
	\begin{align*}
	R(K) & \leq \Exp{R(K)} + \sqrt{32H^2K\log(1/\delta)} \\
	& \lesssim \begin{cases}
	LH^{1+\frac{1}{d+1}}K^{\frac{d+d_\S - 1}{d+d_\S}} \quad d_S > 2 \\
	LH^{1 + \frac{1}{d+1}}K^{1-\frac{1}{d+d_\S + 2}} \quad d_S \leq 2.
	\end{cases}
	\end{align*}  
	\end{proof}

	\section{Proofs for Technical Results}
\label{app:techproofs}

Finally we provide some additional proofs of the technical results we use in our regret analysis.

\begin{proof}[Proof of~\cref{lem:sum_ancestors}]
Recall we want to show that for any ball $B$ and $h, k \in [H] \times [K]$ we have 
\begin{align*}
    \frac{\sum_{B' \supseteq B} \D(B') n_h^k(B')}{\sum_{B' \supseteq B} n_h^k(B')} \leq 4 \D(B).
\end{align*}
First notice that the term on the left hand side can be rewritten as:
\begin{align*}
\frac{\sum_{B' \supseteq B} \D(B') n_h^k(B')}{\sum_{B' \supseteq B} n_h^k(B')} & = \frac{1}{t} \sum_{i=1}^t \D(B_h^{k_i})
\end{align*}
where $t = \sum_{B' \supseteq B} n_h^k(B')$ is the number of times $B$ or its ancestors were selected and $k_1, \ldots, k_t$ are the episodes for which they were selected.  Using the fact that $\D(B_h^{k_i})$ are decreasing over time as the partition is refined, this average can be upper bounded by only averaging over the ancestors of $B$, i.e. 
\begin{align*}
\frac{\sum_{B' \supseteq B} \D(B') n_h^k(B')}{\sum_{B' \supseteq B} n_h^k(B')} \leq \frac{\sum_{B' \supsetneq B} \D(B')n_h^k(B')}{\sum_{B'\supsetneq B} n_h^k(B')}.
\end{align*}
Using the splitting threshold $\nplus{B} = \phi2^{\gamma\lev{B}}$, we can upper bound this quantity by
\begin{align*}
    \frac{\sum_{B' \supsetneq B} n_h^k(B')\D(B')}{\sum_{B'\supsetneq B} n_h^k(B')} & = \frac{\sum_{i=0}^{\lev{B} - 1} 2^{-i} \phi 2^{\gam i}}{\sum_{i=0}^{\lev{B} - 1} \phi 2^{\gam i}} \\
    &\leq \frac{2^{(\gam-1)(\lev{B}-1)} \sum_{i=0}^{\infty} 2^{-(\gam - 1)i}}{2^{\gam (\lev{B} - 1)}} \\
    &\leq \frac{2 \cdot 2^{(\gam-1)(\lev{B}-1)}}{2^{\gam (\lev{B} - 1)}} \qquad\text{ because $2^{-(\gam-1)} \leq \tfrac12$}\\
    &= 4 \cdot 2^{-\lev{B}} = 4 \D(B).
\end{align*}
\end{proof}

\begin{proof}[Proof of~\cref{lemma:transition_confidence}, for $d_\S > 2$]

Let $h,k \in [H] \times [K]$ and $B \in \P_h^k$ be fixed and $(x,a) \in B$ be arbitrary.  
We use a combination of Proposition 10 and 20 from \cite{weed2019sharp}.  Let $P_0 = T_h(\cdot \mid x_0,a_0)$ where $(x_0, a_0) = (\tilde{x}(B), \tilde{a}(B))$ is the center of the ball $B$.  Our goal then is to come up with concentration between the one-Wasserstein metric of $\Tbar{h}{k}(\cdot \mid B)$ and $T_h(\cdot \mid x,a)$.  We break the proof down into four stages, where we show concentration between the one-Wasserstein distance of various measures. As defined, $\Tbar{h}{k}(\cdot \mid B)$ is a distribution over $\dyad{\ell(B)}$, the uniform discretization of $\S$ at over balls with diameter $2^{-\ell(B)}$.  However, we will view $\Tbar{h}{k}(\cdot \mid B)$ as a distribution over a set of finite points in $\S$, where 
\begin{align*}
    \Tbar{h}{k}(x \mid B) = \Tbar{h}{k}(A \mid B) \quad \text{ if } x = \tilde{x}(A).
\end{align*}

\noindent \textbf{Step One}:  Let $\tilde{T}_h^k(\cdot \mid B)$ be the true empirical distribution of all samples collected from $B'$ for any $B'$ which is an ancestor of $B$, i.e.
\begin{align}
    \tilde{T}_h^k(\cdot \mid B) = \frac{\sum_{B' \supseteq B} \sum_{k' \leq k} \delta_{X_{h+1}^{k'}} \Ind{B_h^{k'} = B'}}{\sum_{B' \supseteq B} n_h^k(B')}. \label{eq:Wass_step1}
\end{align}


Let $A_{h+1}^{k'}$ denote the region in $\dyad{\ell(B_h^{k'})}$ containing the point $X_{h+1}^{k'}$. Recall $\Tbar{h}{k}(\cdot \mid B)$ is the distribution defined according to:
\begin{align*}
    \Tbar{h}{k}(\cdot \mid B) = \frac{\sum_{B' \supseteq B} \sum_{k' \leq k} \Ind{B_h^{k'} = B'}\sum_{A \in \dyad{\ell(B)} : A \subseteq A_{h+1}^{k'}}2^{-d_\S(\ell(B') - \ell(B))}\delta_{\tilde{x}(A)}}{\sum_{B' \supseteq B} n_h^k(B')}.
\end{align*}

We can verify that $\sum_{A \in \dyad{\lev{B}} : A \subseteq A_{h+1}^{k'}}2^{-d_\S(\ell(B') - \ell(B))} = 1$ as the number of regions in $\dyad{\lev{B}}$ which contain any region in $\dyad{\lev{B'}}$ is exactly $2^{d_\S(\ell(B') - \ell(B))}$. Furthermore $X_{h+1}^{k'}$ and $\tilde{x}(A)$ are both contained in $A_{h+1}^{k'}$ so that
$\D_\S(X_{h+1}^{k'}, \tilde{x}(A)) \leq \D_S(A_{h+1}^{k'}) \leq \D(B_h^{k'})$, where we use the definition of $\dyad{\ell(B_h^{k'})}$ for the last inequality.
Using these observations, it follows that
\begin{align*}
    d_W(\Tbar{h}{k}(\cdot \mid B), \tilde{T}_h^k(\cdot \mid B)) & \leq \frac{\sum_{B' \supseteq B} \sum_{k' \leq k} \Ind{B_h^{k'} = B'}}{\sum_{B' \supseteq B} n_h^k(B')} \sum_{A \in \dyad{\lev{B}} : A \subseteq A_{h+1}^{k'}}2^{-d_\S(\ell(B') - \ell(B))}\D_\S(X_{h+1}^{k'}, \tilde{x}(A)) \\
    &\leq \frac{\sum_{B' \supseteq B} \sum_{k' \leq k} \Ind{B_h^{k'} = B'}}{\sum_{B' \supseteq B} n_h^k(B')}
    \sum_{A \in \dyad{\lev{B}} : A \subseteq A_{h+1}^{k'}} 2^{-d_\S(\ell(B') - \ell(B))} \D_\S(A_{h+1}^{k'}) \\
    &\leq \frac{\sum_{B' \supseteq B} \sum_{k' \leq k} \Ind{B_h^{k'} = B'} \D(B_h^{k'})}{\sum_{B' \supseteq B} n_h^k(B')} \\
    &\leq \frac{\sum_{B' \supseteq B} \D(B') n_h^k(B') }{\sum_{B' \supseteq B} n_h^k(B')}
\end{align*}

\noindent \textbf{Step Two}:
Next we bound the difference between $\tilde{T}_h^k(\cdot \mid B)$ and $\tilde{T}_h(\cdot \mid x_0, a_0)$ where $\tilde{T}_h(\cdot \mid x_0, a_0)$ is a `ghost empirical distribution' of samples whose marginal distribution is $T_h(\cdot \mid x_0, a_0)$.
By Lipschitzness of the transition kernels, for every $x,a,x_0,a_0$,
\[ d_W(T_h(\cdot \mid x, a), T_h(\cdot \mid x_0, a_0)) \leq L_T \D((x,a), (x_0,a_0)). \]
Using the coupling definition of the Wasserstein metric, there exists a family of distributions $\xi(\cdot, \cdot | x,a,x_0,a_0)$ parameterized by $x,a,x_0,a_0$ such that 
\[\E_{(Z,Y) \sim \xi(\cdot, \cdot | x,a,x_0,a_0)}[\D_\S(Z, Y)]= d_W(T_h(\cdot \mid x, a), T_h(\cdot \mid x_0, a_0)) \leq L_T \D((x,a), (x_0,a_0)),\]
whose marginals are
\[\int_\S \xi(z, y | x,a,x_0,a_0) dy = T_h(z \mid x, a)
~\text{ and }~ \int_\S \xi(z, y | x,a,x_0,a_0) dz = T_h(y \mid x_0, a_0).\]
For $(Z,Y) \sim \xi(\cdot, \cdot | x,a,x_0,a_0)$, let $\xi'(\cdot | z, x,a,x_0,a_0)$ denote the conditional distribution of $Y$ given $Z$, such that 
\begin{align}
\xi(z, y | x,a,x_0,a_0) = T_h(z \mid x, a) \xi'(y | z, x,a,x_0,a_0). \label{eq:Wass_step2_c}
\end{align}


For ease of notation let us denote
$t = \sum_{B' \supseteq B} n_h^k(B')$ and let the indexing $k_1, \ldots, k_t$ be the episodes for which $B$ or its ancestors were selected by the algorithm. For the sequence of samples $\{(X_{h}^{k_i}, A_{h}^{k_i}, X_{h+1}^{k_i})\}_{i\in[t]}$ realized by our algorithm, consider a `ghost sample' $Y_1, \ldots, Y_t$ such that $Y_i \sim \xi'(\cdot | X_{h+1}^{k_i},X_{h}^{k_i}, A_{h}^{k_i},x_0,a_0)$ for $i \in [t]$. Let $\tilde{T}_h(\cdot \mid x_0, a_0)$ denote the empirical distribution of these samples such that 
\[\tilde{T}_h(\cdot \mid x_0, a_0) = \frac{1}{t} \sum_{i=1}^t \delta_{Y_i}
~~\text{ and recall by definition }~~ \tilde{T}_h^k(\cdot \mid B) = \frac{1}{t} \sum_{i=1}^t \delta_{X_{h+1}^{k_i}}.\]

Using the definition of the Wasserstein distance we have that
\begin{align}
    d_W(\tilde{T}_h^k(\cdot \mid B), \tilde{T}_h(\cdot \mid x_0,a_0)) & \leq \frac{1}{t} \sum_{i=1}^t \D_\S( X_{h+1}^{k_i}, Y_i). \label{eq:Wass_step2_a}
\end{align}
We will use Azuma-Hoeffding's to provide a high probability bound on this term by its expectation.  For any $\tau \leq K$ define the quantity 
\begin{align*}
    Z_\tau = \sum_{i=1}^\tau \D_\S(X_{h+1}^{k_i}, Y_i) - \Exp{\D_\S(X_{h+1}^{k_i}, Y_i)}.
\end{align*}
Let $\F_i$ be the filtration containing $\F_{k_i + 1} \cup \{Y_j\}_{j \leq i}$.  It follows that $Z_\tau$ is a martingale with respect to $\F_\tau$.  The process is adapted to the filtration by construction, has finite first moment, and we have that
\begin{align*}
    \Exp{Z_\tau \mid \F_{\tau - 1}} & = Z_{\tau - 1} + \Exp{\D_\S(X_{h+1}^{\tau}, Y_\tau)} - \Exp{\D_\S(X_{h+1}^{\tau}, Y_\tau)} = Z_{\tau - 1}.
\end{align*}
Moreover, we also have the differences are bounded by
\begin{align*}
    \left|Z_\tau - Z_{\tau - 1}\right| & = \left|\D_\S(X_{h+1}^{k_\tau}, Y_\tau) - \Exp{\D_\S(X_{h+1}^{k_\tau}, Y_\tau)}\right|
    \leq 2
\end{align*}
since by assumption $\D_\S(\S) \leq 1$. By Azuma-Hoeffding's inequality, with probability at least $1 - \frac{\delta}{HK^2}$,
\begin{align}
    \frac{1}{\tau} \sum_{i=1}^\tau \D_\S(Y_i, X_{h+1}^{k_i}) & \leq \Exp{\frac{1}{\tau} \sum_{i=1}^\tau \D_\S(Y_i, X_{h+1}^{k_i})} + \sqrt{\frac{8\log(HK^2/\delta)}{\tau}}. \label{eq:Wass_step2_b}
\end{align}
Moreover, by construction of the ghost samples we have that
\begin{align*}
    \frac{1}{\tau} \sum_{i=1}^\tau \Exp{\D_\S(Y_i, X_{h+1}^{k_i})} & = \frac{1}{\tau} \sum_{i=1}^\tau \Exp{d_W(T_h(\cdot \mid X_{h}^{k_i}, A_h^{k_i}), T_h(\cdot \mid x_0, a_0))} \\
    & \leq \frac{1}{\tau} \sum_{i=1}^\tau L_V \D(B_{h+1}^{k_i})
\end{align*}
since $x_0, a_0$ is in the ball $B$ which is contained in the ball $B_{h+1}^{k_i}$. By plugging this into \cref{eq:Wass_step2_b}, taking a union bound over the number of steps $H$, the number of episodes $K$, the number of potential stopping times $K$, and combining it with \cref{eq:Wass_step2_a} and using the construction of $t$, it follows that with probability at least $1 - \delta$, for all $h,k,B$
\begin{align*}
    d_W(\tilde{T}_h^k(\cdot \mid B), \tilde{T}_h(\cdot \mid x_0, a_0)) & \leq L_T\frac{\sum_{B' \supseteq B} n_h^k(B') \D(B')}{\sum_{B' \supseteq B} n_h^k(B')} + \sqrt{\frac{8\log(HK^2/\delta)}{\sum_{B' \supseteq B} n_h^k(B')}}.
\end{align*}
Note that we do not need to union bound over all balls $B \in \P_h^k$ as the estimate of only one ball is changed per (step, episode) pair, i.e. $\That{h}{k}(B)$ and correspondingly $\Tbar{h}{k}(B)$ is changed for only a single ball $B = B_h^k$ per episode. For all balls not selected, it inherits the concentration of the good event from the previous episode because its estimate does not change. Furthermore, even if ball $B$ is ``split'' in episode $k$, all of its children inherit the value of the parent ball, and thus also inherits the good event, so we still only need to consider the update for $B_h^k$ itself.

\noindent \textbf{Step Three}: Next we bound $d_W(\tilde{T}_h(\cdot \mid x_0, a_0), T_h(\cdot \mid x_0, a_0))$. Recall $\F_i$ is the filtration containing $\F_{k_i + 1} \cup \{Y_j\}_{j \leq i}$. Note that the joint distribution over $\{(X_{h}^{k_i}, A_{h}^{k_i}, X_{h+1}^{k_i},Y_i)\}_{i\in[t]}$ is given by
\[G_t(\{(X_{h}^{k_i}, A_{h}^{k_i}, X_{h+1}^{k_i},Y_i)\}_{i\in[t]}) = \prod_{i=1}^t (P(X_{h}^{k_i}, A_h^{k_i} ~|~ \F_{i-1}) T_h(X_{h+1}^{k_i} | X_{h}^{k_i}, A_h^{k_i}) \xi'(Y_i | X_{h+1}^{k_i}, X_{h}^{k_i}, A_h^{k_i}, x_0, a_0),\]
where $P(X_{h}^{k_i}, A_h^{k_i} ~|~ \F_{i-1})$ is given by the dynamics of the MDP along with the policy that the algorithm plays. Then we have 
\begin{align*}
&\int_{\S \times \A \times \S} G_t(\{(X_{h}^{k_i}, X_{h}^{k_i}, X_{h+1}^{k_i},Y_i)\}_{i\in[t]}) dX_{h}^{k_t} dA_{h}^{k_t} dX_{h+1}^{k_t} \\
&= G_{t-1}(\{(X_{h}^{k_i}, X_{h}^{k_i}, X_{h+1}^{k_i},Y_i)\}_{i\in[t-1]}) \\
&\qquad \cdot \int_{\S \times \A} P(X_{h}^{k_t}, A_h^{k_t} ~|~ \F_{k_{t-1}}) \left(\int_{\S} \xi(X_{h+1}^{k_i}, Y_i | X_{h}^{k_i}, A_h^{k_i}, x_0, a_0) dX_{h+1}^{k_t} \right) dX_{h}^{k_t} dA_{h}^{k_t} \\
&= G_{t-1}(\{(X_{h}^{k_i}, X_{h}^{k_i}, X_{h+1}^{k_i},Y_i)\}_{i\in[t-1]}) T_h(Y_i | x_0, a_0) \int_{\S \times \A} P(X_{h}^{k_t}, A_h^{k_t} ~|~ \F_{k_{t-1}}) dX_{h}^{k_t} dA_{h}^{k_t} \\
&= G_{t-1}(\{(X_{h}^{k_i}, X_{h}^{k_i}, X_{h+1}^{k_i},Y_i)\}_{i\in[t-1]}) T_h(Y_i | x_0, a_0).
\end{align*}
By repeating this calculation, we can verify that the marginal distribution of $Y_1 \dots Y_t$ is $\prod_{i \in [t]} T_h(Y_i | x_0, a_0)$. Following Proposition 10 and 20 from \cite{weed2019sharp} for the case when $d_\S > 2$ we have that with probability at least $1 - \delta / HK^2$ for some universal constant $c$,
\begin{align*}
    d_W(\tilde{T}_h(\cdot \mid x_0, a_0), T_h(\cdot \mid x_0, a_0)) & \leq \Exp{d_W(\tilde{T}_h(\cdot \mid x_0, a_0), T_h(\cdot \mid x_0, a_0)} + \sqrt{\frac{\log(HK^2 / \delta)}{\sum_{B' \subseteq B} n_h^k(B')}} \\
    & \leq c \left(\sum_{B' \subseteq B} n_h^k(B')\right)^{-1/d_\S}+ \sqrt{\frac{\log(HK^2 / \delta)}{\sum_{B' \subseteq B} n_h^k(B')}}.
\end{align*}

\noindent \textbf{Step Four}: Using the assumption that $T_h$ is Lipschitz and $(x_0, a_0)$ and $(x,a) \in B$ we have that

$$d_W(T_h(\cdot \mid x,a), T_h(\cdot \mid x_0, a_0)) \leq L_T \D((x,a),(x_0, a_0)) \leq L_T \D(B).$$

Putting all of the pieces together we get that
\begin{align*}
&d_W(\Tbar{h}{k}(\cdot \mid B), T_h(\cdot \mid x,a)) \\
&\qquad \leq d_W(\Tbar{h}{k}(\cdot \mid B), \tilde{T}_h^k(\cdot \mid B)) + d_W(\tilde{T}_h^k(\cdot \mid B), \tilde{T}_h(\cdot \mid x_0, a_0)) \\
&\qquad\qquad + d_W(\tilde{T}_h(\cdot \mid x_0, a_0), T_h(\cdot \mid x_0, a_0)) + d_W(T_h(\cdot \mid x_0, a_0), T_h(\cdot \mid x, a)) \\
&\qquad \leq \frac{\sum_{B' \supseteq B}n_h^k(B')\D(B')}{\sum_{B' \supseteq B} n_h^k(B')} + \frac{\sum_{B' \subseteq B} L_T n_h^k(B') \D(B')}{\sum_{B' \subseteq B} n_h^k(B')} + \sqrt{\frac{8\log(HK^2/\delta)}{\sum_{B' \subseteq B} n_h^k(B')}} \\
&\qquad\qquad + L_T \D(B) + c \left(\sum_{B' \subseteq B} n_h^k(B')\right)^{-1/d_\S}+ \sqrt{\frac{\log(HK^2 / \delta)}{\sum_{B' \subseteq B} n_h^k(B')}} \\
&\qquad = (L_T + 1) \frac{\sum_{B' \supseteq B}n_h^k(B')\D(B')}{\sum_{B' \supseteq B} n_h^k(B')} + 4 \sqrt{\frac{\log(HK^2 / \delta)}{\sum_{B' \subseteq B} n_h^k(B')}} 
 + L_T \D(B) + c \left(\sum_{B' \subseteq B} n_h^k(B')\right)^{-1/d_\S} \\
&\qquad \leq (5 L_T + 4) \D(B) + 4 \sqrt{\frac{\log(HK^2 / \delta)}{\sum_{B' \subseteq B} n_h^k(B')}} 
 + c \left(\sum_{B' \subseteq B} n_h^k(B')\right)^{-1/d_\S} \text{ by Lemma~\ref{lem:sum_ancestors}}
\end{align*}
The result then follows via a union bound over $H$, $K$, the $K$ possible values of the random variable $n_h^k(B)$.  Per usual we do not need to union bound over the number of balls as the estimate of only one ball is updated per iteration.
\end{proof}

The second concentration inequality deals with the case when $d_\S \leq 2$.  
The constant $c$ in Proposition 10 from \cite{weed2019sharp} becomes very large when $d_\S \to 2$, and thus we instead use the fact that $\Tbar{h}{k}(\cdot \mid B)$ has finite support over $2^{d_\S \ell(B)}$ points and consider Wasserstein convergence of empirical distributions sampled from discrete distributions.  $T_h(\cdot \mid x,a)$ is still a (potentially) continuous distribution so we need to change Step 3 of the above argument slightly.

\begin{proof}[Proof of \cref{lemma:transition_confidence}, for $d_\S \leq 2$]
Let $h,k \in [H] \times [K]$ and $B \in \P_h^k$ be fixed with $(x,a) \in B$ arbitrary.  We use a combination of Proposition 10 and 20 from \cite{weed2019sharp} for the case when when the distributions have finite support.  As before, let $(x_0, a_0) = (\tilde{x}(B), \tilde{a}(B))$ be the center of the ball $B$.  We again break the proof down into several stages, where we show concentration between the Wasserstein distance of various measures.  In order to obtain bounds that scale with the support of $\Tbar{h}{k}(\cdot \mid B)$ we consider ``snapped'' versions of the distributions, where we snap the resulting random variable to its point in the discretization of $\dyad{\lev{B}}$. We repeat the same first two steps as Lemma~\ref{lemma:transition_confidence} which are restated again here for completeness.  

\noindent \textbf{Step One}:  Let $\tilde{T}_h^k(\cdot \mid B)$ be the true empirical distribution of all samples collected from $B'$ for any $B'$ which is an ancestor of $B$, formally defined in \cref{eq:Wass_step1}. By the same argument as Step 1 in the proof of Lemma~\ref{lemma:transition_confidence} it follows that
\[
d_W(\Tbar{h}{k}(\cdot \mid B), \tilde{T}_h^k(\cdot \mid B)) \leq \frac{\sum_{B' \supseteq B} n_h^k(B') \D(B')}{\sum_{B' \supseteq B} n_h^k(B')}
\]

\noindent \textbf{Step Two}:
Let $\tilde{T}_h(\cdot \mid x_0, a_0)$ be a `ghost empirical distribution' of samples whose marginal distribution is $T_h(\cdot \mid x_0, a_0)$. It consists of $t = \sum_{B' \supseteq B} n_h^k(B')$ samples drawn from $Y_i \sim \xi'(\cdot|X_{h+1}^{k_i}, X_h^{k_i}, A_h^{k_i},x_0,a_0)$ as constructed in \cref{eq:Wass_step2_c}. By the same argument from Step 2 of the proof of Lemma~\ref{lemma:transition_confidence}, with probability at least $1 - \delta$, for all $h,k,B$
\begin{align*}
    d_W(\tilde{T}_h^k(\cdot \mid B), \tilde{T}_h(\cdot \mid x_0, a_0)) & \leq L_T\frac{\sum_{B' \supseteq B} n_h^k(B') \D(B')}{\sum_{B' \supseteq B} n_h^k(B')} + \sqrt{\frac{8\log(HK^2/\delta)}{\sum_{B' \supseteq B} n_h^k(B')}}.
\end{align*}

\noindent \textbf{Step Three}:
Next we let $\tilde{T}_h^{\lev{B}}(\cdot \mid x_0, a_0)$ to be the snapped empirical distribution of the ghost samples $Y_1 \dots Y_t$ to their nearest point in $\dyad{\lev{B}}$. Denote $\tilde{Y}_i$ as $\tilde{x}(A_i)$ where $A_i \in \dyad{\lev{B}}$ is the region containing the point $Y_i$. It follows that:
\[ \tilde{T}_h^{\lev{B}}(\cdot \mid x_0, a_0) = \frac{1}{t} \sum_{i=1}^t \sum_{A \in \dyad{\lev{B}}} \Ind{Y_i \in A} \delta_{\tilde{x}(A)} = \frac{1}{t}\sum_{i=1}^t \delta_{\tilde{Y}_i}. \]
Since each of the points are moved by at most $\D_\S(A_i)\leq \D(B)$ by construction of $\tilde{T}_h^{\lev{B}}$ and $\dyad{\lev{B}}$, we have that $d_W(\tilde{T}_h^{\lev{B}}(\cdot \mid x_0, a_0), \tilde{T}_h(\cdot \mid x_0, a_0)) \leq \D(B)$.

Define the snapped distribution $T_h^{\lev{B}}(\cdot \mid x_0, a_0)$ according to
\[T_h^{\lev{B}}(x \mid x_0, a_0) = \sum_{A \in \dyad{\lev{B}}} \Ind{x = \tilde{x}(A)} \int_{A} T_h(y \mid x_0, a_0) dy\]
where we note that this distribution has finite support of size $2^{-d_\S \lev{B}}$ over the set $\{\tilde{x}(A)\}_{A \in \dyad{\lev{B}}}$.

By the same argument from Step 3 of the proof of Lemma~\ref{lemma:transition_confidence}, it holds that by construction, the marginal distribution of $Y_1 \dots Y_t$ denoted $f_{Y_1 \dots Y_t}$ is $\prod_{i \in [t]} T_h(Y_i | x_0, a_0)$. Furthermore, conditioned on $(Y_1 \dots Y_t)$, the snapped samples $(\tilde{Y}_1 \dots \tilde{Y}_t)$ are fully determined. Recall that $\tilde{Y}_i$ can only take values in $\{\tilde{x}(A)\}_{A \in \dyad{\lev{B}}}$. If $A_i$ refers to the set in $\dyad{\lev{B}}$ for which $\tilde{Y}_i = \tilde{x}(A_i)$, then 
\begin{align*}
P(\tilde{Y}_1 \dots \tilde{Y}_t) &= P(Y_1 \in A_1, \dots Y_t \in A_t) \\
&= \int_{A_1} \int_{A_2} \cdots \int_{A_t} f_{Y_1 \dots Y_t}(y_1 \dots y_t) dy_t \cdots dy_1 \\
&=\int_{A_1} \int_{A_2} \cdots \int_{A_t} \prod_{i \in [t]} T_h(Y_i | x_0, a_0) dy_t \cdots dy_1 \\
&= \prod_{i \in [t]} \int_{A_i} T_h(Y_i | x_0, a_0) dy_i \\
&= T_h^{\lev{B}}(\tilde{Y}_i | x_0, a_0).
\end{align*}
such that the marginal distribution of $\tilde{Y}_1 \dots \tilde{Y}_t$ is equivalent to that of a set of $t$ i.i.d. samples from $T_h^{\lev{B}}(\cdot | x_0, a_0)$.

By Proposition [13] and [20] from from \cite{weed2019sharp}, for some universal constant $c$, with probability at least $1 - \frac{\delta}{HK^2}$,
\begin{align*}
    &d_W(\tilde{T}_h^{\lev{B}}(\cdot \mid x_0, a_0), T_h^{\lev{B}}(\cdot \mid x_0, a_0)) \\
    \leq& \Exp{d_W(\tilde{T}_h^{\lev{B}}(\cdot \mid x_0, a_0), \tilde{T}_h^{\lev{B}}(\cdot \mid x_0, a_0))} + \sqrt{\frac{\log(HK^2/\delta)}{t}} \\
    \leq& c \sqrt{\frac{2^{d_\S \lev{B}}}{t}} + \sqrt{\frac{\log(HK^2/\delta)}{t}}.
\end{align*}

\noindent \textbf{Step Four}: Next we construct a coupling to show that $d_W(T_h^{\lev{B}}(\cdot \mid x_0, a_0), T_h(\cdot \mid x_0, a_0)) \leq \D(B)$.  For a coupling we define a family of distributions $\Gamma(\cdot,\cdot|x_0,a_0,\ell)$ parameterized by $x_0,a_0,\ell$ such that 
\[\Gamma(x_{snap}, x_{orig} | x_0,a_0,\ell) = T_h(x_{orig} \mid x_0, a_0) \sum_{A \in \S_{\ell}} \Ind{x_{snap} = \tilde{x}(A)} \Ind{ x_{orig} \in A}.\]
First notice that the marginals of these distributions match $T_h^{\ell}$ and $T_h$ respectively since:
\begin{align*}
    \int_\S \Gamma(x_{snap}, x \mid x_0, a_0, \ell) dx = \sum_{A \in \S_{\ell}} \Ind{x_{snap} = \tilde{x}(A)} \int_A T_h(x \mid x_0, a_0) dx = T^{\ell}_h(x_{snap} \mid x_0, a_0)
\end{align*}
and
\begin{align*}
\int_\S \Gamma(x, x_{orig} \mid x_0, a_0, \ell) dx & = \sum_{A \in \S_\ell} \Gamma(\tilde{x}(A), x_{orig} \mid x_0, a_0, \ell)
 = T_h(x_{orig} \mid x_0, a_0).
\end{align*}

Using this coupling $\Gamma$ it follows by definition of Wasserstein distance that
\begin{align*}
    d_W(T_h^{\ell}(\cdot \mid x_0, a_0), T_h(\cdot \mid x_0, a_0)) & \leq \E_{X_{snap}, X_{orig} \sim \Gamma(\cdot | x_0, a_0, \ell(B))}[\D_\S(X_{snap}, X_{orig})] \\
    &\leq \D(B)
\end{align*}
where we used that $X_{snap}$ and $X_{orig}$ have distance bounded by $\D_\S(A)$ for some $A \in \dyad{\lev{B}}$, and by construction of $\dyad{\lev{B}}$, $\D_{\S}(A) = \D(B)$.

\noindent \textbf{Step Five}: Using the assumption that $T_h$ is Lipschitz and $(x_0, a_0)$ and $(x,a) \in B$ we have that

$$d_W(T_h(\cdot \mid x,a), T_h(\cdot \mid x_0, a_0)) \leq L_T \D((x,a),(x_0, a_0)) \leq L_T \D(B).$$

Putting all of the pieces together and a union bound over $H$, $K$, the possible values of the random variables $t$, and the number of balls $B \in \P_h^K$ we get that:
\begin{align*}
&d_W(\Tbar{h}{k}(\cdot \mid B), T_h(\cdot \mid x_0, a_0)) \\
\leq& d_W(\Tbar{h}{k}(\cdot \mid B), \tilde{T}_h^k(\cdot \mid B)) + d_W(\tilde{T}_h^k(\cdot \mid B), \tilde{T}_h(\cdot \mid x_0, a_0)) + d_W(\tilde{T}_h(\cdot \mid x_0, a_0), \tilde{T}_h^{\lev{B}}(\cdot \mid x_0, a_0)) \\
&+ d_W(\tilde{T}_h^{\lev{B}}(\cdot \mid x_0, a_0), T_h(\cdot \mid x_0, a_0)) + d_W(T_h(\cdot \mid x_0, a_0), T_h(\cdot \mid x,a)) \\
\leq& \frac{\sum_{B' \supseteq B} n_h^k(B') \D(B')}{\sum_{B' \supseteq B} n_h^k(B')} + \frac{\sum_{B' \supseteq B} L_T n_h^k(B') \D(B')}{\sum_{B' \supseteq B} n_h^k(B')} + \sqrt{\frac{8 \log(HK^2/\delta)}{\sum_{B' \supseteq B} n_h^k(B')}} \\
&+ c \sqrt{\frac{2^{d_\S \lev{B}}}{\sum_{B' \supseteq B} n_h^k(B')}} + \sqrt{\frac{\log(HK^2)}{\sum_{B' \supseteq B} n_h^k(B')}}  + 2\D(B) + L_T \D(B) \\
=& (1+L_T) \frac{\sum_{B' \supseteq B} n_h^k(B') \D(B')}{\sum_{B' \supseteq B} n_h^k(B')} + 4 \sqrt{\frac{\log(HK^2/\delta)}{\sum_{B' \supseteq B} n_h^k(B')}}  + (2+L_T) \D(B) + c \sqrt{\frac{2^{d_\S \lev{B}}}{\sum_{B' \supseteq B} n_h^k(B')}} \\
\leq& (5 L_T + 4) \D(B) + 4 \sqrt{\frac{\log(HK^2/\delta)}{\sum_{B' \supseteq B} n_h^k(B')}} 
 + c \sqrt{\frac{2^{d_\S \lev{B}}}{\sum_{B' \supseteq B} n_h^k(B')}} \text{ by Lemma~\ref{lem:sum_ancestors}}.
\end{align*}
which is $\frac{1}{L_V} \tbonus{h}{k}(B)$ as needed.
\end{proof}

\begin{proof}[Proof of~\cref{lem:countbound}]
The proof of both the inequalities follows from a direct application of~\cref{lem:LPbound}, after first rewriting the summation over balls in $\Pkh$ as a summation over active balls in $\Pkh[K]$.

\noindent \textbf{First Inequality}: First, observe that we can write
\begin{align*}
    \sum_{k=1}^K \frac{2^{\beta \ell(B_h^k)}}{\left(n_h^k(B_h^k)\right)^\alpha} & = \sum_{\ell \in \NN_0} \sum_{B: \ell(B) = \ell} \sum_{k=1}^K \Ind{B_h^k = B} \frac{2^{\beta \ell(B)}}{\left(n_h^k(B)\right)^\alpha}
\end{align*}
Now, in order to use~\cref{lem:LPbound}, we first need to rewrite the summation as over `active balls' in the terminal partition $\Pkh[K]$ (i.e., balls which are yet to be split). Expanding the above, we get
\begin{align*}
    \sum_{k=1}^K \frac{2^{\beta \ell(B_h^k)}}{\left(n_h^k(B_h^k)\right)^\alpha} 
    & = \sum_{\ell \in \NN_0} \sum_{B \in \P_h^K : \ell(B) = \ell} \sum_{B' \supseteq B} 2^{d(\ell(B') - \ell(B))} \sum_{k=1}^K \Ind{B_h^k = B'} \frac{2^{\beta \ell(B')}}{\left(n_h^k(B')\right)^\alpha} \\
    & \leq \sum_{\ell \in \NN_0} \sum_{B \in \P_h^K : \ell(B) = \ell} \sum_{B' \supseteq B} 2^{d(\ell(B') - \ell(B))} 2^{\beta \ell(B')}\sum_{j=1}^{\nplus{\ell(B')}} \frac{1}{j^\alpha} \\
    & \leq \frac{\phi^{1-\alpha}}{1 - \alpha} \sum_{\ell \in \NN_0} \sum_{B \in \P_h^K : \ell(B) = \ell} \sum_{B' \supseteq B}2^{d(\ell(B') - \ell(B))} 2^{\beta \ell(B')} 2^{\gamma \ell(B')(1 - \alpha)}.
\end{align*}
where we used the fact that once a ball has been partitioned it is no longer chosen by the algorithm and an integral approximation to the sum of $1 / j^{\alpha}$ for $\alpha \leq 1$.  Next, we plug in the levels to get
\begin{align*}
\sum_{k=1}^K \frac{2^{\beta \ell(B_h^k)}}{\left(n_h^k(B_h^k)\right)^\alpha} &\leq \frac{\phi^{1-\alpha}}{1 - \alpha} \sum_{\ell \in \NN_0} \sum_{B \in \P_h^K : \ell(B) = \ell} \sum_{j=0}^{\ell}2^{d(j - \ell)} 2^{\beta j} 2^{\gamma j(1 - \alpha)} \\
& = \frac{\phi^{1-\alpha}}{1 - \alpha} \sum_{\ell \in \NN_0} \sum_{B \in \P_h^K : \ell(B) = \ell} \frac{1}{2^{d \ell}}\sum_{j=0}^{\ell}2^{j(d+\beta + \gam(1 - \alpha))} \\
& \leq \frac{\phi^{1 - \alpha}}{(2^{d+\beta+\gam(1 - \alpha)} - 1)(1 - \alpha)} \sum_{\ell \in \NN_0} \sum_{B \in \P_h^K : \ell(B) = \ell} \frac{1}{2^{d \ell}} 2^{(\ell + 1)(d+\beta + \gam(1 - \alpha))} \\
& \leq \frac{2\phi^{1 - \alpha}}{(1 - \alpha)} \sum_{\ell \in \NN_0} \sum_{B \in \P_h^K : \ell(B) = \ell} 2^{\ell(\beta + \gam(1 - \alpha))}.
\end{align*}
We set $a_\ell = 2^{\ell(\beta + \gamma(1 - \alpha))}$.  Clearly we have that $a_\ell$ are increasing with respect to $\ell$.  Moreover,
\begin{align*}
    \frac{2a_{\ell+1}}{a_\ell} & = \frac{2 \cdot 2^{(\ell + 1)(\beta + \gam(1 - \alpha))}}{2^{(\ell )(\beta + \gam(1 - \alpha))}} = 2^{1 + \beta + \gam(1 - \alpha)}.
\end{align*}
Setting this quantity to be less than $\nplus{\ell}/\nplus{\ell - 1} = 2^{\gam}$ we require that
\begin{align*}
    2^{1 + \beta + \gam(1 - \alpha)} & \leq 2^{\gam} \Leftrightarrow 
    1 + \beta - \alpha \gam \leq 0
\end{align*}
Now we can apply~\cref{lem:LPbound} to get that
\begin{align*}
    \sum_{k=1}^{K} \frac{2^{\beta \ell(B_h^k)}}{\left(n_h^k(B_h^k)\right)^{\alpha}} & \leq \frac{2\phi^{1 - \alpha}}{(1 - \alpha)} 2^{d \ell^\star} a_{\ell^\star} \\
    & = \frac{2^{2(d+\beta+\gam(1 - \alpha))}\phi^{1 - \alpha}}{(1 - \alpha)} \left(\frac{K}{\phi}\right)^{\frac{d+\beta + \gam(1 - \alpha)}{d+\gam}} \\
    & = O\left( \phi^{\frac{-(d\alpha+\beta)}{d+\gam}} K^{\frac{d+(1-\alpha)\gam+\beta}{d+\gam}} \right).
\end{align*}

\noindent \textbf{Second Inequality}: As in the previous part, we can rewrite as the summation we have
\begin{align*}
    \sum_{k=1}^K \frac{\ell(B_h^k)^\beta}{\left(n_h^k(B_h^k)\right)^\alpha} & = \sum_{\ell \in \NN_0} \sum_{B: \ell(B) = \ell} \sum_{k=1}^K \Ind{B_h^k = B} \frac{\ell(B)^\beta}{\left(n_h^k(B)\right)^\alpha}.\\
    & \leq \sum_{\ell \in \NN_0} \sum_{B \in \P_h^K : \ell(B) = \ell} \sum_{B' \supseteq B} 2^{d(\ell(B') - \ell(B))} \ell(B')^\beta \sum_{j=1}^{\nplus{\ell(B')}} \frac{1}{j^\alpha} \\
    & \leq \sum_{\ell \in \NN_0} \sum_{B \in \P_h^K : \ell(B) = \ell} \sum_{B' \supseteq B} 2^{d(\ell(B') - \ell(B))} \ell(B')^\beta \frac{\nplus{\ell(B')}^{1-\alpha}}{1 - \alpha} \\
    & = \frac{\phi^{1-\alpha}}{1-\alpha} \sum_{\ell \in \NN_0} \sum_{B \in \P_h^K : \ell(B) = \ell} \sum_{B' \supseteq B} 2^{d(\ell(B') - \ell(B))} \ell(B')^\beta 2^{\ell(B')\gam(1 - \alpha)}
\end{align*}
As before, we plug in the levels to get
\begin{align*}
\sum_{k=1}^K \frac{\ell(B_h^k)^\beta}{\left(n_h^k(B_h^k)\right)^\alpha} & =    \frac{\phi^{1-\alpha}}{1-\alpha} \sum_{\ell \in \NN_0} \sum_{B \in \P_h^K : \ell(B) = \ell} \sum_{j=0}^\ell 2^{d(j - \ell)} j^\beta 2^{j\gam(1 - \alpha)}\\
& \leq \frac{\phi^{1-\alpha}}{1 - \alpha} \sum_{\ell \in \NN_0} \sum_{B \in \P_h^K : \ell(B) = \ell} \frac{\ell^{\beta}}{2^{d \ell}} \sum_{j=0}^{\ell} 2^{j(d+\gam(1 - \alpha))} \\
& \leq \frac{2\phi^{1-\alpha}}{(1 - \alpha)} \sum_{\ell \in \NN_0} \sum_{B \in \P_h^K : \ell(B) = \ell} \ell^{\beta} 2^{\ell\gam(1 - \alpha)}.
\end{align*}
We take the term $a_\ell = \ell^{\beta} 2^{\ell\gam(1 - \alpha)}$.  Clearly we have that $a_\ell$ are increasing with respect to $\ell$.  Moreover, 
\begin{align*}
    \frac{2a_{\ell + 1}}{a_\ell} & = \left(1 + \frac{1}{\ell}\right)^\beta 2^{1+\gam (1 - \alpha)}.
\end{align*}
We require that this term is less than $\nplus{\ell + 1}/\nplus{\ell} = 2^{\gam}$ for all $\ell\geq \ell^{\star}$ (see note after~\cref{lem:LPbound}). This yields the following sufficient condition (after dividing through by $2^{\gam}$)
\begin{align*}
    \left(1 + \frac{1}{\ell}\right)^\beta 2^{1- \alpha\gamma} \leq 1\frall \ell\geq\ell^{\star}
\end{align*}
or equivalently, $\alpha\gamma - \beta\log_2(1+1/\ell^{\star}) \geq 1$. Finally note that $\log_2(1+x)\leq x/\ln 2 \leq x$ for all $x\in[0,1]$. Thus, we get that a sufficient condition is that $\alpha\gamma - \beta/\ell^{\star} \geq 1$. Assuming this holds, we get by~\cref{lem:LPbound} that
\begin{align*}
    \sum_{k=1}^{K} \frac{ \ell(B_h^k)^{\beta}}{\left(n_h^k(B_h^k)\right)^{\alpha}} 
    & \leq \left(\frac{2\phi^{1 - \alpha}}{(1 - \alpha)}\right) 2^{d \ell^\star} a_{\ell^\star} \\
    & = \left(\frac{2\phi^{1 - \alpha}}{1 - \alpha}\right) 4^{d+\gam(1 - \alpha)} \left(\frac{K}{\phi}\right)^{\frac{d+\gam(1 - \alpha)}{d+\gam}} \left(\frac{\log_2(K / \phi)}{d+\gam} + 2\right)^\beta \\
    & = O\left( \phi^{\frac{-d\alpha}{d+\gam}} K^{\frac{d+(1-\alpha)\gam}{d+\gam}} \left(\log_2K\right)^{\beta}\right).
\end{align*}
\end{proof}

\begin{proof}[Proof of \cref{lemma:recursive}]
	We use the notation $B_{h'}^k$ to denote the active ball containing the point $(X_{h'}^k, A_{h'}^k)$.  Under this we have by the update rule on $\Vtilde{h'}{k}$ that for any $h' \geq h$
	\begin{align*}
	& \Expk{\Vtilde{h'}{k}(\S(\P_{h'}^{k},X_{h'}^k)) \mid X_h^k}{k-1} \leq \Expk{\Qhat{h'}{k}(B_{h'}^k) \mid X_h^k}{k-1} \\
	& = \Expk{\rbar{h'}{k}(B_{h'}^k) + \E_{x \sim \Tbar{h'}{k}(\cdot \mid B)}[\Vhat{h'+1}{k-1}(x)] + \rbonus{h'}{k}(B_{h'}^k) + \tbonus{h'}{k}(B_{h'}^k) \mid X_h^k}{k-1} \text{ (via update rule for \Qhat{h}{k})} \\
	& = \Expk{\rbar{h'}{k}(B_{h'}^k) - r_{h'}(X_{h'}^k, A_{h'}^k) + \rbonus{h'}{k}(B_{h'}^k) \mid X_h^k}{k-1} + \Expk{r_{h'}(X_{h'}^k, A_{h'}^k) \mid X_h^k}{k-1} \\
	&\quad + \Expk{\E_{x \sim \Tbar{h'}{k}(\cdot \mid B_{h'}^k)}[\Vhat{h'+1}{k-1}(x)] - \E_{x \sim T_h(\cdot \mid X_{h'}^k, A_{h'}^k)}[\Vhat{h'+1}{k-1}(x)] + \tbonus{h'}{k}(B_{h'}^k) \mid X_h^k}{k-1} \\
	&\quad + \Expk{\E_{x \sim T_{h'}(\cdot \mid X_{h'}^k, A_{h'}^k)}[\Vhat{h'+1}{k-1}(x)] \mid X_h^k}{k-1} \\
	& = \Expk{\rbar{h'}{k}(B_{h'}^k) - r_h(X_{h'}^k, A_{h'}^k) + \rbonus{h'}{k}(B_{h'}^k) \mid X_h^k}{k-1} + \Expk{r_h(X_{h'}^k, A_{h'}^k) \mid X_h^k}{k-1} \\
	&\quad + \Expk{\E_{x \sim \Tbar{h'}{k}(\cdot \mid B_{h'}^k)}[\Vhat{h'+1}{k-1}(x)] - \E_{x \sim T_h(\cdot \mid X_{h'}^k, A_{h'}^k)}[\Vhat{h'+1}{k-1}(x)] + \tbonus{h'}{k}(B_{h'}^k) \mid X_h^k}{k-1} \\
	&\quad + \Expk{\Vhat{h'+1}{k-1}(X_{h'+1}^k) \mid X_h^k}{k-1} \hspace{4cm}  \text{( as } X_{h'+1}^k \sim T_h(\cdot \mid X_{h'}^k, A_{h'}^k))\\
	& \leq \Expk{\rbar{h'}{k}(B_{h'}^k) - r_h(X_{h'}^k, A_{h'}^k) + \rbonus{h'}{k}(B_{h'}^k) \mid X_h^k}{k-1} + \Expk{r_h(X_{h'}^k, A_{h'}^k) \mid X_h^k}{k-1} \\
	&\quad + \Expk{\E_{x \sim \Tbar{h'}{k}(\cdot \mid B_{h'}^k)}[\Vhat{h'+1}{k-1}(x)] - \E_{x \sim T_{h'}(\cdot \mid X_{h'}^k, a_{h'}^k)}[\Vhat{h'+1}{k-1}(x)] + \tbonus{h'}{k}(B_{h'}^k) \mid X_h^k}{k-1} \\
	&\quad + \Expk{\Vtilde{h'+1}{k-1}(\S(\P_{h'+1}^{k-1},X_{h'+1}^{k}))+ L_V \D(B_{h'+1}^k) \mid X_h^k}{k-1} \hspace{3.2cm}  \text{ (via update rule for $\Vhat{h}{k}$)}
	\end{align*}
	
	Taking this inequality and summing from $h' = h$ up until $H$ we find that $\sum_{h'=h}^H \Expk{r_{h'}(X_{h'}^k, A_{h'}^k) \mid X_h^k}{k-1} = V_{h}^{\pi^k}(X_h^k)$. Moreover, by changing the index in the sum and using the fact that $V_{H+1} = 0$, it follows that
	\begin{align*}
	\sum_{h'=h}^H \Expk{\Vtilde{h'+1}{k-1}(\S(\P_{h'+1}^{k-1},X_{h'+1}^k)) \mid X_h^k}{k-1} = \sum_{h'=h}^H \Expk{\Vtilde{h'}{k-1}(\S(\P_{h'}^{k-1},X_{h'}^k)) \mid X_h^k}{k-1} - \Vtilde{h}{k-1}(\S(\P_h^{k-1},X_h^k)).
	\end{align*}
	Rearranging the inequalities gives the desired results.
\end{proof}

\begin{proof}[Proof of \cref{lemma:regret_decomposition}]
	We condition on the good events from~\cref{lemma:reward_confidence,lemma:transition_confidence} by taking $\delta = 1 / HK$.  Using the definition of regret and the law of total expectation we have that:
	\begin{align*}
	\Exp{R(K)} & = \Exp{\sum_{k=1}^K V_1^\star(X_1^k) - V_1^{\pi^k}(X_1^k)} \\
	& \lesssim \Exp{\sum_{k=1}^K \Vhat{1}{k-1}(X_1^k) - V_1^{\pi^k}(X_1^k)} \hspace{2.1cm}\text{(via the optimism principle, ~\cref{lemma:optimism})}\\
	& \lesssim \Exp{\sum_{k=1}^K \Vtilde{1}{k-1}(\S(\P_1^{k-1}, X_1^k) - V_1^{\pi^k}(X_1^k) + L_V \D_\S(\S(\P_1^{k-1}, X_1^k))} \;\;\text{(update rule for } \Vtilde{h}{k})\\
	& \lesssim \Exp{\sum_{k=1}^K \Vtilde{1}{k-1}(\S(\P_1^{k-1}, X_1^k) - V_1^{\pi^k}(X_1^k) + L_V \D(B_1^k)} \\
	\end{align*}
	Next, define $\Expk{\cdot}{k-1}\triangleq\Exp{\cdot \mid\F_{k-1}}$. 
	Now using~\cref{lemma:recursive}, and the tower rule for conditional expectations, we get
	\begin{align*}
	\Exp{R(K)} & \lesssim \Exp{\sum_{k=1}^K \Expk{\Vtilde{1}{k-1}(\S(\P_1^{k-1}, X_1^k) - V_1^{\pi^k}(X_1^k)}{k-1} + L_V \D(B_1^k)}\\
	& \lesssim \sum_{k=1}^K \sum_{h=1}^H \Exp{\Expk{\Vtilde{h}{k-1}(\S(\P_h^{k-1},X_h^{k})) - \Vtilde{h}{k}(\S(\P_h^{k},X_h^k))}{k-1}}\\
	& + \sum_{h=1}^H \sum_{k=1}^K \Exp{\Expk{\rbar{h}{k}(B_h^k) - r_h(X_h^k, A_h^k) + \rbonus{h}{k}(B_h^k)}{k-1}} \\
	& + \sum_{h=1}^H \sum_{k=1}^K \Exp{\Expk{\E_{x \sim \Tbar{h}{k}(\cdot \mid B_h^k)}[\Vhat{h+1}{k-1}(x)] - \E_{x \sim T_h(\cdot \mid X_h^k, A_h^k)}[\Vhat{h+1}{k-1}(x)] + \tbonus{h}{k}(B_h^k)}{k-1}} \\
	& + \sum_{k=1}^K \sum_{h=1}^H L_V \Exp{\D(B_h^k)}\\
	& = \sum_{k=1}^K \sum_{h=1}^H \Exp{\Vtilde{h}{k-1}(\S(\P_h^{k-1},X_h^{k})) - \Vtilde{h}{k}(\S(\P_h^{k},X_h^k))} \\
	& + \sum_{h=1}^H \sum_{k=1}^K \Exp{2\rbonus{h}{k}(B_h^k)} + \sum_{h=1}^H \sum_{k=1}^K \Exp{2\tbonus{h}{k}(B_h^k)} + \sum_{k=1}^K \sum_{h=1}^H L_V \Exp{\D(B_h^k)}
	\end{align*}
	where in the last line we used the definition of the good event.
\end{proof}

\end{document}